\documentclass{article}

\PassOptionsToPackage{numbers,compress}{natbib}
\usepackage[preprint]{neurips_2026}

\usepackage[utf8]{inputenc}
\usepackage[T1]{fontenc}

\usepackage{pifont}
\usepackage[table]{xcolor}
\usepackage{hyperref}
\hypersetup{
    colorlinks=true,
    linkcolor=blue,
    citecolor=blue,
    urlcolor=blue
}
\newcommand{\bluelink}[2]{\hyperref[#1]{#2}}
\usepackage{url}
\usepackage{amsmath}
\usepackage{amssymb}
\usepackage{amsfonts}
\usepackage{amsthm}
\usepackage{nicefrac}

\usepackage{graphicx}
\usepackage{array}
\usepackage{float}
\usepackage{booktabs}
\usepackage{makecell}
\usepackage{tabularx}
\usepackage{caption}
\usepackage{subcaption}
\usepackage{colortbl}

\usepackage{algorithm}
\usepackage{algpseudocode}

\usepackage{tikz}
\usetikzlibrary{positioning,arrows.meta}

\usepackage{microtype}
\usepackage{pifont}
\usepackage{soul}
\usepackage{blindtext}
\usepackage{etoc}

\sethlcolor{blue!15}

\definecolor{myblue}{RGB}{0,70,190}
\definecolor{lightrow}{RGB}{236,240,248}
\definecolor{bestrow}{RGB}{210,225,248}
\definecolor{tableblue}{RGB}{220,235,250}
\definecolor{headerblue}{RGB}{180,210,240}
\definecolor{best}{RGB}{220,255,220}
\definecolor{alggray}{RGB}{235,235,235}

\newcommand{\cmark}{\textcolor{green!60!black}{\ding{51}}}
\newcommand{\xmark}{\textcolor{red!75!black}{\ding{55}}}
\newcommand{\pmark}{\textcolor{orange!85!black}{\textbf{P}}}
\newcommand{\contrib}[1]{\textcolor{myblue}{\textbf{#1}}}

\newtheorem{theorem}{Theorem}

\newtheorem{definition}{Definition}

\newtheorem{assumption}{Assumption}

\title{Controlla: Learning Controllability via Graph-Constrained Latent Geometry}

\author{%
  Jamuna S.~Murthy \\
  Ramaiah Institute of Technology \\
  \And
  Amin Karimi Monsefi \\
  The Ohio State University \\
  \And
  Rajiv Ramnath \\
  The Ohio State University \\
}

\begin{document}

\maketitle

\begin{abstract}
Controllable multimodal generation is commonly formulated as an inference-time conditioning problem using prompts, guidance, or auxiliary modules. While effective, such approaches do not explicitly structure how semantic attributes evolve, which can lead to identity drift and inconsistent cross-modal behavior. We propose \textbf{Controlla}, a modular factorized-control framework that treats controllability as a property of structured latent geometry. Controlla learns identity and attribute factors from multimodal inputs and aligns them with graph priors using graph-constrained optimal transport, encouraging attributes to follow graph-consistent trajectories while preserving reference identity. To evaluate this setting, we construct \textbf{AffectHuman-43K}, a leakage-aware multimodal benchmark for reference-grounded affective control, and introduce geometry-aware metrics for trajectory consistency and latent disentanglement. Experiments show consistent improvements in controllability, identity preservation, and cross-modal alignment, with additional analyses on graph sensitivity, extensibility, and robustness. \url{https://jamunasmurthy.github.io/controlla-affecthuman/}
\end{abstract}
\vspace{-6pt}
\section{Introduction}
\label{sec:intro}
\vspace{-6pt}
Recent advances in generative modeling, particularly diffusion models \cite{rombach2022high,saharia2022imagen,podell2023sdxlimprovinglatentdiffusion,monsefi2025taxadiffusion,khurana2026taxaadapter} and diffusion transformers \cite{drobyshev2024emoportraits,zhang2023dreamtalk,peebles2023scalablediffusionmodelstransformers,chen2023pixartalphafasttrainingdiffusion,monsefi2025fs,meyarian2026direct}, together with multimodal representation learning frameworks such as CLIP \cite{radford2021learning} and ImageBind \cite{girdhar2023imagebind}, have enabled high-fidelity and cross-modal image generation. 
Despite this progress, \emph{controllability} remains fundamentally under-constrained. Current models can generate realistic outputs, but they often lack an explicit mechanism for representing how semantic attributes should change while preserving identity and multimodal consistency~\cite{patashnik2021styleclip,kim2022diffusionclip}.

A central challenge is \emph{identity-preserving multimodal emotional editing}: how to manipulate attributes such as emotion using multimodal inputs (text and audio) while preserving identity and maintaining cross-modal consistency. For instance, given an image of a person with a neutral expression, a text prompt describing ``excitement,'' and an audio signal conveying laughter, existing methods often distort identity or produce inconsistent expressions across modalities. These failures arise because semantic changes are imposed externally, rather than emerging from structured representations.

Current approaches address controllability through heuristic conditioning—via prompts, guidance, or auxiliary modules 
\cite{mokady2023nulltext,brooks2023instructpix2pix,zhang2023adding,
li2024controlnet++,bhat2024loosecontrol,zhou2025bidedpo,navard2024knobgen}. 
Recent advances extend this paradigm through instruction-based editing and in-context generation 
\cite{sheynin2024emu,zhang2025enabling,mao2025ace++,zhao2024ultraedit,zhang2023magicbrush,labs2025flux}, 
as well as trajectory-level control in diffusion and rectified flow models 
\cite{patel2025flowchef,wang2024taming,esser2024scalingrectifiedflowtransformers}. 
Large multimodal foundation models such as GPT-4o \cite{openai2024gpt4ocard}, Gemini \cite{comanici2025gemini25pushingfrontier}, and Kosmos-2 \cite{peng2023kosmos2groundingmultimodallarge} learn strong joint representations but do not explicitly enforce structured geometry during traversal, often leading to unstable intermediate control.
Similarly, practical pipelines combining personalization (e.g., DreamBooth \cite{ruiz2023dreambooth}) with control modules (e.g., ControlNet \cite{zhang2023adding}) remain loosely coupled, resulting in identity drift and inconsistent attribute transitions under continuous edits.
While effective, these methods treat controllability as an inference-time problem over implicit latent spaces, without enforcing structure on the representations, leading to semantic drift, weak identity preservation, and inconsistent multimodal alignment.

We instead view controllability as latent geometry: the structure of directions in a model's latent space where semantic attributes can change while identity remains stable. Semantic edits are modeled as structured latent transformations $z \rightarrow z'$, where attributes evolve predictably and identity is locally preserved. These edits correspond to graph-consistent traversal over a structured semantic manifold \cite{yu2025probability}, enabling interpretable multimodal transformations.

\begin{figure}[t]
  \includegraphics[width=\textwidth]{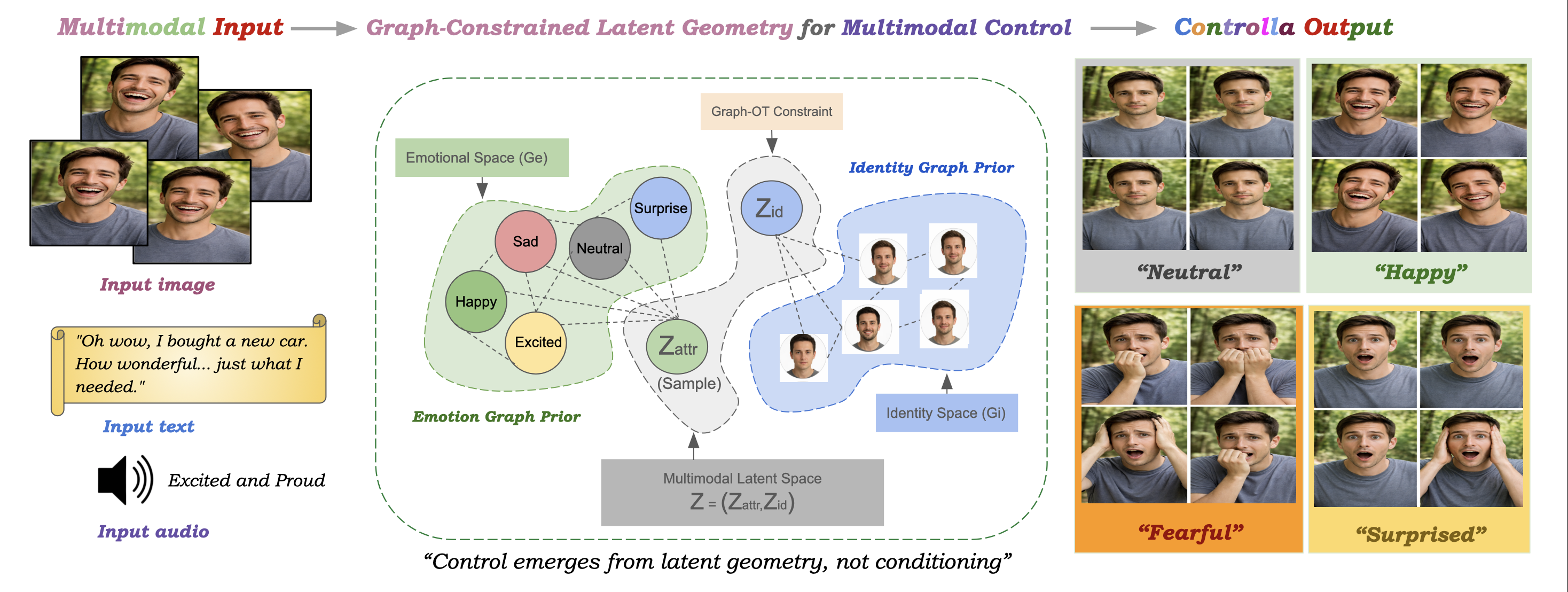}
 \caption{\textbf{Controllability as structured latent geometry.} Controlla maps image, reference image, text, and audio into factorized identity and attribute spaces. Attribute factors follow graph-consistent semantic traversal, while the reference-grounded identity factor is held stable.}
  \label{fig:teaser}
\end{figure}

To realize this idea, we introduce \textbf{Controlla}, a framework that learns a graph-structured latent space for multimodal control, as illustrated in Figure~\ref{fig:teaser}.
Multimodal inputs are mapped to shared representations and factorized into identity and attribute components. Graph-constrained optimal transport \cite{xu2019gromov,maretic2019gw,peyre2019computational} aligns these factors with identity and affective graph priors, encouraging reference-identity stability while attributes follow graph-consistent semantic trajectories.
The affective graph captures both the normalized 8-class emotion taxonomy and fine-grained valence--arousal variation, so traversal is framed as structured semantic control rather than unconstrained interpolation.

We further introduce \textbf{AffectHuman-43K}, a leakage-aware identity-disjoint benchmark with no cross-split reference-identity overlap, together with geometry-aware metrics for trajectory smoothness, graph consistency, and identity--attribute disentanglement; Table~\ref{tab:dataset_comparison} compares it with existing datasets. Unlike prior optimal-transport and graph-based methods that mainly align representations or distributions, Controlla uses graph-constrained optimal transport to organize the latent directions along which generation is controlled.

\begin{table*}[t]
\centering
\scriptsize
\renewcommand{\arraystretch}{0.95}

\begin{minipage}[t]{0.64\textwidth}
\centering
\caption{\textbf{AffectHuman-43K benchmark overview.} Comparison with existing datasets highlighting multimodal coverage, identity-aware splits, and variation.}
\label{tab:dataset_comparison}
\resizebox{\linewidth}{!}{%
\begin{tabular}{lccccccc}
\toprule
\textbf{Dataset} & \textbf{Img} & \textbf{Aud} & \textbf{Txt} & \textbf{ID} & \textbf{Cross} & \textbf{Scale} & \textbf{Var.} \\
\midrule
\rowcolor{gray!10}
AffectNet \cite{Mollahosseini_2019} & \cmark & \xmark & \xmark & \xmark & \xmark & 450K & Low \\
RAF-DB \cite{li2017reliable} & \cmark & \xmark & \xmark & \xmark & \xmark & 30K & Low \\
\rowcolor{gray!10}
CelebA-HQ \cite{karras2018progressivegrowinggansimproved,liu2015deep} & \cmark & \xmark & \xmark & \xmark & \xmark & 30K & Low \\
IEMOCAP \cite{busso2008iemocap} & \xmark & \cmark & \cmark & \xmark & \pmark & 12K & Med \\
\rowcolor{gray!10}
EmoBank \cite{buechel2017emobank} & \xmark & \xmark & \cmark & \xmark & \xmark & 10K & Med \\
RAVDESS \cite{livingstone2018ryerson} & \xmark & \cmark & \xmark & \xmark & \xmark & 7K & Low \\
\midrule
\rowcolor{gray!5}
MagicBrush \cite{li2023magicbrush} & \cmark & \xmark & \cmark & \xmark & \xmark & 10K & Med \\
UltraEdit \cite{zhao2024ultraedit} & \cmark & \xmark & \cmark & \xmark & \xmark & 4M & High \\
\rowcolor{gray!5}
Emu Edit \cite{sheynin2024emu} & \cmark & \xmark & \cmark & \xmark & \xmark & 10M & High \\
KontextBench \cite{labs2025flux} & \cmark & \xmark & \cmark & \xmark & \xmark & 1K & Med \\
\midrule
\rowcolor{blue!15}
\textbf{AffectHuman-43K} & \textbf{\cmark} & \textbf{\cmark} & \textbf{\cmark} & \textbf{\cmark} & \textbf{\cmark} & \textbf{43K} & \textbf{High} \\
\bottomrule
\end{tabular}%
}
\end{minipage}
\hfill
\begin{minipage}[t]{0.30\textwidth}
\centering
\caption{\textbf{Leakage-safe split statistics} of the AffectHuman-43K benchmark.}
\label{tab:split_stats}
\resizebox{\linewidth}{!}{%
\begin{tabular}{lc}
\toprule
\rowcolor{gray!20}
\textbf{Statistic} & \textbf{Value} \\
\midrule
\rowcolor{gray!10}
Aligned candidate samples & 43,514 \\
Final usable samples & 42,469 \\
\rowcolor{gray!10}
Train / Val / Test & 29,728 / 6,371 / 6,370 \\
Reference-identity groups & 26,241 \\
\rowcolor{gray!10}
Repeated ref.-identity groups & 10,204 \\
Singleton ref.-identity groups & 16,037 \\
\rowcolor{gray!10}
Multi-emotion repeated groups & 7,290 \\
Cross-split ref.-ID leakage & \textcolor{green!60!black}{\textbf{0}} \\
\rowcolor{gray!10}
Modality coverage & \textcolor{green!60!black}{\textbf{100\%}} \\
\bottomrule
\end{tabular}%
}
\end{minipage}
\end{table*}

\noindent\textbf{Contributions.}
\noindent\textbf{(1)} We propose a formulation of controllable multimodal generation in which \contrib{controllability is encouraged by graph-consistent latent transformations}, rather than inference-time conditioning alone.

\noindent\textbf{(2)} We introduce \contrib{Controlla, a modular factorized-control architecture} with a multimodal factorization encoder, graph-OT alignment module, and factorized generator-conditioning adapters for identity-preserving semantic traversal.

\noindent\textbf{(3)} We construct \contrib{AffectHuman-43K}, a leakage-aware multimodal benchmark, introduce geometry-aware metrics for structured controllability, and demonstrate consistent gains in controllability, identity preservation, and cross-modal alignment.

\section{Related Works}
\label{sec:background}

\paragraph{Controllable Generation and Editing.}
Diffusion models have become the dominant paradigm for controllable image generation, enabling high-fidelity synthesis under flexible conditioning, including large-scale latent diffusion systems such as SDXL \cite{podell2023sdxlimprovinglatentdiffusion} and diffusion transformers \cite{peebles2023scalablediffusionmodelstransformers,chen2023pixartalphafasttrainingdiffusion}. Early approaches rely on external guidance such as CLIP-based optimization and inversion techniques \cite{patashnik2021styleclip,mokady2023nulltext,kim2022diffusionclip}, while later methods manipulate cross-attention or instructions, including Prompt-to-Prompt \cite{hertz2022prompt}, InstructPix2Pix \cite{brooks2023instructpix2pix}, and plug-and-play editing \cite{tumanyan2023plug}. Structural conditioning methods such as ControlNet \cite{zhang2023adding} and its extensions \cite{li2024controlnet++,bhat2024loosecontrol} introduce explicit spatial controls, while recent work addresses conflicts between textual and conditional inputs via preference optimization \cite{zhou2025bidedpo}. Personalization approaches such as DreamBooth \cite{ruiz2023dreambooth}, DiffusionRig \cite{ding2023diffusionrig}, and related methods improve identity preservation, and emotion-aware models \cite{yang2024emogen,yang2025emoedit,drobyshev2024emoportraits,dang2025emoticraftertexttoemotionalimagegenerationbased} extend controllability to affective domains. However, these methods treat control as externally imposed over implicit latent spaces, often leading to semantic drift and inconsistent identity preservation.

Instruction-based editing further expands controllability through natural language interfaces. Models such as Emu Edit \cite{sheynin2024emu}, ICEdit \cite{zhang2025enabling}, and ACE++ \cite{mao2025ace++} enable multi-task and instruction-following editing, supported by large-scale datasets such as UltraEdit \cite{zhao2024ultraedit} and MagicBrush \cite{zhang2023magicbrush}. Unified generative systems such as FLUX.1 Kontext \cite{greenberg2025demystifying,labs2025flux} combine generation and editing within a single architecture, achieving strong multi-turn consistency. Complementary work explores control at the level of sampling and trajectory dynamics, including rectified flow and transformer-based generative models \cite{patel2025flowchef,wang2024taming,esser2024scalingrectifiedflowtransformers}. While RF-Solver and RF-Edit \cite{wang2024taming} improve inversion and editing in flow-based models, these approaches operate through conditioning, instruction following, or trajectory-level manipulation, without explicitly structuring the latent representation space or enforcing geometry-aware transformations between identity and semantic attributes.

\paragraph{Multimodal Learning, Structure, and Optimal Transport.}
Learning shared representations across modalities has been widely studied in vision-language models such as CLIP \cite{radford2021learning} and extended to unified multimodal embeddings in ImageBind \cite{girdhar2023imagebind}, as well as large-scale multimodal systems such as Flamingo \cite{alayrac2022flamingo}, PaLI \cite{chen2022pali}, and Kosmos-2 \cite{peng2023kosmos2groundingmultimodallarge}. Recent multimodal generative frameworks \cite{rojas2025diffuse} enable joint modeling across modalities, but do not explicitly constrain controllable transformations. Existing datasets for affective and multimodal understanding \cite{Mollahosseini_2019,li2017reliable,karras2017progressive,liu2015deep,busso2008iemocap,buechel2017emobank,livingstone2018ryerson} are typically unimodal or lack identity-disjoint evaluation, limiting their suitability for controllable generation.

Graph-based inductive biases and optimal transport provide principled tools for modeling structured relationships. Prior work explores relational reasoning \cite{kipf2018neural}, scene graph generation \cite{johnson2018image}, and graph-based multimodal learning \cite{xie2024graphbasedunsuperviseddisentangledrepresentation}, while optimal transport methods \cite{villani2008optimal,peyre2019computational,cuturi2013sinkhorn,xu2019gromov,maretic2019gw,vayer2020fused} enable alignment of distributions and relational structures. However, these approaches are typically used for local alignment or representation matching, rather than explicitly shaping the global geometry of latent spaces for controllable generation. 

\noindent \textit{\textbf{
While prior approaches primarily treat controllability as an inference-time conditioning or alignment problem, we instead model controllability as a property of global latent geometry, where semantic transformations follow graph-consistent trajectories while preserving identity.
}}

\section{Method}
\label{sec:method}

We present \textbf{Controlla}, a framework that models controllability as structured latent traversal under identity constraints.
Given multimodal inputs, Controlla learns a shared representation and factorizes it into attribute and reference-grounded identity components. Control is performed by moving the attribute factor along graph-consistent semantic trajectories while keeping the identity factor stable.
To induce this behavior, Controlla aligns the attribute and identity spaces with affective and reference-identity graph priors using graph-constrained optimal transport. This imposes relational structure on the latent space, reducing semantic drift and unintended identity changes without relying only on inference-time conditioning. As illustrated in Figure~\ref{fig:method}, Controlla combines a multimodal factorization encoder, graph-OT latent alignment, and factorized conditioning adapters around a frozen pretrained diffusion backbone.
\noindent\textbf{Multimodal Latent Representation.}
Given multimodal inputs $(x^{img}, x^{ref}, x^{txt}, x^{aud})$, where $x^{ref}$ specifies the identity to preserve, we learn a shared representation:
\begin{equation}
z = f_\theta(x^{img}, x^{ref}, x^{txt}, x^{aud}) \in \mathbb{R}^{d},
\end{equation}
where $f_\theta$ consists of modality-specific encoders, lightweight projection adapters, and a fusion module. We learn attribute and identity factors:
\begin{equation}
z_{\text{attr}} = h_{\text{attr}}(z), 
\qquad
z_{\text{id}} = h_{\text{id}}(z).
\end{equation}
Here, $z_{\text{attr}}$ captures semantic variations such as emotion, while $z_{\text{id}}$ encodes the reference-grounded identity from $x^{ref}$. This factorization enables attribute control while preserving identity. The factors are learned through trainable projection heads, with disentanglement encouraged by orthogonality and structural constraints.

\begin{figure*}[t]
\centering
\includegraphics[width=\textwidth]{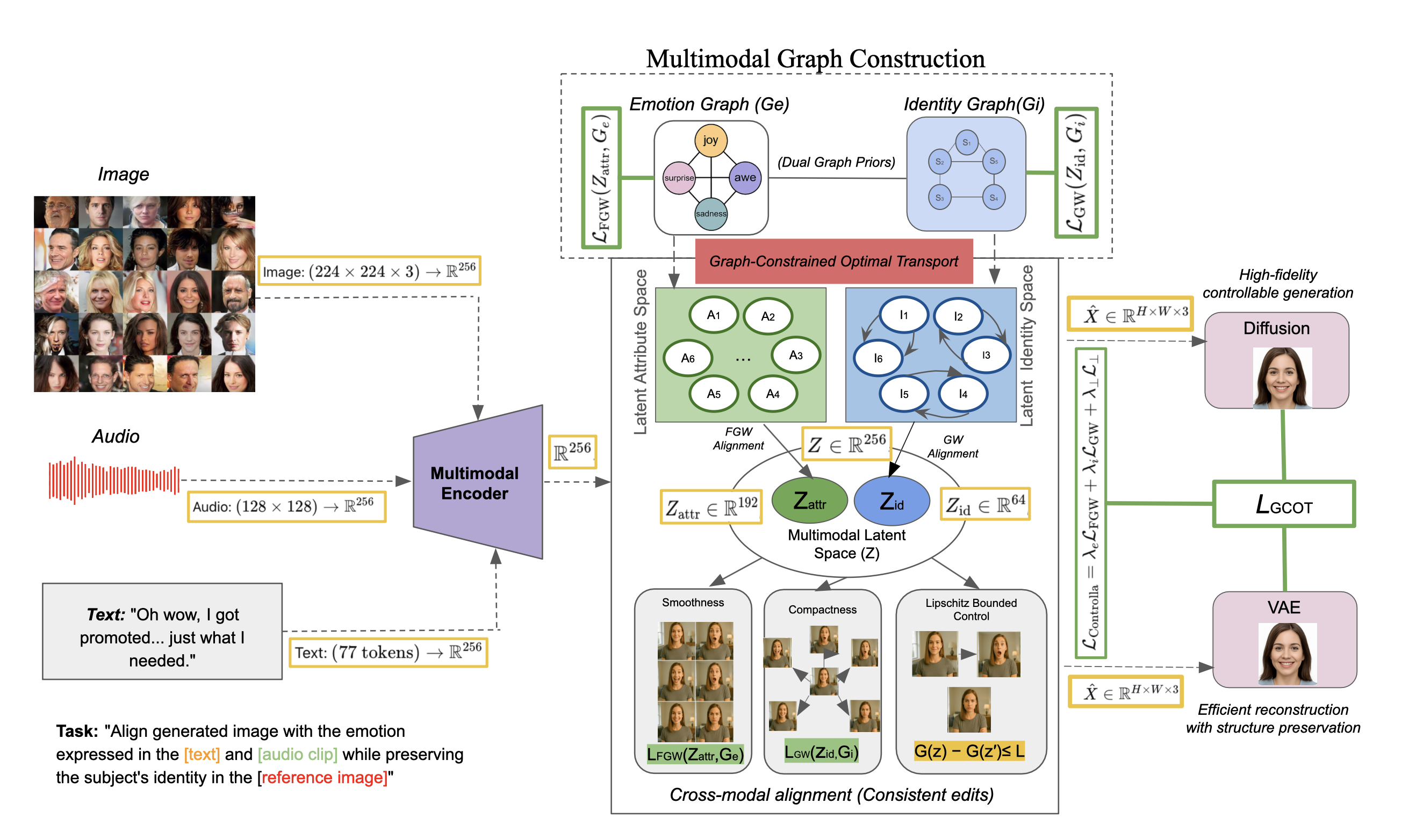}
\caption{
\textbf{Controlla framework.}
Multimodal inputs (image, reference image, text, audio) are encoded into a shared representation and factorized into attribute and identity components. Emotion and identity graph priors are aligned with these factors using graph-constrained optimal transport, enabling graph-consistent attribute traversal while preserving reference identity.
}
\label{fig:method}
\end{figure*}

\noindent\textbf{Graph Priors as Latent Geometry.}
We impose global structure using two graphs constructed offline: an emotion graph $\mathcal{G}_e$ capturing semantic relationships between attributes, and an identity graph $\mathcal{G}_i$ encoding relationships between reference identities. 

The emotion graph is constructed over a unified valence--arousal space, where edges reflect semantic proximity between affective states. The identity graph is derived from reference-identity embeddings, encouraging compact intra-identity structure while preserving inter-identity separation. These graphs define relational distance matrices $D_e$ and $D_i$ that serve as geometric priors. Edges are constructed using a k-nearest neighbor graph ($k=10$) with cosine similarity in the embedding space, with edge weights proportional to similarity.

These priors define a latent geometry in which semantic transformations correspond to graph-consistent traversal, enabling structured and predictable control.

\noindent\textbf{Graph-Constrained Optimal Transport.}
We align latent representations with graph structure using optimal transport. Intuitively, distances in latent space should reflect distances on the corresponding graph priors.

For attribute alignment, we use \emph{Fused Gromov--Wasserstein (FGW)}, which enforces that \emph{relative distances} between latent attribute embeddings match those in the emotion graph:
\begin{equation}
\mathcal{L}_{\text{FGW}} = \min_{\pi} 
\sum_{i,j,k,l} \left| D_e(i,j) - D_z(k,l) \right|^2 \pi_{ik}\pi_{jl} 
+ \alpha \sum_{i,k} C_{ik}\pi_{ik}.
\end{equation}
Here, $\pi$ is a transport plan, and $D_z$ denotes pairwise distances in latent attribute space. The first term aligns relational structure, while the second preserves feature similarity. As a result, semantic variations in $z_{\text{attr}}$ follow the geometry of $\mathcal{G}_e$, making attribute changes structured and consistent.

For identity preservation, we use \emph{Gromov--Wasserstein (GW)}, which aligns relational structure without feature matching:
\begin{equation}
\mathcal{L}_{\text{GW}}(z_{\text{id}}, \mathcal{G}_i).
\end{equation}
This ensures that reference-identity representations maintain consistent relational structure, preventing identity drift under transformations.

Together, FGW and GW induce a latent space where semantic changes correspond to structured movement along graph-consistent directions rather than arbitrary interpolation.

\noindent\textbf{Disentanglement and Geometry Regularization.}
To separate identity and attributes, we impose:
\begin{equation}
\mathcal{L}_{\perp} = \| z_{\text{attr}}^\top z_{\text{id}} \|_2,
\end{equation}
which encourages orthogonality between learned factors.

We further enforce smooth transformations using a Lipschitz regularization:
\begin{equation}
\mathcal{L}_{L} = \mathbb{E}_{z,z'} \left[ \max\left(0, \| g_\phi(z) - g_\phi(z') \| - L \| z - z' \| \right) \right],
\end{equation}
ensuring that small latent changes lead to controlled output changes. Intuitively, FGW aligns the attribute space with the semantic emotion graph, while GW stabilizes the reference-identity space, thereby preserving identity under attribute traversal.

\noindent\textbf{Learning and Controllable Generation.}
The overall objective combines structural alignment, identity preservation, disentanglement, and smoothness:
\begin{equation}
\mathcal{L}_{\text{Controlla}} =
\lambda_e \mathcal{L}_{\text{FGW}} +
\lambda_i \mathcal{L}_{\text{GW}} +
\lambda_{\perp} \mathcal{L}_{\perp} +
\lambda_L \mathcal{L}_{L}.
\end{equation}

A generative model $g_\phi$, implemented as a latent diffusion model \cite{rombach2022high}, produces outputs:
\begin{equation}
\hat{x} = g_\phi(z_{\text{attr}}, z_{\text{id}}).
\end{equation}

Controllability is achieved by traversing $z_{\text{attr}}$ along graph-consistent paths on $\mathcal{G}_e$ while keeping the reference-grounded identity factor $z_{\text{id}}$ fixed. This yields identity-preserving and semantically coherent transformations, where control is induced by structured latent geometry rather than external conditioning alone.
\textit{\textbf{See Appendix Sec. \hyperref[sec1]{~A} for theoretical analysis and proofs.}}
\vspace{-6pt}
\section{Experiments}
\label{sec:experiments}
\vspace{-6pt}
We evaluate whether \textbf{structured latent geometry} supports controllability as a learned property of representation space. 
In contrast to prior works that treat controllability primarily as inference-time conditioning, our experiments test three hypotheses: 
(i) controllability appears as smooth and predictable latent traversal, 
(ii) identity is preserved under large semantic transformations, and 
(iii) multimodal signals remain consistently aligned in the learned latent space.
\subsection{AffectHuman-43K: Benchmark for Multimodal Controllability}

To evaluate whether structured latent geometry enables identity-preserving multimodal control, we use \textbf{AffectHuman-43K}, a leakage-aware benchmark designed as an \textbf{emotion-aligned controlled evaluation protocol}, not a naturally co-recorded multimodal identity dataset. AffectHuman-43K is therefore not intended to model person-level correspondence across image, text, and audio; instead, identity is visually specified by $x^{ref}$, while text and audio provide affective control signals. This design deliberately decouples identity preservation from affective multimodal control. The task requires generating outputs that (i) preserve identity, (ii) follow target emotional transformations, and (iii) remain consistent with multimodal affective inputs. AffectHuman-43K contains \textbf{42,469 samples} across images, reference images, text, and audio, organized into \textbf{26,241 reference-identity groups}. Each sample is represented as $(x^{img}, x^{ref}, x^{text}, x^{aud}, y^{id}, y^{emo})$, enabling evaluation under heterogeneous multimodal signals. Cross-modal consistency is enforced in a shared semantic emotion space rather than through identity correspondence. Table~\ref{tab:dataset_comparison} compares AffectHuman-43K with existing datasets; ID denotes identity-aware evaluation, Cross denotes cross-modal coverage, and Var. denotes affective/control variation. The dataset spans 8 emotion classes with fine-grained intra-class variation and repeated reference-identity groups for consistency evaluation. We construct \emph{reference-identity-disjoint splits} with \textbf{0\% cross-split reference-identity leakage}, no duplicate overlap, and no cross-dataset near-duplicate overlap (Table~\ref{tab:split_stats}), ensuring unseen-reference generalization. Unlike existing datasets, AffectHuman-43K enables controlled evaluation of identity preservation and multimodal controllability without conflating identity alignment across modalities. \textit{\textbf{See Appendix Sec. \hyperref[sec2]{~B} and Sec. \hyperref[sec3]{~C} for detailed construction, alignment pipeline, and statistical analysis.}}
\vspace{-6pt}
\subsection{Evaluation Metrics}
\vspace{-6pt}
A central premise of Controlla is that controllability should be evaluated not only by output quality, but also by whether the learned latent space supports structured attribute change, identity invariance, and multimodal coherence. We therefore evaluate three complementary aspects: endpoint controllability, identity preservation, and latent-geometry structure.
\vspace{-6pt}
\paragraph{Primary Metrics.}
Emotion Accuracy (Acc) measures whether generated outputs match the target affective class, while Trajectory Smoothness (TS) measures the consistency of latent transitions and is normalized to $[0,1]$, where higher is smoother. Identity preservation is measured using face-region identity similarity (ID-Sim) and verification AUC under large affective transformations. Human preference (H) provides a perceptual measure of whether outputs better preserve identity, follow the target emotion, and remain visually plausible.
\vspace{-6pt}
\paragraph{Geometry-aware and Auxiliary Metrics.}
Latent Disentanglement Score (LDS) measures separation between learned identity and attribute factors, while Geodesic Consistency (GC) measures deviation from graph-consistent latent trajectories. These are primary diagnostic metrics for the latent-geometry claim. CLIP similarity measures image--text alignment, and ImageBind similarity (IB) measures image--audio alignment; both are treated only as auxiliary diagnostics because related encoders are used during benchmark filtering. Thus, our main controllability claims rely on Acc, ID-Sim/AUC, TS, LDS, GC, and H, which are independent of CLIP/ImageBind filtering.
\textit{\textbf{See Appendix Secs.~\hyperref[sec4]{D}, \hyperref[sec5]{E}, and \hyperref[sec6]{F} for detailed definitions, human evaluation, and significance analyses.}}
\vspace{-6pt}
\subsection{Implementation Details.}
We use a shared latent representation of dimension $d=256$ and learn attribute and identity factors through trainable projection heads. Image features are extracted using CLIP ViT-B/32 \cite{radford2021learning}; text and audio embeddings use pretrained Transformer and wav2vec2 encoders, fused by learned adapters. Controlla fine-tunes the projection adapters and factorization heads on top of a pretrained Stable Diffusion v1.5 latent-diffusion backbone initialized from the public checkpoint, while keeping the base generator frozen unless otherwise specified. All methods use standardized face crops and the same reference image and target emotion; for baselines without native audio input, audio is mapped to the normalized emotion label and inserted as text/instruction control. Baselines are evaluated under their strongest supported adaptation protocol, with prompts standardized across FLUX~\cite{labs2025flux}, ICEdit~\cite{zhang2025enabling}, and ControlNet variants~\cite{zhang2023adding,li2024controlnet++}. Identity preservation is measured on detected generated face regions. Training uses AdamW for 100k steps on 8$\times$L40S GPUs; graph-constrained OT adds $\sim$8\% cached training overhead with negligible inference cost. \textit{\textbf{See Appendix Sec.\hyperref[sec7]{~G} for additional implementation details, architecture configurations, and training hyperparameters.}}
\vspace{-6pt}
\subsection{Baselines}

We compare Controlla with the baselines used in our evaluation tables: StyleCLIP~\cite{patashnik2021styleclip}, ControlNet~\cite{zhang2023adding}, ICEdit~\cite{zhang2025enabling}, FLUX.1 Kontext~\cite{labs2025flux}, DreamBooth~\cite{ruiz2023dreambooth}, and the hybrid DBooth + ControlNet++ pipeline~\cite{ruiz2023dreambooth,li2024controlnet++}. 
For architecture-level comparisons, we additionally report SDXL \cite{podell2023sdxlimprovinglatentdiffusion} and SDXL + ControlNet variants. 
For cross-modal retrieval evaluation, we compare against CLIP~\cite{radford2021learning} and ImageBind~\cite{girdhar2023imagebind}.
These baselines cover prompt-based editing, spatial conditioning, instruction-guided editing, personalization, hybrid identity-preserving control, and multimodal alignment.


\begin{table*}[t]
\centering
\scriptsize
\setlength{\tabcolsep}{1.8pt}
\renewcommand{\arraystretch}{0.88}

\begin{minipage}[t]{0.48\linewidth}
\centering
\caption{
\textbf{Main results on AffectHuman-43K.}}
\begin{tabular}{@{}l@{\hspace{3pt}}cccc@{\hspace{6pt}}cccc@{}}
\toprule
& \multicolumn{4}{c}{\cellcolor{gray!20}\textbf{AffectHuman-43K(Val)}}
& \multicolumn{4}{c}{\cellcolor{gray!20}\textbf{AffectHuman-43K(Test)}} \\
\textbf{Method}
& Acc$\uparrow$ & TS$\uparrow$ & IB$\uparrow$ & H$\uparrow$
& Acc$\uparrow$ & TS$\uparrow$ & IB$\uparrow$ & H$\uparrow$ \\
\midrule

StyleCLIP~\cite{patashnik2021styleclip}
& 61.8 & 0.51 & 0.214 & 2.71
& 60.3 & 0.49 & 0.208 & 2.64 \\

ControlNet~\cite{zhang2023adding}
& 67.4 & 0.58 & 0.229 & 2.91
& 65.8 & 0.56 & 0.222 & 2.82 \\

\midrule
ICEdit~\cite{zhang2025enabling}
& 73.1 & 0.66 & 0.262 & 3.40
& 71.8 & 0.64 & 0.256 & 3.32 \\

FLUX.1 Kontext~\cite{labs2025flux}
& 74.3 & 0.67 & 0.266 & 3.52
& 73.0 & 0.65 & 0.259 & 3.43 \\

\midrule
DBooth + CNet++~\cite{ruiz2023dreambooth,li2024controlnet++}
& 75.1 & 0.67 & 0.271 & 3.58
& 73.6 & 0.65 & 0.264 & 3.47 \\

\midrule
\rowcolor{bestrow}
\textbf{Controlla}
& \textbf{77.6} & \textbf{0.73} & \textbf{0.286} & \textbf{3.91}
& \textbf{76.4} & \textbf{0.71} & \textbf{0.279} & \textbf{3.82} \\
\bottomrule
\end{tabular}
\label{tab:main_results}
\end{minipage}
\hfill
\begin{minipage}[t]{0.40\linewidth}
\centering
\caption{
\textbf{Cross-dataset generalization.}}
\begin{tabular}{@{}lcc@{}}
\toprule
\textbf{Method} & \textbf{AffectNet}$\uparrow$ & \textbf{CelebA-HQ}$\uparrow$ \\
\midrule
ControlNet++~\cite{li2024controlnet++} & 60.3 & 0.857 \\
ICEdit~\cite{zhang2025enabling} & 67.5 & 0.858 \\
FLUX~\cite{labs2025flux} & 69.1 & 0.862 \\
DreamBooth+CNet++~\cite{ruiz2023dreambooth,li2024controlnet++} & 69.2 & 0.865 \\
\midrule
\rowcolor{bestrow}
\textbf{Controlla} & \textbf{72.8} & \textbf{0.868} \\
\bottomrule
\end{tabular}
\label{tab:generalization}
\end{minipage}

\end{table*}

\begin{table*}[h]
\centering
\scriptsize
\setlength{\tabcolsep}{1.5pt}
\renewcommand{\arraystretch}{0.82}

\begin{minipage}[t]{0.30\linewidth}
\centering
\caption{
\textbf{Architecture-level comparison.}}
\begin{tabular}{@{}lcccc@{}}
\toprule
\textbf{Method} & Acc $\uparrow$ & TS $\uparrow$ & LDS $\uparrow$ & ID $\uparrow$\\
\midrule
SDXL \cite{podell2023sdxlimprovinglatentdiffusion} & 75.2 & 0.66 & 0.64 & 0.861 \\
+ControlNet++~\cite{li2024controlnet++} & 76.1 & 0.68 & 0.66 & 0.864 \\
\midrule
\rowcolor{bestrow}
Controlla & \textbf{77.6} & \textbf{0.73} & \textbf{0.71} & \textbf{0.868} \\
\bottomrule
\end{tabular}
\label{tab:archi}
\end{minipage}
\hfill
\begin{minipage}[t]{0.35\linewidth}
\centering
\caption{
\textbf{Geometry-aware evaluation.}}
\begin{tabular}{@{}lcccccc@{}}
\toprule
& \multicolumn{3}{c}{Val} & \multicolumn{3}{c}{Test} \\
\textbf{Method}
& ID$\uparrow$ & LDS$\uparrow$ & GC$\downarrow$
& ID$\uparrow$ & LDS$\uparrow$ & GC$\downarrow$ \\
\midrule
DreamBooth~\cite{ruiz2023dreambooth}
& .882 & .510 & .214 & .871 & .490 & .227 \\
ICEdit~\cite{zhang2025enabling}
& .842 & .640 & .166 & .829 & .620 & .178 \\
FLUX~\cite{labs2025flux}
& .848 & .660 & .159 & .836 & .640 & .171 \\
\midrule
\rowcolor{bestrow}
Controlla
& \textbf{.874} & \textbf{.710} & \textbf{.118}
& \textbf{.862} & \textbf{.690} & \textbf{.126} \\
\bottomrule
\end{tabular}
\label{tab:geometry}
\end{minipage}
\hfill
\begin{minipage}[t]{0.25\linewidth}
\centering
\caption{
\textbf{Cross-modal retrieval evaluation.}}
\begin{tabular}{@{}lcccc@{}}
\toprule
\textbf{Method} & \multicolumn{2}{c}{\textbf{I2T}} & \multicolumn{2}{c}{\textbf{I2A}} \\
\cmidrule(lr){2-3}\cmidrule(l){4-5}
& \textbf{R@1} & \textbf{R@5} & \textbf{R@1} & \textbf{R@5} \\
\midrule
CLIP~\cite{radford2021learning} & 71.4 & 86.2 & 69.1 & 84.0 \\
ImageBind~\cite{girdhar2023imagebind} & 73.0 & 87.1 & 70.5 & 85.2 \\
\midrule
\rowcolor{bestrow}
Controlla & \textbf{76.2} & \textbf{89.8} & \textbf{74.1} & \textbf{87.5} \\
\bottomrule
\end{tabular}
\label{tab:cross}
\end{minipage}
\vspace{-6pt}
\end{table*}
\subsection{Main Results}
\vspace{-6pt}
\noindent\textbf{Controllability and generalization.}
Table~\ref{tab:main_results} reports the main AffectHuman-43K results. Controlla improves emotion accuracy, trajectory smoothness, and human preference. We treat ImageBind alignment as an auxiliary coherence diagnostic, while the main claims are supported by CLIP/ImageBind-independent metrics such as Acc, TS, ID, LDS, GC, and human evaluation. The gains remain consistent under cross-dataset evaluation on AffectNet and CelebA-HQ, suggesting improved affective and identity-related generalization compared with ControlNet++, ICEdit, FLUX, and DreamBooth+CNet++ (Table~\ref{tab:generalization}).

\noindent\textbf{Backbone, geometry, and retrieval analysis.}
Controlla also improves over stronger backbone and conditioning variants, including SDXL and SDXL+ControlNet++, indicating that the proposed control mechanism is complementary to model scaling and external conditioning (Table~\ref{tab:archi}). 
Geometry-aware evaluation further shows stronger latent disentanglement and graph consistency while maintaining competitive identity preservation (Table~\ref{tab:geometry}). 
Finally, cross-modal retrieval results show improved alignment over CLIP and ImageBind baselines, suggesting better consistency across visual, textual, and audio-conditioned representations (Table~\ref{tab:cross}).
\textit{\textbf{All reported improvements are statistically significant ($p<0.01$); see Appendix Sec.~\hyperref[sec6]{F}. Additional baseline analyses are provided in Appendix Sec.~\hyperref[sec8]{H}.}}

\subsection{Ablation Studies}

We perform targeted ablations to identify which components drive Controlla's gains in structured controllability. The studies isolate loss terms, graph structure, modality grounding, and traversal strategy, testing whether the improvements arise from meaningful affective geometry rather than arbitrary smoothing or additional conditioning alone. \textit{\textbf{Additional ablations and sensitivity analyses are provided in Appendix Sec.~\hyperref[sec9]{I}.}}

\begin{figure}[h]
\centering
\includegraphics[width=\linewidth]{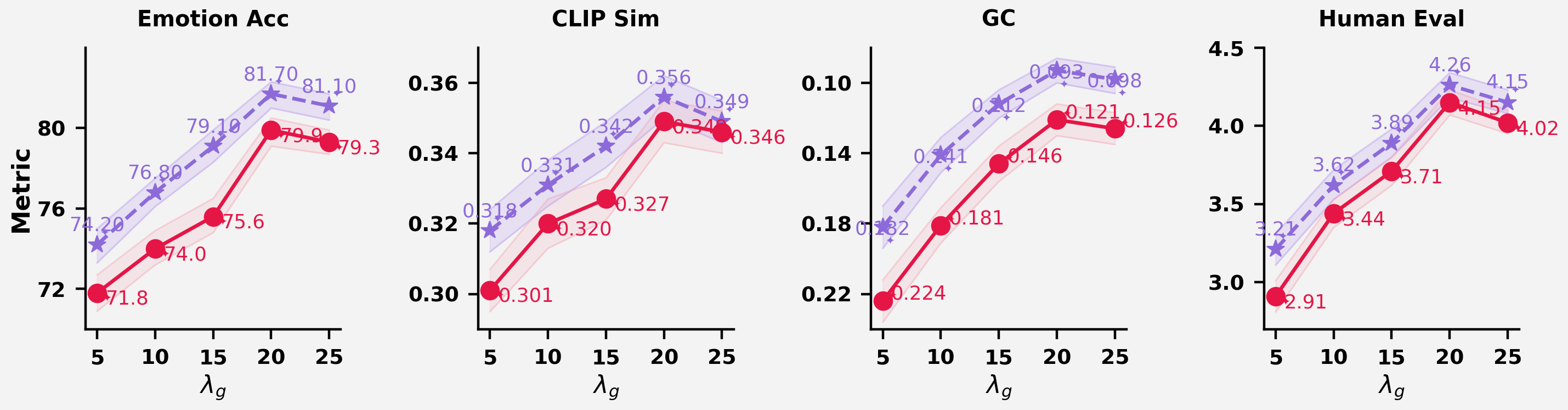}
\caption{\textbf{Effect of graph strength.} Increasing $\lambda_g$ improves controllability and human preference while reducing GC, with CLIP reported only as an auxiliary alignment diagnostic.}
\label{fig:ablation_graphs}
\end{figure}

\paragraph{Effect of Components and Graph Structure.}
Table~\ref{tab:ablation_components_graph} shows that removing FGW/GW weakens attribute controllability and identity consistency, while removing $\mathcal{L}_{\perp}$ or $\mathcal{L}_{\mathrm{Lip}}$ reduces factor separation and transition smoothness. Random and fully connected graphs degrade TS, LDS, and GC, whereas the fine-grained graph improves over the compact 8-node graph, indicating that gains come from meaningful affective geometry rather than arbitrary connectivity.

\vspace{-6pt}
\paragraph{Effect of Multimodal Grounding.}
Table~\ref{tab:ablation_modalities} shows that image, text, and audio provide complementary signals.
Single-modality and partial-modality variants generally reduce endpoint accuracy, trajectory smoothness, disentanglement, and human preference, while the full image+text+audio setting achieves the strongest overall performance.

\vspace{-6pt}
\paragraph{Effect of Latent Traversal.}
Figure~\ref{fig:ablation_graphs} shows that increasing graph regularization improves controllability and human preference while reducing GC.
Table~\ref{tab:geodesic_vs_euclidean} shows that Controlla's graph-consistent path achieves stronger smoothness, identity preservation, LDS, and GC than linear, spline, random, or fully connected paths.
Figure~\ref{fig:geodesic_vs_linear} qualitatively illustrates that graph-consistent traversal produces smoother and more identity-preserving transitions than linear interpolation. Together, these results confirm that structured latent traversal is central to Controlla's controllability mechanism.
\begin{table*}[t]
\centering
\fontsize{5.7}{6.3}\selectfont
\setlength{\tabcolsep}{0.9pt}
\renewcommand{\arraystretch}{0.70}

\begin{minipage}[t]{0.55\textwidth}
\centering
\caption{\textbf{Component and graph ablation.}}
\label{tab:ablation_components_graph}
\vspace{-5pt}
\resizebox{\linewidth}{!}{%
\begin{tabular}{@{}lcccccc@{}}
\toprule
\rowcolor{gray!12}
\textbf{Method} & \textbf{Acc$\uparrow$} & \textbf{TS$\uparrow$} & \textbf{CLIP$\uparrow$} & \textbf{LDS$\uparrow$} & \textbf{GC$\downarrow$} & \textbf{ID$\uparrow$} \\
\midrule
\textit{Simpler Alternatives} \\
Contrastive-only & 74.6 & 0.63 & 0.310 & 0.60 & 0.152 & 0.858 \\
Metric (no OT) & 74.4 & 0.62 & 0.308 & 0.58 & 0.158 & 0.857 \\
\midrule
\textit{Loss / Component Ablations} \\
w/o FGW & 74.8 & 0.65 & 0.312 & 0.63 & 0.141 & 0.861 \\
w/o GW & 75.1 & 0.66 & 0.314 & 0.66 & 0.134 & 0.842 \\
w/o $\mathcal{L}_{\perp}$ & 73.9 & 0.62 & 0.309 & 0.59 & 0.152 & 0.853 \\
w/o $\mathcal{L}_{L}$ & 75.2 & 0.64 & 0.313 & 0.66 & 0.143 & 0.859 \\
w/o Graph & 74.3 & 0.61 & 0.307 & 0.57 & 0.161 & 0.856 \\
\midrule
\textit{Graph Structure / Sensitivity} \\
Random Graph & 73.7 & 0.59 & 0.304 & 0.56 & 0.178 & 0.854 \\
Fully Connected & 74.1 & 0.60 & 0.306 & 0.58 & 0.166 & 0.855 \\
8-node Graph & 75.3 & 0.67 & 0.313 & 0.66 & 0.138 & 0.860 \\
Fine-grained Graph & 76.1 & 0.70 & 0.315 & 0.68 & 0.128 & 0.862 \\
$k=5$ & 76.0 & 0.69 & 0.315 & 0.67 & 0.131 & 0.861 \\
$k=20$ & 76.2 & 0.70 & 0.315 & 0.68 & 0.129 & 0.862 \\
\midrule
\rowcolor{bestrow}
\textbf{Full Model ($k=10$)} & \textbf{76.4} & \textbf{0.71} & \textbf{0.316} & \textbf{0.69} & \textbf{0.126} & \textbf{0.862} \\
\bottomrule
\end{tabular}
}
\end{minipage}
\hspace{0.012\textwidth}
\begin{minipage}[t]{0.43\textwidth}

\centering
\caption{\textbf{Modality ablation.}}
\label{tab:ablation_modalities}
\vspace{-5pt}
\resizebox{\linewidth}{!}{%
\begin{tabular}{@{}lcccccc@{}}
\toprule
\rowcolor{gray!12}
\textbf{Setting} & \textbf{Acc$\uparrow$} & \textbf{CLIP$\uparrow$} & \textbf{IB$\uparrow$} & \textbf{TS$\uparrow$} & \textbf{LDS$\uparrow$} & \textbf{H$\uparrow$} \\
\midrule
Image only & 69.1 & 0.292 & 0.251 & 0.54 & 0.58 & 3.36 \\
Audio only & 68.4 & 0.286 & 0.263 & 0.53 & 0.56 & 3.29 \\
Text only & 72.8 & 0.306 & 0.268 & 0.62 & 0.63 & 3.58 \\
Image + Audio & 71.7 & 0.298 & 0.272 & 0.59 & 0.61 & 3.54 \\
Image + Text & 74.2 & 0.311 & 0.281 & 0.66 & 0.66 & 3.71 \\
Text + Audio & 73.4 & 0.309 & 0.276 & 0.64 & 0.65 & 3.66 \\
\midrule
\rowcolor{bestrow}
\textbf{Img+Txt+Aud} & \textbf{76.4} & \textbf{0.316} & \textbf{0.279} & \textbf{0.71} & \textbf{0.69} & \textbf{3.82} \\
\bottomrule
\end{tabular}
}

\vspace{4pt}

\caption{\textbf{Traversal comparison.}}
\label{tab:geodesic_vs_euclidean}
\vspace{-5pt}
\resizebox{\linewidth}{!}{%
\begin{tabular}{@{}lccccc@{}}
\toprule
\rowcolor{gray!12}
\textbf{Traversal} & \textbf{TS$\uparrow$} & \textbf{LDS$\uparrow$} & \textbf{GC$\downarrow$} & \textbf{ID$\uparrow$} & \textbf{CLIP$\uparrow$} \\
\midrule
Linear & 0.64 & 0.61 & 0.148 & 0.812 & 0.302 \\
Spline & 0.69 & 0.63 & 0.165 & 0.870 & 0.318 \\
Random Path & 0.60 & 0.58 & 0.179 & 0.851 & 0.303 \\
Full Conn. Path & 0.62 & 0.60 & 0.163 & 0.856 & 0.309 \\
\rowcolor{bestrow}
\textbf{Controlla Path} & \textbf{0.71} & \textbf{0.69} & \textbf{0.126} & \textbf{0.918} & \textbf{0.325} \\
\bottomrule
\end{tabular}
}

\end{minipage}
\end{table*}

\begin{figure}[h]
\centering
\includegraphics[width=0.8\textwidth]{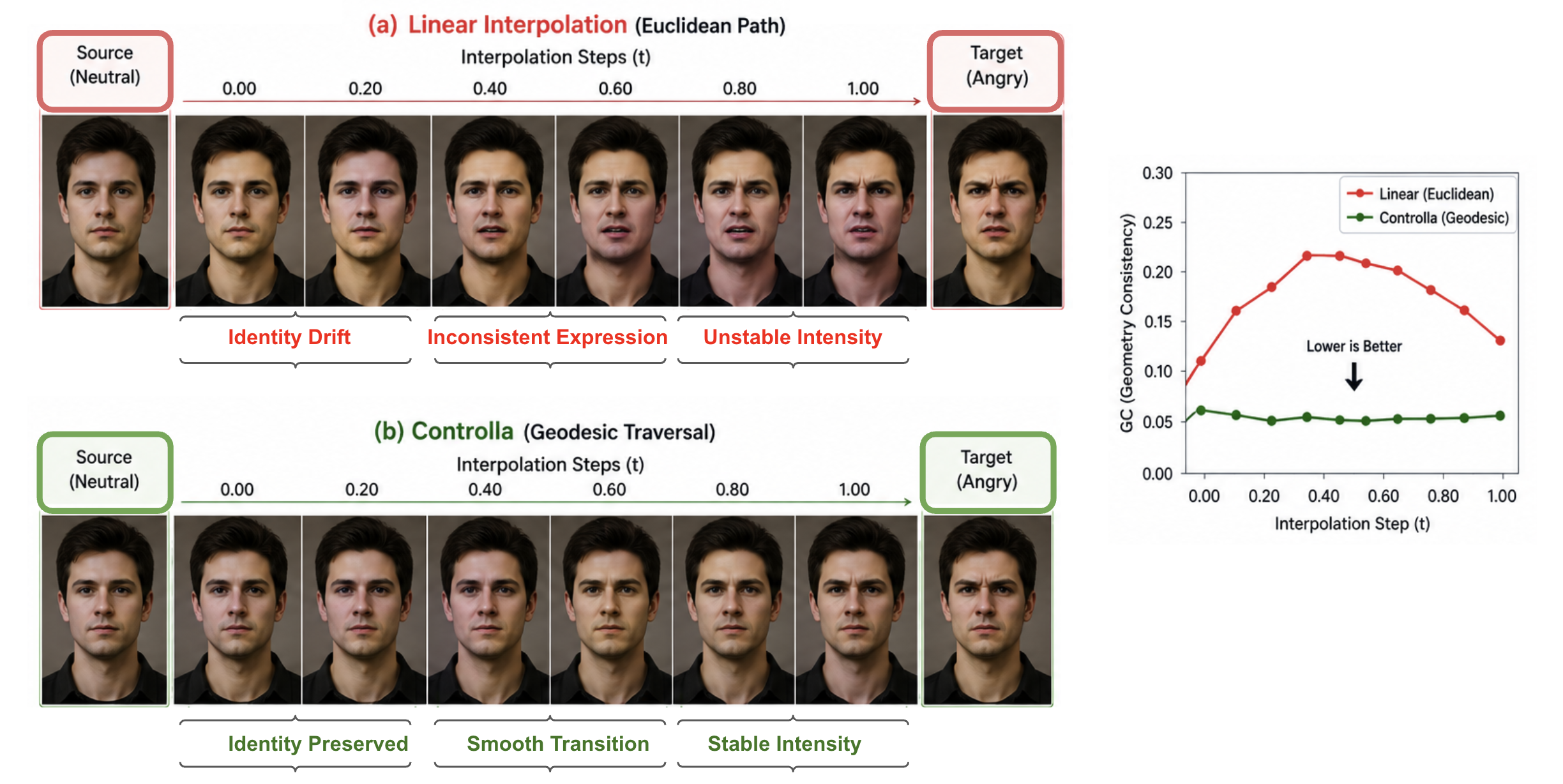}
\caption{
\textbf{Graph-consistent vs. linear traversal.}
Graph-consistent traversal yields smoother, identity-preserving transitions and lower GC than linear interpolation.
}
\label{fig:geodesic_vs_linear}
\end{figure}
\vspace{-6pt}
\subsection{Qualitative Results}
Figure~\ref{fig:qual_compare} compares emotional edits across CelebA-HQ, AffectNet, and AffectHuman-43K. 
Controlla produces semantically accurate and identity-consistent edits, while prior methods show identity drift, weaker attribute control, text-condition mismatch, or non-smooth transitions.  These trends are consistent across identities and datasets, indicating robust transfer beyond the training benchmark.  Figure~\ref{fig:qual_graph} further shows that graph-guided traversal produces smoother and more interpretable transitions than linear interpolation. \textit{\textbf{See Appendix  Sec. \hyperref[sec10]{~J} and Sec. \hyperref[sec11]{~K} for extended qualitative analysis, including cross-dataset transfer, fine-grained emotion variation, and challenging failure cases.}} 
The formulation can support alternative graph-defined control factors beyond affective control, such as pose, expression intensity, or style; \textit{\textbf{see Appendix Sec.~\hyperref[sec12]{L} and Sec.~\hyperref[sec13]{M}.}}
\vspace{-8pt}
\begin{figure}[!t]
\centering
\includegraphics[width=1.0\linewidth]{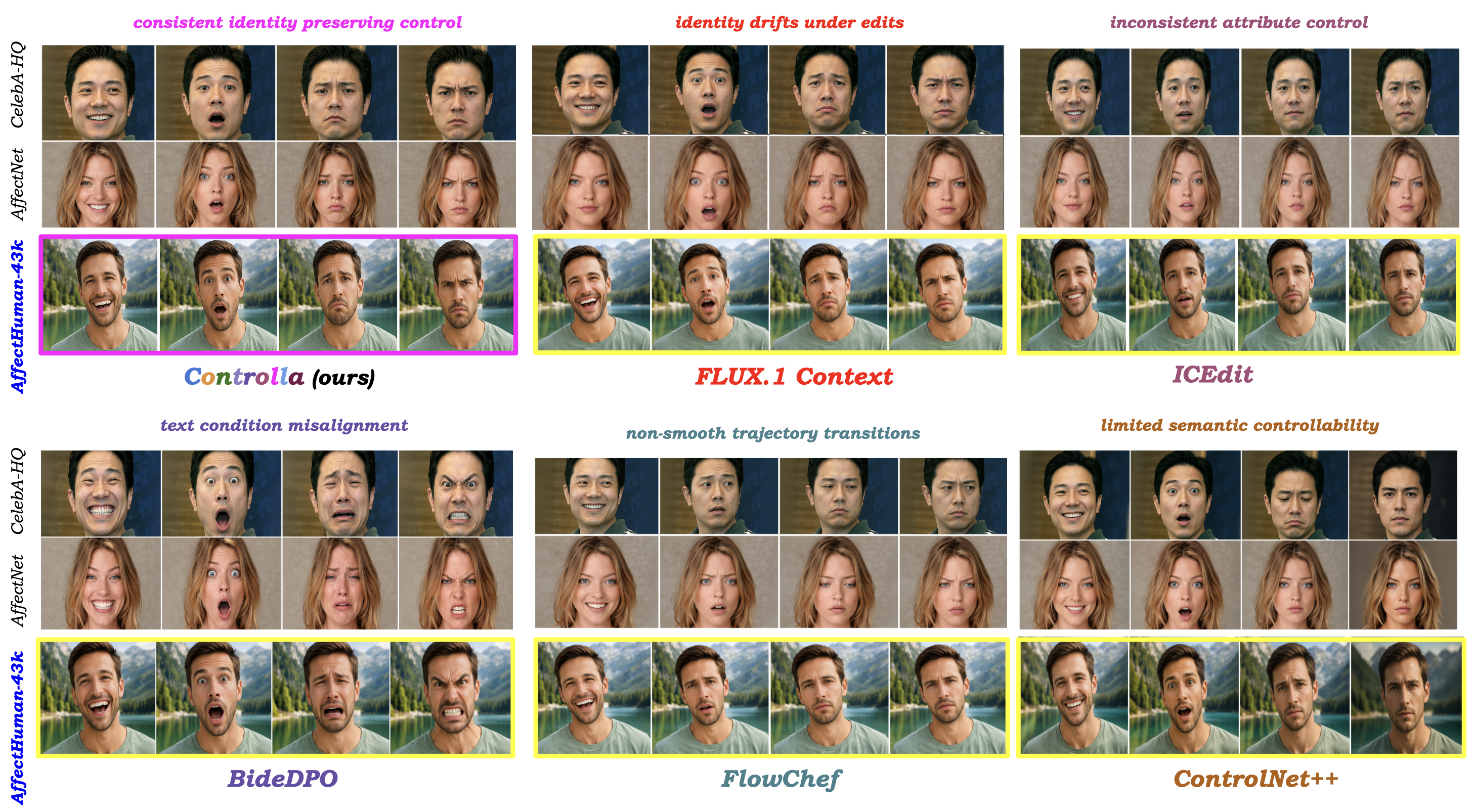}
\vspace{-6pt}
\caption{
\textbf{Cross-dataset qualitative comparison.}
Generated examples show identity preservation, expression consistency, and semantic alignment across datasets and methods under matched inputs.
}
\label{fig:qual_compare}
\end{figure}
\vspace{-6pt}
\begin{figure}[!t]
\centering
\includegraphics[width=0.95\linewidth]{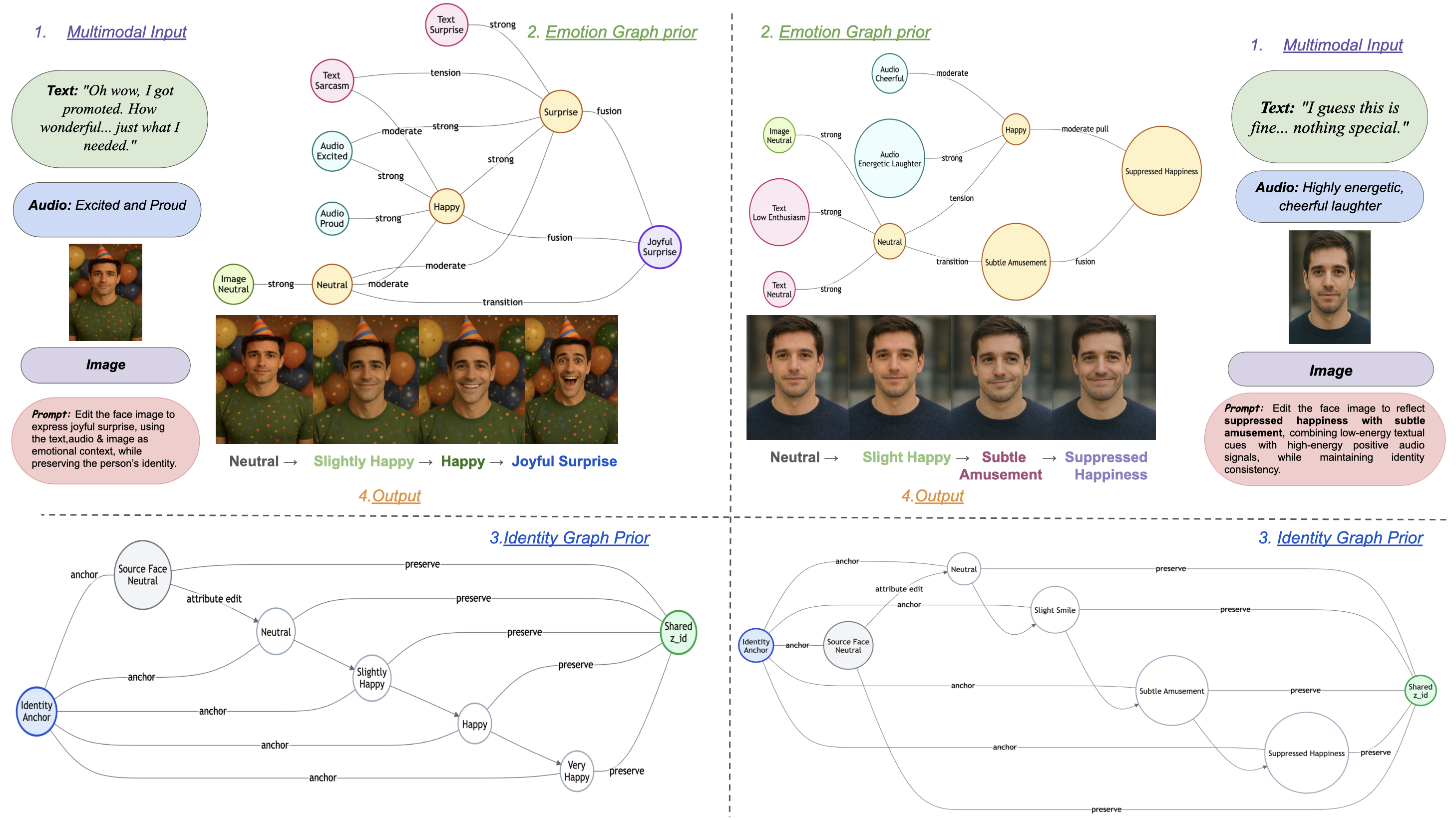}
\vspace{-8pt}
\caption{
\textbf{Graph-guided latent control.}
Structured latent geometry yields smooth emotion transitions with consistent identity, unlike linear interpolation.
}
\vspace{-8pt}
\label{fig:qual_graph}
\end{figure}
\vspace{-8pt}
\section{Conclusion and Future Work}
\label{sec:conclusion}
\vspace{-8pt}
We presented Controlla, which models controllability as a property of structured latent geometry rather than inference-time conditioning. By combining graph-constrained optimal transport with learned identity and attribute factors, Controlla enables smooth, identity-preserving transformations along graph-consistent trajectories. Together with AffectHuman-43K and geometry-aware metrics, our experiments evaluate controllability through endpoint accuracy, identity preservation, latent traversal, cross-modal alignment, and human preference. Results, ablations, graph-sensitivity analyses, and robustness studies show consistent gains over strong conditioning, editing, and personalization baselines. These findings support a shift from heuristic conditioning toward geometry-driven representation learning for controllable multimodal generation. Future work includes extending Controlla to video, broader graph-defined control factors, and stronger geometry-aware evaluation protocols.
\vspace{-8pt}
\bibliography{software}


\clearpage
\appendix




















\part*{Appendix}
\addcontentsline{toc}{part}{Appendix}

\etocsettocstyle{\section*{Appendix Contents}}{}
\etocsetnexttocdepth{subsection}
\localtableofcontents

\newpage

\section{Theoretical Analysis of Controlla}
\label{sec1}

We provide a formal analysis of the representation-level claim underlying Controlla: controllability can be encouraged by learning latent factors that approximately preserve a semantic graph metric while maintaining identity stability. The purpose of this analysis is not to prove global optimality of a neural generator, nor to guarantee perfect controllability for all inputs. Instead, we show that, under explicit approximation and regularity assumptions, the Controlla objective induces a bound in which output variation is controlled by semantic graph distance and by measurable representation errors.

The analysis connects four quantities that correspond to the main components of the proposed objective:
\begin{itemize}
    \item \textbf{attribute-graph distortion}, measuring how well the learned attribute factor preserves the emotion-graph metric;
    \item \textbf{identity instability}, measuring variation in the learned identity factor for the same reference identity;
    \item \textbf{attribute--identity coupling}, measuring leakage between controllable and identity-related factors;
    \item \textbf{local generator sensitivity}, measuring how strongly output changes respond to latent perturbations.
\end{itemize}

The result should be read as a conditional representation-level justification: if graph alignment, identity stability, factor separation, and local smoothness errors are small, then semantic traversal in the learned attribute factor produces bounded output variation under a fixed or approximately fixed identity factor.

\subsection{Structured Controllability as Metric Preservation}

Let $G_e=(V_e,E_e,D_e)$ denote an emotion graph, where $V_e$ is the set of semantic emotion states, $E_e$ is the edge set, and $D_e(u,v)$ is the shortest-path distance between two states $u,v\in V_e$. We interpret $D_e(u,v)$ as the semantic cost of moving from affective state $u$ to affective state $v$.

Given a multimodal input
\[
x=(x^{\mathrm{img}},x^{\mathrm{ref}},x^{\mathrm{text}},x^{\mathrm{aud}}),
\]
the encoder first maps the input to a shared representation
\begin{equation}
z=f_\theta(x).
\label{eq:supp_shared_encoder}
\end{equation}
Rather than imposing a fixed dimensional split, Controlla learns attribute and identity factors through trainable factorization heads:
\begin{equation}
z_{\mathrm{attr}}=h_{\mathrm{attr}}(z),
\qquad
z_{\mathrm{id}}=h_{\mathrm{id}}(z).
\label{eq:supp_factor_heads}
\end{equation}
Here, $z_{\mathrm{attr}}$ represents controllable semantic attributes and $z_{\mathrm{id}}$ represents reference-grounded identity information. The generator produces an output from the learned factors:
\begin{equation}
\hat{x}
=
g_\phi(z_{\mathrm{attr}},z_{\mathrm{id}}).
\label{eq:supp_generator}
\end{equation}

For each emotion node $u\in V_e$, let $z_u$ denote the corresponding learned attribute representation. In practice, $z_u$ may be an emotion prototype, a cluster centroid, or an averaged attribute embedding associated with state $u$. For graph nodes $u,v\in V_e$, we write
\[
D_Z^{\mathrm{attr}}(u,v)=\|z_u-z_v\|
\]
for the distance between their corresponding learned attribute representations.

\begin{definition}[Graph-Metric Controllability]
A model is $(L,\delta)$ graph-metric controllable on $G_e$ if, for a fixed identity factor $z_{\mathrm{id}}$ and semantic states $u,v\in V_e$, the generated outputs satisfy
\begin{equation}
\left\|
g_\phi(z_u,z_{\mathrm{id}})
-
g_\phi(z_v,z_{\mathrm{id}})
\right\|
\le
L D_e(u,v)+\delta .
\label{eq:supp_graph_metric_control}
\end{equation}
\end{definition}

This definition treats controllability as a geometric property of latent traversal. Nearby states on the semantic graph should induce bounded changes in the output, while more distant semantic states may induce larger changes. This differs from inference-time conditioning, where a control signal is imposed externally without necessarily constraining the geometry of intermediate latent states.

\subsection{Controlla Objective}

For a batch of $N$ samples, let
\[
Z_{\mathrm{attr}}=\{z_{\mathrm{attr}}^n\}_{n=1}^N,
\qquad
Z_{\mathrm{id}}=\{z_{\mathrm{id}}^n\}_{n=1}^N
\]
denote the learned attribute and identity factors. Similarly, let $G_i=(V_i,E_i,D_i)$ denote the reference-identity graph, where $D_i$ encodes relational distances among reference-identity groups.

Controlla optimizes a graph-constrained latent objective of the form
\begin{equation}
\mathcal{L}_{\mathrm{Controlla}}
=
\lambda_e \mathcal{L}_{\mathrm{FGW}}(Z_{\mathrm{attr}},G_e)
+
\lambda_i \mathcal{L}_{\mathrm{GW}}(Z_{\mathrm{id}},G_i)
+
\lambda_\perp \mathcal{L}_{\perp}
+
\lambda_L \mathcal{L}_{\mathrm{Lip}} .
\label{eq:supp_controlla_objective}
\end{equation}

The attribute alignment term uses Fused Gromov--Wasserstein alignment:
\begin{equation}
\mathcal{L}_{\mathrm{FGW}}(Z_{\mathrm{attr}},G_e)
=
\min_{\pi}
\sum_{i,j,k,l}
\left|
D_e(i,j)
-
D_Z^{\mathrm{attr}}(k,l)
\right|^2
\pi_{ik}\pi_{jl}
+
\alpha
\sum_{i,k}
C_{ik}\pi_{ik},
\label{eq:supp_fgw}
\end{equation}
where $D_e$ is the emotion-graph distance matrix, $D_Z^{\mathrm{attr}}$ is the pairwise distance matrix in the learned attribute factor, $C$ is a feature-level matching cost, and $\pi$ is a transport plan.

The identity alignment term uses Gromov--Wasserstein alignment:
\begin{equation}
\mathcal{L}_{\mathrm{GW}}(Z_{\mathrm{id}},G_i)
=
\min_{\pi}
\sum_{i,j,k,l}
\left|
D_i(i,j)
-
D_Z^{\mathrm{id}}(k,l)
\right|^2
\pi_{ik}\pi_{jl},
\label{eq:supp_gw}
\end{equation}
where $D_i$ is the identity-graph distance matrix and $D_Z^{\mathrm{id}}$ is the pairwise distance matrix in the learned identity factor.

To discourage linear dependence between the learned attribute and identity factors, we use
\begin{equation}
\mathcal{L}_{\perp}
=
\left\|
Z_{\mathrm{attr}}^\top Z_{\mathrm{id}}
\right\|_F^2.
\label{eq:supp_orthogonality}
\end{equation}

Finally, to reduce sensitivity to small latent perturbations, we use a local Lipschitz regularizer:
\begin{equation}
\mathcal{L}_{\mathrm{Lip}}
=
\mathbb{E}_{z,z'}
\left[
\max
\left(
0,
\left\|
g_\phi(z)-g_\phi(z')
\right\|
-
L
\left\|
z-z'
\right\|
\right)
\right].
\label{eq:supp_lip_loss}
\end{equation}

Each term controls a distinct failure mode. The FGW term reduces attribute-graph distortion, the GW term encourages stable identity structure, the orthogonality term reduces attribute--identity coupling, and the Lipschitz term limits unstable generator responses.

\subsection{Assumptions}

We state the result under explicit approximation assumptions. These assumptions are not claims that neural optimization always achieves exact structure. Rather, they formalize the representation properties encouraged by the training objective.

\begin{assumption}[Approximate Attribute Metric Preservation]
After optimization, the learned attribute distances approximately preserve the emotion-graph metric in expectation:
\begin{equation}
\mathbb{E}_{u,v\sim V_e}
\left[
\left|
D_Z^{\mathrm{attr}}(u,v)
-
D_e(u,v)
\right|^2
\right]
\le
\varepsilon_e .
\label{eq:supp_attr_metric_assumption}
\end{equation}
\end{assumption}

This assumption is encouraged by the FGW term, which penalizes disagreement between pairwise distances in the emotion graph and pairwise distances in the learned attribute factor.

\begin{assumption}[Approximate Identity Stability]
For samples $p,q$ belonging to the same reference-identity group $C_k$, the identity factors satisfy
\begin{equation}
\mathbb{E}_{p,q\in C_k}
\left[
\left\|
z_p^{\mathrm{id}}
-
z_q^{\mathrm{id}}
\right\|^2
\right]
\le
\varepsilon_i .
\label{eq:supp_identity_stability}
\end{equation}
\end{assumption}

This assumption formalizes identity stability: samples sharing a reference identity should have nearby learned identity factors. We denote the resulting identity-factor variation during traversal by $\Delta_{\mathrm{id}}$.

\begin{assumption}[Bounded Attribute--Identity Coupling]
The learned representation has bounded attribute--identity coupling. We model the identity-relevant perturbation induced by imperfect factor separation as $\Delta_\perp$ and assume
\begin{equation}
\mathbb{E}
\left[
\Delta_\perp
\right]
\le
\sqrt{\varepsilon_\perp}.
\label{eq:supp_coupling_assumption}
\end{equation}
In practice, this perturbation is controlled by the factor-separation penalty
\begin{equation}
\left\|
Z_{\mathrm{attr}}^\top Z_{\mathrm{id}}
\right\|_F^2
\le
\varepsilon_\perp .
\label{eq:supp_subspace_decoupling}
\end{equation}
\end{assumption}

This assumption does not imply full statistical independence between $z_{\mathrm{attr}}$ and $z_{\mathrm{id}}$. It only states that the identity-relevant variation induced by imperfect factor separation is bounded.

\begin{assumption}[Local Lipschitz Generator]
The generator is locally Lipschitz on a neighborhood of the learned latent manifold. For latent codes $(z_a,z_i)$ and $(z_a',z_i')$ in this neighborhood,
\begin{equation}
\left\|
g_\phi(z_a,z_i)
-
g_\phi(z_a',z_i')
\right\|
\le
L_g
\left(
\left\|
z_a-z_a'
\right\|
+
\left\|
z_i-z_i'
\right\|
\right).
\label{eq:supp_lipschitz_assumption}
\end{equation}
\end{assumption}

This is a local regularity assumption, not a global guarantee over all latent codes. It matches the empirical setting, where the generator is regularized near the training manifold.

\subsection{Main Result}

\begin{theorem}[Expected Graph-Metric Controllability]
Under Assumptions~1--4, for semantic states $u,v\sim V_e$ and identity variation satisfying Assumption~2, the expected output variation under attribute traversal satisfies
\begin{equation}
\mathbb{E}_{u,v\sim V_e}
\left[
\left\|
g_\phi(z_u,z_{\mathrm{id}})
-
g_\phi(z_v,z_{\mathrm{id}})
\right\|
\right]
\le
L_g
\mathbb{E}_{u,v\sim V_e}
\left[
D_e(u,v)
\right]
+
L_g
\left(
\sqrt{\varepsilon_e}
+
\sqrt{\varepsilon_i}
+
\sqrt{\varepsilon_\perp}
\right).
\label{eq:supp_expected_control_bound}
\end{equation}
\end{theorem}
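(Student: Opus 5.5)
The plan is to prove a pointwise inequality for each pair $(u,v)$ and then take expectations, bounding the resulting error terms with Jensen's inequality and Assumptions~1--3. The identity factor is held fixed only approximately: during traversal from $u$ to $v$ it may move from $z_{\mathrm{id}}$ to $z_{\mathrm{id}}'$. We absorb this into the effective perturbation $\Delta_{\mathrm{id}}+\Delta_\perp$. Here $\Delta_{\mathrm{id}}=\|z_{\mathrm{id}}-z_{\mathrm{id}}'\|$ is the within-group identity drift, and $\Delta_\perp$ is the identity-relevant leakage from imperfect factor separation defined in Assumption~3.

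The first step is a triangle-inequality decomposition. With the paired codes on the learned manifold, Assumption~4 gives
\begin{equation*}
\bigl\|g_\phi(z_u,z_{\mathrm{id}})-g_\phi(z_v,z_{\mathrm{id}}')\bigr\|
\le L_g\bigl(\|z_u-z_v\| + \Delta_{\mathrm{id}} + \Delta_\perp\bigr).
\end{equation*}
In the exactly-fixed case $\Delta_{\mathrm{id}}=\Delta_\perp=0$, and the extra terms only loosen the bound. Next, we write
\begin{equation*}
\|z_u-z_v\| = D_Z^{\mathrm{attr}}(u,v) \le D_e(u,v) + \bigl|D_Z^{\mathrm{attr}}(u,v)-D_e(u,v)\bigr|.
\end{equation*}

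The second step takes $\mathbb{E}_{u,v\sim V_e}$ and handles each error term separately.
\begin{itemize}
\item For the metric error, Jensen's inequality ($\mathbb{E}|X|\le\sqrt{\mathbb{E}X^2}$) together with Assumption~1 gives $\mathbb{E}|D_Z^{\mathrm{attr}}-D_e|\le\sqrt{\varepsilon_e}$.
\item For the identity drift, $z_{\mathrm{id}}$ and $z_{\mathrm{id}}'$ come from the same reference group $C_k$. Jensen's inequality and Assumption~2 then give $\mathbb{E}\,\Delta_{\mathrm{id}}\le\sqrt{\mathbb{E}\|z_p^{\mathrm{id}}-z_q^{\mathrm{id}}\|^2}\le\sqrt{\varepsilon_i}$.
\item For the coupling term, Assumption~3 directly gives $\mathbb{E}\,\Delta_\perp\le\sqrt{\varepsilon_\perp}$.
\end{itemize}
Summing these bounds and using linearity of expectation yields \eqref{eq:supp_expected_control_bound}.

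The main obstacle is interpretive rather than computational. The statement writes a single $z_{\mathrm{id}}$ on both sides, yet the identity and coupling terms only enter if the identity factor is allowed to drift. I would therefore make explicit that $z_{\mathrm{id}}$ denotes the identity factor realized at each endpoint of the traversal, with the drift modelled by $\Delta_{\mathrm{id}}$ and $\Delta_\perp$ as in the text after Assumptions~2--3.

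A second point to check is the expectation over identities. The expectation over $(u,v)$ must be taken jointly with the sampling of $p,q\in C_k$ so that Assumption~2 applies, which requires the traversal endpoints to be sampled within the same reference group.

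Finally, we must verify that all traversal codes stay in the neighborhood where Assumption~4 holds. I would assume this as part of the setting, since the assumption is only local.
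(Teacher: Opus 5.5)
Your proposal is correct and follows essentially the same route as the paper: a Lipschitz step bounding the output gap by $L_g(\|z_u-z_v\|+\Delta_{\mathrm{id}}+\Delta_\perp)$, then Jensen's inequality on Assumption~1 to get $\mathbb{E}\|z_u-z_v\|\le\mathbb{E}[D_e(u,v)]+\sqrt{\varepsilon_e}$, then $\mathbb{E}[\Delta_{\mathrm{id}}]\le\sqrt{\varepsilon_i}$ and $\mathbb{E}[\Delta_\perp]\le\sqrt{\varepsilon_\perp}$ from Assumptions~2--3, and finally linearity of expectation. Your extra remarks make explicit what the paper leaves implicit: the identity and coupling terms are only a harmless loosening when $z_{\mathrm{id}}$ is literally fixed, Jensen is needed to pass from the squared bound in Assumption~2 to $\sqrt{\varepsilon_i}$, and the sampling must be taken jointly within a reference group.
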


\begin{proof}
By the local Lipschitz property in Eq.~\eqref{eq:supp_lipschitz_assumption}, for two attribute states $z_u$ and $z_v$ and an approximately fixed identity factor, we have
\begin{equation}
\left\|
g_\phi(z_u,z_{\mathrm{id}})
-
g_\phi(z_v,z_{\mathrm{id}})
\right\|
\le
L_g
\left\|
z_u-z_v
\right\|
+
L_g\Delta_{\mathrm{id}}
+
L_g\Delta_{\perp},
\label{eq:supp_lip_step}
\end{equation}
where $\Delta_{\mathrm{id}}$ denotes identity-factor variation and $\Delta_{\perp}$ denotes additional identity-relevant perturbation caused by imperfect attribute--identity separation.

From Assumption~1 and Jensen's inequality,
\begin{equation}
\mathbb{E}_{u,v\sim V_e}
\left[
\left|
D_Z^{\mathrm{attr}}(u,v)-D_e(u,v)
\right|
\right]
\le
\sqrt{\varepsilon_e}.
\label{eq:supp_attr_jensen}
\end{equation}
Since $D_Z^{\mathrm{attr}}(u,v)=\|z_u-z_v\|$, this gives
\begin{equation}
\mathbb{E}_{u,v\sim V_e}
\left[
\left\|
z_u-z_v
\right\|
\right]
\le
\mathbb{E}_{u,v\sim V_e}
\left[
D_e(u,v)
\right]
+
\sqrt{\varepsilon_e}.
\label{eq:supp_attr_distance_bound}
\end{equation}

From Assumption~2,
\begin{equation}
\mathbb{E}
\left[
\Delta_{\mathrm{id}}
\right]
\le
\sqrt{\varepsilon_i}.
\label{eq:supp_identity_error_bound}
\end{equation}
From Assumption~3,
\begin{equation}
\mathbb{E}
\left[
\Delta_{\perp}
\right]
\le
\sqrt{\varepsilon_\perp}.
\label{eq:supp_leakage_error_bound}
\end{equation}

Taking expectations in Eq.~\eqref{eq:supp_lip_step} and substituting Eqs.~\eqref{eq:supp_attr_distance_bound}, \eqref{eq:supp_identity_error_bound}, and \eqref{eq:supp_leakage_error_bound} yields
\begin{equation}
\mathbb{E}_{u,v\sim V_e}
\left[
\left\|
g_\phi(z_u,z_{\mathrm{id}})
-
g_\phi(z_v,z_{\mathrm{id}})
\right\|
\right]
\le
L_g
\mathbb{E}_{u,v\sim V_e}
\left[
D_e(u,v)
\right]
+
L_g
\left(
\sqrt{\varepsilon_e}
+
\sqrt{\varepsilon_i}
+
\sqrt{\varepsilon_\perp}
\right).
\end{equation}
This proves the theorem.
\end{proof}

\paragraph{Pointwise version.}
A pointwise version follows only under stronger bounded-distortion assumptions. Specifically, if for a given pair $(u,v)$,
\[
\left|
D_Z^{\mathrm{attr}}(u,v)-D_e(u,v)
\right|
\le
\eta_e,
\qquad
\Delta_{\mathrm{id}}\le \eta_i,
\qquad
\Delta_{\perp}\le \eta_\perp,
\]
then
\begin{equation}
\left\|
g_\phi(z_u,z_{\mathrm{id}})
-
g_\phi(z_v,z_{\mathrm{id}})
\right\|
\le
L_gD_e(u,v)
+
L_g(\eta_e+\eta_i+\eta_\perp).
\label{eq:supp_pointwise_control_bound}
\end{equation}
We use the expected form as the main statement because it better matches empirical neural-network optimization and dataset-level evaluation.

\subsection{Interpretation and Scope}

Theorem~1 shows that Controlla converts controllability into a metric-regularized representation learning problem. The bound decomposes output variation into semantic graph distance and three error terms: attribute-graph distortion, identity instability, and attribute--identity coupling. Reducing attribute-graph distortion makes learned attribute distances more consistent with semantic distances on the emotion graph. Reducing identity instability makes the identity factor more stable for samples sharing the same reference identity. Reducing coupling error limits unintended interaction between the learned attribute and identity factors. Reducing local generator sensitivity limits how strongly outputs change in response to small latent perturbations.

This result also clarifies the difference between Controlla and conditioning-based control. Conditioning-based methods may impose prompts, instructions, spatial maps, or auxiliary signals at inference time, but they do not necessarily constrain the latent geometry through which semantic changes occur. Controlla instead encourages the learned representation itself to carry a graph-structured notion of control.

The analysis is conditional and representation-level. It does not prove that neural training reaches a global optimum, that the generator is globally Lipschitz, or that graph-metric controllability holds for arbitrary out-of-distribution inputs. It also does not guarantee perceptual realism or perfect semantic correctness of generated images. Those properties must be evaluated empirically. The role of the analysis is to explain why lower graph distortion, stronger identity stability, weaker factor coupling, and smoother generator response should improve controllability under fixed-identity traversal.


\section{AffectHuman-43K Benchmark Construction}
\label{sec2}

We introduce \textbf{AffectHuman-43K}, a controlled multimodal benchmark for evaluating \textit{reference-grounded, identity-preserving affective control}. The benchmark tests whether a generative model can modify affective attributes specified through heterogeneous multimodal signals while preserving the identity specified by a visual reference image. Unlike naturally co-recorded multimodal identity datasets, AffectHuman-43K is constructed as a \textbf{controlled evaluation protocol}: identity supervision is grounded only through a reference-image anchor, whereas text and audio provide affective control signals. This separation is intentional, allowing us to evaluate whether a model can disentangle identity from controllable semantic attributes without relying on cross-modal identity correspondence.

Naturally paired multimodal emotion datasets such as IEMOCAP, CMU-MOSEI, MELD, and RAVDESS provide valuable aligned audio--visual--text or audio--visual emotion data, but they are primarily designed for recognition or sentiment analysis and do not provide reference-image-grounded, identity-disjoint evaluation for controllable generation. AffectHuman-43K addresses this complementary setting by decoupling visual identity from affective control signals and evaluating whether models preserve a reference identity while following heterogeneous affective cues.

AffectHuman-43K is constructed from publicly available datasets used under their respective research licenses. Image samples are drawn from affective and face-centric datasets such as AffectNet, RAF-DB, and CelebA-HQ. Audio samples are drawn from real human speech datasets, primarily IEMOCAP. Textual affective descriptions and utterances are derived from IEMOCAP transcripts and EmoBank annotations. No new data is scraped, no private data is collected, and no synthetic human identities are generated during benchmark construction.

Given heterogeneous source datasets
$\mathcal{D}_{\mathrm{img}}$, $\mathcal{D}_{\mathrm{text}}$, and $\mathcal{D}_{\mathrm{aud}}$,
our goal is to construct a leakage-safe benchmark
\[
\mathcal{S}^{*}
=
\left\{
\left(
x^{\mathrm{img}},
x^{\mathrm{ref}},
x^{\mathrm{text}},
x^{\mathrm{aud}},
y^{\mathrm{id}},
y^{\mathrm{emo}},
g^{\mathrm{id}}
\right)
\right\},
\]
where $x^{\mathrm{img}}$ is the target image, $x^{\mathrm{ref}}$ is the reference image specifying identity,
$x^{\mathrm{text}}$ and $x^{\mathrm{aud}}$ are affective control signals,
$y^{\mathrm{id}}$ denotes the reference identity label or group identifier,
$y^{\mathrm{emo}}$ denotes the normalized emotion label, and
$g^{\mathrm{id}}$ denotes the reference-identity group used for leakage-safe splitting.

Each tuple is constructed to satisfy four requirements:
\begin{itemize}
    \item \textbf{Reference-grounded identity:} identity is specified through a visual reference image rather than inferred from text or audio;
    \item \textbf{Affective multimodal alignment:} image, text, and audio are aligned through shared emotion semantics;
    \item \textbf{Identity-control decoupling:} identity supervision and affective control signals are deliberately separated;
    \item \textbf{Leakage-safe evaluation:} all samples sharing the same reference-image identity anchor are assigned wholly to a single split.
\end{itemize}

This formulation makes AffectHuman-43K a benchmark for \textit{identity-preserving multimodal affective editing}, rather than a naturally paired multimodal identity dataset. This distinction is important: the benchmark is designed to isolate whether a model preserves visual identity while following affective controls from text and audio.

\begin{algorithm}[h]
\small
\caption{AffectHuman-43K Construction Pipeline}
\label{alg:affecthuman_construction}

\textbf{Input:} Image dataset $\mathcal{D}_{\mathrm{img}}$, text dataset $\mathcal{D}_{\mathrm{text}}$, audio dataset $\mathcal{D}_{\mathrm{aud}}$, identity threshold $\tau_{\mathrm{id}}$, consistency threshold $\tau_{\mathrm{cons}}$, duplicate threshold $\tau_{\mathrm{dup}}$ \\
\textbf{Output:} Leakage-safe aligned multimodal benchmark $\mathcal{S}^{*}$

\hrule
\vspace{2pt}

\begin{algorithmic}[1]
\State Extract face and identity embeddings from candidate reference images using ArcFace.
\State Define reference-identity anchors from visual reference images.
\State Group samples sharing the same reference-image anchor into reference-identity groups $g^{\mathrm{id}}$.
\State Verify intra-group identity consistency using threshold $\tau_{\mathrm{id}}$.
\State Map emotion annotations from all source datasets into a shared valence--arousal space.
\State Discretize valence--arousal coordinates into normalized emotion labels $y^{\mathrm{emo}}$.
\State Construct candidate tuples $(x^{\mathrm{img}}, x^{\mathrm{ref}}, x^{\mathrm{text}}, x^{\mathrm{aud}}, y^{\mathrm{id}}, y^{\mathrm{emo}})$.
\State Compute multimodal affective consistency:
\[
S_{\mathrm{cons}}
=
\alpha s(e_{\mathrm{img}}, e_{\mathrm{text}})
+
\beta s(e_{\mathrm{img}}, e_{\mathrm{aud}})
+
\gamma s(e_{\mathrm{text}}, e_{\mathrm{aud}}),
\]
where $e_{\mathrm{img}}$, $e_{\mathrm{text}}$, and $e_{\mathrm{aud}}$ are modality embeddings and $s(\cdot,\cdot)$ denotes similarity.
\State Retain candidate tuples satisfying $S_{\mathrm{cons}} > \tau_{\mathrm{cons}}$.
\State Remove near-duplicates and redundant samples using threshold $\tau_{\mathrm{dup}}$.
\State Assign each reference-identity group wholly to train, validation, or test.
\State Detect cross-split reference-identity conflicts and repair them by reassignment or removal.
\State Remove cross-split near-duplicates and leakage-risk samples.
\State Validate final splits for reference-identity leakage, near-duplicate overlap, and modality coverage.
\State \Return $\mathcal{S}^{*}$.
\end{algorithmic}
\end{algorithm}

\subsection{Stage-wise Construction}

The construction pipeline consists of five stages. Each stage enforces a structural property required for reliable evaluation of identity-preserving controllable generation. The initial multimodal alignment stage produces \textbf{43,514} candidate tuples. After leakage repair, duplicate removal, and split validation, the final benchmark contains \textbf{42,469} usable samples organized into \textbf{26,241 reference-identity groups}. The name AffectHuman-43K reflects the rounded scale of the final benchmark.

\paragraph{(1) Reference-Identity Grounding.}
Identity supervision is grounded through the visual reference image. For each candidate reference image, we extract identity embeddings using ArcFace and assign samples sharing the same reference-image anchor to a common reference-identity group. This creates a visual identity anchor that is independent of the text and audio modalities. The purpose of this step is not to infer a shared identity across image, text, and audio sources; rather, it is to define a stable visual identity reference against which generated outputs can be evaluated.

This design is central to the benchmark. By grounding identity in $x^{\mathrm{ref}}$, AffectHuman-43K evaluates whether a model can preserve the reference identity while modifying affective attributes. It also avoids an unrealistic assumption that independently sourced text and audio samples correspond to the same person as the image.

\paragraph{(2) Emotion Normalization.}
Source datasets use different emotion taxonomies, annotation granularities, and modality-specific labels. To make them comparable, we map all emotion annotations into a shared valence--arousal representation. Continuous or categorical annotations are then normalized into a common affective taxonomy. In the final benchmark, each sample is assigned a normalized emotion label $y^{\mathrm{emo}}$.

This normalization serves two purposes. First, it enables semantic alignment across image, text, and audio sources without requiring identity correspondence. Second, it supports evaluation of affective controllability under both coarse emotion categories and fine-grained intra-class variation.

\paragraph{(3) Multimodal Affective Alignment.}
Because AffectHuman-43K is not a naturally co-recorded multimodal identity dataset, cross-modal alignment is defined through \textbf{affective consistency}, not person-level correspondence. For each candidate tuple, we compute a multimodal consistency score
\[
S_{\mathrm{cons}}
=
\alpha s(e_{\mathrm{img}}, e_{\mathrm{text}})
+
\beta s(e_{\mathrm{img}}, e_{\mathrm{aud}})
+
\gamma s(e_{\mathrm{text}}, e_{\mathrm{aud}}),
\]
where $e_{\mathrm{img}}$, $e_{\mathrm{text}}$, and $e_{\mathrm{aud}}$ denote modality embeddings, and $\alpha$, $\beta$, and $\gamma$ control the relative contribution of image--text, image--audio, and text--audio consistency. Candidate tuples with low affective agreement are removed.

Pretrained models such as CLIP and ImageBind are used only for filtering low-agreement candidate tuples during dataset construction. They are not used to define ground-truth emotion labels, to supervise Controlla, or to determine the final evaluation scores. This separation reduces the risk that benchmark construction and evaluation collapse into the same pretrained-model signal.

\paragraph{(4) Filtering and De-duplication.}
After affective alignment, we remove redundant, low-quality, and leakage-risk samples. Near-duplicate detection is performed using a similarity threshold $\tau_{\mathrm{dup}}$. Samples with conflicting or weak multimodal affective signals are discarded. This stage improves benchmark consistency and reduces the chance that models can exploit repeated samples rather than learning controllable transformations.

Before leakage repair, the aligned candidate pool contains \textbf{43,514} tuples. After filtering, duplicate removal, and split validation, \textbf{42,469} samples remain in the final benchmark.

\paragraph{(5) Leakage-Free Grouped Splitting.}
To evaluate identity preservation under true generalization, splits are created at the level of reference-identity groups rather than individual samples. Let $\mathcal{G}_{\mathrm{train}}$, $\mathcal{G}_{\mathrm{val}}$, and $\mathcal{G}_{\mathrm{test}}$ denote the sets of reference-identity groups assigned to each split. We enforce
\[
\mathcal{G}_{\mathrm{train}} \cap \mathcal{G}_{\mathrm{val}} = \emptyset,
\qquad
\mathcal{G}_{\mathrm{train}} \cap \mathcal{G}_{\mathrm{test}} = \emptyset,
\qquad
\mathcal{G}_{\mathrm{val}} \cap \mathcal{G}_{\mathrm{test}} = \emptyset.
\]
Thus, all samples sharing the same reference-image identity anchor appear in exactly one split. Cross-split conflicts are repaired through reassignment or removal, and cross-split near-duplicates are removed.

The final split contains \textbf{29,728} training samples, \textbf{6,371} validation samples, and \textbf{6,370} test samples. The final benchmark achieves \textbf{0.0\% cross-split reference-identity leakage}, zero near-duplicate overlap, and complete modality coverage across image, reference image, text, and audio fields.

\subsection{Benchmark Positioning and Scope}

\paragraph{A controlled benchmark, not a naturally co-recorded identity dataset.}
AffectHuman-43K should be interpreted as a controlled benchmark for reference-grounded multimodal affective control. It is not intended to claim naturally paired identity correspondence across image, text, and audio. Instead, its construction deliberately separates two factors:
\begin{itemize}
    \item \textbf{Identity preservation}, evaluated through a visual reference-image anchor;
    \item \textbf{Affective controllability}, specified through emotion-aligned text and audio signals.
\end{itemize}

This design allows the benchmark to test a precise question: given a reference identity and heterogeneous affective control signals, can a model generate outputs that follow the intended emotional transformation while preserving identity?

\paragraph{Relationship to existing datasets.}
Existing affective datasets typically provide image-only, text-only, or audio-only supervision. Some multimodal datasets contain paired signals, but they do not provide reference-identity-disjoint evaluation for controllable generation. AffectHuman-43K complements these resources by introducing:
\begin{itemize}
    \item reference-image-grounded identity supervision;
    \item identity-disjoint train, validation, and test splits;
    \item repeated-reference groups for evaluating identity consistency under multiple affective controls;
    \item text and audio affective control signals;
    \item leakage validation through grouped splitting and duplicate removal;
    \item benchmark metadata for identity group, source dataset, emotion label, and modality coverage.
\end{itemize}

The resulting benchmark is therefore suited to evaluating identity-preserving affective editing, controllable multimodal generation, and disentangled representation learning.

\paragraph{Why this design supports Controlla.}
The Controlla formulation decomposes latent representations into an identity component $z_{\mathrm{id}}$ and an attribute component $z_{\mathrm{attr}}$. AffectHuman-43K is designed to mirror this factorization. The reference image specifies identity and provides supervision for $z_{\mathrm{id}}$, while the text and audio signals provide affective control information that should be captured in $z_{\mathrm{attr}}$. By enforcing reference-identity-disjoint splits, the benchmark tests whether the learned factorization generalizes to unseen reference identities rather than memorizing identities observed during training.

\paragraph{Scope and limitations.}
AffectHuman-43K is designed for scenarios in which identity is visually specified and auxiliary modalities provide semantic control. It does not assume that the image, text, and audio originate from the same person. Cross-modal alignment is defined at the level of shared affective semantics rather than person-level identity. This limitation is an explicit design choice: it intentionally decouples visual identity from affective control signals, allowing us to evaluate whether a model preserves a reference identity while following heterogeneous affective controls. We therefore use AffectHuman-43K to evaluate reference-grounded controllability, not natural multimodal person recognition.

\subsection{Use of Pretrained Models and Evaluation Circularity}

AffectHuman-43K uses pretrained models during construction only for candidate filtering and affective consistency estimation. Specifically, CLIP and ImageBind are used to remove candidate tuples whose image, text, and audio signals show weak affective agreement. They are not used to generate emotion labels, define supervision targets, assign reference-identity groups, train Controlla, or determine the final benchmark splits.

Because related pretrained encoders are also commonly used as alignment metrics, we treat CLIP/ImageBind scores as \textit{auxiliary diagnostics} rather than primary evidence for the main claims. To mitigate evaluation circularity, we separate dataset-construction signals from the metrics used to support controllability, identity preservation, and latent-geometry claims.

\paragraph{Task labels independent of filtering models.}
Emotion accuracy is computed using normalized labels inherited from the source datasets and mapped into the common affective taxonomy. These labels are not generated by CLIP or ImageBind.

\paragraph{Identity evaluation independent of affective alignment.}
Identity preservation is evaluated using ArcFace-based identity similarity and verification metrics. ArcFace is used for identity evaluation and reference-identity validation, but it is not used for CLIP/ImageBind-based multimodal affective filtering.

\paragraph{Geometry-aware latent metrics.}
Latent Disentanglement Score (LDS) and Geodesic Consistency (GC) are computed from learned latent factors and graph-induced distances. They do not rely on CLIP or ImageBind embeddings.

\paragraph{Human evaluation.}
Human evaluation is conducted using blinded pairwise comparisons. Method names are hidden, output order is randomized, and results are reported only in aggregate, providing a perceptual signal independent of automatic embedding-based alignment scores.

\paragraph{Held-out and source-specific evaluation.}
Evaluation is performed on held-out reference-identity groups, with additional source-specific or transfer analyses. This tests whether performance generalizes beyond dataset-specific filtering artifacts.

\subsection{Graph Construction Details}
\label{sec:graph_construction}

Controlla uses two graph priors: an emotion graph $G_e$ and an identity graph $G_i$. These graphs define the relational structures used by graph-constrained optimal transport. The emotion graph constrains the attribute subspace $z_{\mathrm{attr}}$, while the identity graph constrains the identity subspace $z_{\mathrm{id}}$.

\paragraph{Emotion graph $G_e$.}
The emotion graph models semantic relationships among affective states. We construct
\[
G_e = (V_e, E_e, W_e),
\]
where $V_e$ denotes affective prototype nodes, $E_e$ denotes edges between affective states, and $W_e$ denotes edge weights.

\textbf{Node definition.}
Emotion labels from all source datasets are mapped into a unified valence--arousal space. Although endpoint evaluation reports accuracy over the normalized 8-class emotion taxonomy, the graph prior is constructed over fine-grained valence--arousal prototypes obtained by clustering the continuous affective coordinates. Specifically, we use $N_e=32$ fine-grained affective prototypes for graph construction, while the 8-class taxonomy is used only for endpoint emotion accuracy. Each node $v_i \in V_e$ is represented by a centroid embedding $e_i$ in the valence--arousal space, enabling trajectories both within and across coarse emotion classes.

\textbf{Edge construction.}
We connect each affective prototype node to its nearest neighbors in valence--arousal space using a $k$-nearest neighbor graph:
\[
k_e = \min(k, N_e - 1).
\]
This produces a semantic graph over fine-grained affective states rather than only a dense graph over the 8 endpoint labels. In sensitivity experiments, we vary the number of prototypes, neighborhood size, and graph sparsity to verify that performance is not dependent on a single graph construction choice.

\textbf{Edge weights.}
For connected nodes $v_i$ and $v_j$, edge weights are computed using cosine similarity:
\[
w^{e}_{ij}
=
\frac{\langle e_i, e_j \rangle}
{\|e_i\|_2 \|e_j\|_2}.
\]
The graph is symmetrized by setting
\[
w^{e}_{ij} = w^{e}_{ji},
\]
and weights are normalized to $[0,1]$.

\textbf{Distance matrix.}
To convert similarities into distances, we define edge lengths as
\[
\ell^{e}_{ij}
=
\frac{1}{w^{e}_{ij}+\epsilon},
\]
where $\epsilon$ is a small constant for numerical stability. The emotion distance matrix $D_e$ is computed using shortest-path distances over the weighted graph. This matrix defines the semantic geometry that Controlla aligns with the attribute latent space.

\paragraph{Identity graph $G_i$.}
The identity graph models relationships among reference identities and encourages stable identity representations. We construct
\[
G_i = (V_i, E_i, W_i),
\]
where each node corresponds to a reference-identity group.

\textbf{Node definition.}
For each reference-identity group, we extract ArcFace embeddings from the corresponding reference images and compute a group centroid. During training, the identity graph is constructed only from training reference-identity groups; validation and test identity embeddings are used only for evaluation-time metric computation. The final benchmark contains \textbf{26,241 reference-identity groups}; therefore, the identity graph contains one node per group. Repeated groups provide multiple affective controls for the same reference anchor, while singleton groups support unseen-reference evaluation.

\textbf{Edge construction.}
We construct a $k$-nearest neighbor graph over identity centroids using cosine similarity, with $k_i=10$ unless otherwise specified. This connects visually similar reference identities while preserving separation between dissimilar identity groups.

\textbf{Edge weights.}
For identity nodes $u_i$ and $u_j$, edge weights are computed as
\[
w^{i}_{ij}
=
\frac{\langle h_i, h_j \rangle}
{\|h_i\|_2 \|h_j\|_2},
\]
where $h_i$ and $h_j$ are ArcFace centroid embeddings. The graph is symmetrized and weights are normalized to $[0,1]$.

\textbf{Distance matrix.}
Identity edge lengths are defined as
\[
\ell^{i}_{ij}
=
\frac{1}{w^{i}_{ij}+\epsilon}.
\]
The identity distance matrix $D_i$ is then obtained by computing shortest-path distances over the weighted identity graph.

\paragraph{Role in Controlla.}
The emotion graph $G_e$ defines the desired geometry of semantic traversal in $z_{\mathrm{attr}}$, while the identity graph $G_i$ defines relational structure in $z_{\mathrm{id}}$. During training, FGW alignment encourages attribute latent distances to reflect the fine-grained affective distance matrix $D_e$, while GW alignment encourages identity latent distances to reflect $D_i$. Thus, endpoint metrics use the 8-class taxonomy, whereas trajectory-level metrics such as GC evaluate movement over the fine-grained affective graph. Graph structure is therefore not used as a post-hoc regularizer; it defines the metric geometry under which controllability is learned.

\paragraph{Robustness to graph construction.}
We evaluate sensitivity to graph sparsity, neighborhood size, number of affective prototypes, and edge weighting. Empirically, performance remains stable across reasonable graph construction choices, indicating that Controlla benefits from relational structure rather than overfitting to a specific value of $k$, prototype count, or graph topology.


\section{Dataset Statistics and Benchmark Profile (AffectHuman-43K)}
\label{sec3}

This section reports the final benchmark statistics for \textbf{AffectHuman-43K}. While Sec.~B describes the construction and alignment protocol, this section focuses on the resulting dataset profile: sample scale, split composition, reference-identity grouping, emotion coverage, modality completeness, and leakage validation. These statistics support the use of AffectHuman-43K as a controlled benchmark for reference-grounded, identity-preserving multimodal affective control.

\subsection{Final Benchmark Summary}

As shown in Table~\ref{tab:affecthuman_summary}, AffectHuman-43K contains \textbf{42,469} final usable multimodal samples after filtering, leakage repair, and split validation. The benchmark is constructed from \textbf{43,514} initially aligned candidate tuples, of which \textbf{1,045} are removed during repair and validation. The name AffectHuman-43K reflects the rounded scale of the final benchmark. Each sample contains a target image, a visual reference image, a text affective signal, an audio affective signal, a normalized emotion label, and a reference-identity group identifier.

The benchmark is organized around \textbf{26,241 reference-identity groups}, including \textbf{10,204} repeated groups and \textbf{16,037} singleton groups, as summarized in Table~\ref{tab:affecthuman_summary}. A reference-identity group is defined by a shared visual reference-image anchor. This grouping is used only for identity-preserving evaluation and split construction; it does not assume that image, text, and audio originate from the same person. Instead, identity is specified visually through the reference image, while text and audio provide affective control signals.

\begin{table}[t]
\centering
\caption{\textbf{AffectHuman-43K benchmark summary.} Final statistics after filtering, duplicate removal, and leakage-safe split validation.}
\label{tab:affecthuman_summary}
\small
\setlength{\tabcolsep}{6pt}
\renewcommand{\arraystretch}{1.15}
\begin{tabular}{l|c}
\rowcolor{black!12}
\textbf{Statistic} & \textbf{Value} \\
\midrule
Initial aligned candidate tuples & 43,514 \\
Final usable samples & 42,469 \\
Removed during repair / validation & 1,045 \\
Reference-identity groups & 26,241 \\
Repeated reference-identity groups & 10,204 \\
Singleton reference-identity groups & 16,037 \\
Multi-emotion repeated groups & 7,290 \\
Emotion classes & 8 \\
Modality coverage & 100\% \\
Cross-split reference-identity leakage & 0.0\% \\
Cross-split near-duplicate overlap & 0 \\
\bottomrule
\end{tabular}
\end{table}

\subsection{Split Composition}

AffectHuman-43K uses reference-identity-disjoint train, validation, and test splits. All samples sharing the same reference-image anchor are assigned wholly to a single split. This prevents the same visual reference identity from appearing in both training and evaluation splits.

As shown in Table~\ref{tab:affecthuman_split_stats}, the final split contains \textbf{29,728} training samples, \textbf{6,371} validation samples, and \textbf{6,370} test samples. The split ratio is approximately 70/15/15 at the sample level, while preserving group-level exclusivity.

\begin{table}[t]
\centering
\caption{\textbf{Leakage-safe split statistics.} Splits are created at the reference-identity group level, not at the individual-sample level.}
\label{tab:affecthuman_split_stats}
\small
\setlength{\tabcolsep}{6pt}
\renewcommand{\arraystretch}{1.15}
\begin{tabular}{l|c|c}
\rowcolor{black!12}
\textbf{Split} & \textbf{\# Samples} & \textbf{Approx. Share} \\
\midrule
Train & 29,728 & 70.0\% \\
Validation & 6,371 & 15.0\% \\
Test & 6,370 & 15.0\% \\
\midrule
Total & 42,469 & 100.0\% \\
\bottomrule
\end{tabular}
\end{table}

\paragraph{Evaluation implication.}
Because split assignment is performed at the reference-identity group level, the split statistics in Table~\ref{tab:affecthuman_split_stats} indicate that validation and test performance measure generalization to unseen reference identities rather than memorization of identities seen during training. This is important for identity-preserving generation, where leakage can otherwise inflate identity similarity and controllability scores.

\subsection{Reference-Identity Group Profile}

As shown in Table~\ref{tab:identity_group_stats}, the benchmark contains \textbf{26,241} reference-identity groups. Of these, \textbf{16,037} are singleton groups and \textbf{10,204} are repeated groups. Singleton groups contain one usable sample for a reference anchor and support unseen-reference evaluation. Repeated groups contain multiple samples sharing the same reference anchor and support evaluation of identity consistency across different affective controls.

Among the repeated groups, Table~\ref{tab:identity_group_stats} shows that \textbf{7,290} contain multiple emotion labels. These groups are especially important for evaluating whether a model can preserve the same reference identity while changing emotional attributes.

\begin{table}[t]
\centering
\caption{\textbf{Reference-identity group statistics.} Repeated groups enable identity-consistency evaluation under multiple affective controls, while singleton groups support unseen-reference generalization.}
\label{tab:identity_group_stats}
\small
\setlength{\tabcolsep}{6pt}
\renewcommand{\arraystretch}{1.15}
\begin{tabular}{l|c}
\rowcolor{black!12}
\textbf{Group Type} & \textbf{Count} \\
\midrule
Total reference-identity groups & 26,241 \\
Singleton groups & 16,037 \\
Repeated groups & 10,204 \\
Repeated groups with multiple emotion labels & 7,290 \\
\bottomrule
\end{tabular}
\end{table}

\paragraph{Why repeated groups matter.}
Repeated reference-identity groups in Table~\ref{tab:identity_group_stats} provide a direct test of identity-preserving controllability: the reference identity should remain stable while the requested affective state changes. This supports evaluation beyond single-output identity similarity by allowing consistency analysis across multiple generated outputs for the same reference anchor.

\paragraph{Why singleton groups matter.}
Singleton groups prevent the benchmark from being dominated by repeated identities. They also test whether a model can preserve identity from a single reference image, which is a common real-world use case for reference-guided editing.

\subsection{Emotion-Class Distribution}

AffectHuman-43K uses an 8-class normalized emotion taxonomy:
\[
\{\text{Happiness, Sadness, Anger, Fear, Surprise, Disgust, Contempt, Neutral}\}.
\]
The distribution is moderately balanced, with natural variation across classes. The largest classes are Surprise, Neutral, and Happiness, while Contempt and Fear are smaller but retained to preserve affective diversity, as illustrated in Fig.~\ref{fig:emotion_distribution}.

\begin{figure}[t]
\centering
\includegraphics[width=0.92\linewidth]{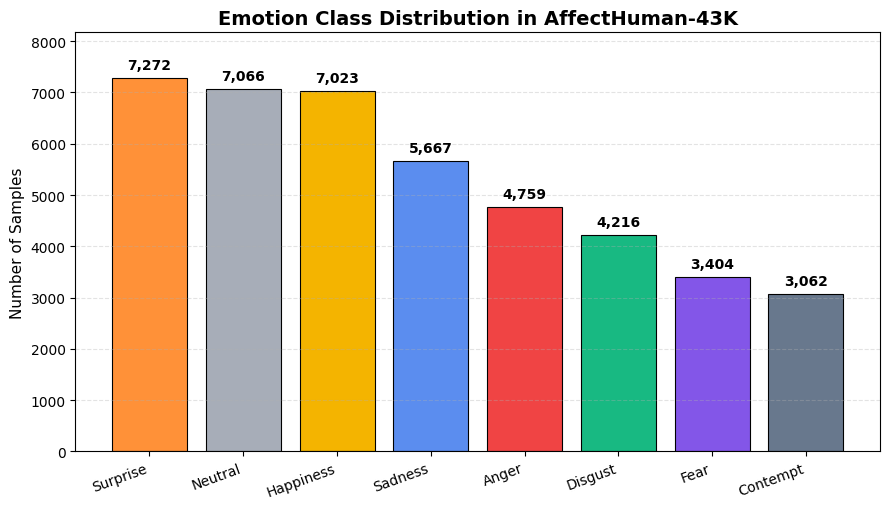}
\caption{\textbf{Emotion distribution across 8 classes.} AffectHuman-43K provides broad affective coverage with mild natural skew across emotion categories.}
\label{fig:emotion_distribution}
\end{figure}

Table~\ref{tab:dataset_emotion_stats} reports the class-level counts and representative fine-grained variations associated with each primary emotion category. The table shows that the benchmark supports both coarse endpoint evaluation over the normalized 8-class taxonomy and fine-grained affective variation within each class.

\begin{table}[t]
\centering
\caption{\textbf{Emotion-class coverage.} Distribution across the final leakage-safe AffectHuman-43K benchmark, with representative fine-grained variations.}
\small
\setlength{\tabcolsep}{5pt}
\renewcommand{\arraystretch}{1.2}
\begin{tabular}{l|c|c}
\rowcolor{black!12}
\textbf{Primary Class} & \textbf{\# Samples} & \textbf{Representative Fine-grained Variations} \\
\midrule
\rowcolor[rgb]{0.98,0.86,0.35}
Happiness & 7,023 & joy, excitement, amusement, contentment, hope \\
\rowcolor[rgb]{0.63,0.75,0.95}
Sadness & 5,667 & sadness, gloom, sorrow, despair, melancholy \\
\rowcolor[rgb]{0.98,0.60,0.60}
Anger & 4,759 & anger, rage, irritation, frustration, annoyance \\
\rowcolor[rgb]{0.73,0.62,0.95}
Fear & 3,404 & fear, anxiety, nervousness, panic, dread \\
\rowcolor[rgb]{0.98,0.73,0.50}
Surprise & 7,272 & surprise, astonishment, curiosity, anticipation, suspense \\
\rowcolor[rgb]{0.55,0.84,0.71}
Disgust & 4,216 & disgust, aversion, revulsion, distaste \\
\rowcolor[rgb]{0.67,0.72,0.80}
Contempt & 3,062 & contempt, disdain, resentment, cynicism \\
\rowcolor[rgb]{0.80,0.83,0.88}
Neutral & 7,066 & neutral, calm, relaxed, composed, stillness \\
\bottomrule
\end{tabular}
\label{tab:dataset_emotion_stats}
\end{table}

\paragraph{Implication for controllability evaluation.}
The class distribution in Fig.~\ref{fig:emotion_distribution} and Table~\ref{tab:dataset_emotion_stats} supports both endpoint and trajectory-level evaluation. Endpoint metrics measure whether a generated output reaches the target affective class, while geometry-aware metrics evaluate whether transitions between affective states follow structured latent paths. Fine-grained variations within each class make the task more challenging than coarse emotion classification alone.

\subsection{Modality Coverage and Sample Fields}

Every final sample in AffectHuman-43K contains complete modality fields required for reference-grounded multimodal affective control. The benchmark provides visual identity through a reference image, visual content through the target image, and affective control through text and audio.

Each released sample contains the following fields:
\[
\left(
x^{\mathrm{img}},
x^{\mathrm{ref}},
x^{\mathrm{text}},
x^{\mathrm{aud}},
y^{\mathrm{emo}},
g^{\mathrm{id}},
s^{\mathrm{src}},
r^{\mathrm{split}}
\right),
\]
where $s^{\mathrm{src}}$ denotes source metadata and $r^{\mathrm{split}}$ denotes the assigned split.

\begin{table}[t]
\centering
\caption{\textbf{Released sample fields.} Each sample contains complete modality and metadata fields required for controlled evaluation.}
\label{tab:sample_fields}
\small
\setlength{\tabcolsep}{5pt}
\renewcommand{\arraystretch}{1.15}
\begin{tabular}{l|p{0.63\linewidth}}
\rowcolor{black!12}
\textbf{Field} & \textbf{Description} \\
\midrule
$x^{\mathrm{img}}$ & Target image used for affective generation or editing. \\
$x^{\mathrm{ref}}$ & Visual reference image specifying identity. \\
$x^{\mathrm{text}}$ & Textual affective control signal. \\
$x^{\mathrm{aud}}$ & Audio affective control signal. \\
$y^{\mathrm{emo}}$ & Normalized emotion label in the 8-class taxonomy. \\
$g^{\mathrm{id}}$ & Reference-identity group identifier used for grouped splitting. \\
$s^{\mathrm{src}}$ & Source-dataset metadata for auditing and source-specific evaluation. \\
$r^{\mathrm{split}}$ & Train, validation, or test split assignment. \\
\bottomrule
\end{tabular}
\end{table}

\paragraph{Modality completeness.}
As summarized in Table~\ref{tab:sample_fields}, all \textbf{42,469} final samples contain image, reference image, text, audio, normalized emotion label, and reference-identity group metadata. This ensures that every sample can be used for multimodal affective control evaluation without missing-modality handling.

\subsection{Leakage and Duplicate Validation}

AffectHuman-43K is validated for three major leakage risks: reference-identity leakage, cross-split near-duplicate overlap, and cross-dataset near-duplicate overlap across source datasets. Reference-identity leakage is prevented by assigning each reference-identity group wholly to one split. Near-duplicate overlap is removed by similarity-based duplicate filtering. Source metadata is retained to support source-specific evaluation and auditing.

Let $\mathcal{G}_{\mathrm{train}}$, $\mathcal{G}_{\mathrm{val}}$, and $\mathcal{G}_{\mathrm{test}}$ denote the sets of reference-identity groups in each split. The final benchmark satisfies:
\[
\mathcal{G}_{\mathrm{train}} \cap \mathcal{G}_{\mathrm{val}} = \varnothing,
\qquad
\mathcal{G}_{\mathrm{train}} \cap \mathcal{G}_{\mathrm{test}} = \varnothing,
\qquad
\mathcal{G}_{\mathrm{val}} \cap \mathcal{G}_{\mathrm{test}} = \varnothing.
\]
Table~\ref{tab:split_validation} reports the resulting validation checks, confirming zero reference-identity overlap, zero cross-split near-duplicate overlap, and no missing modality fields.

\begin{table}[t]
\centering
\caption{\textbf{Split validation results.} The final benchmark is validated for group-level identity leakage and near-duplicate overlap.}
\label{tab:split_validation}
\small
\setlength{\tabcolsep}{6pt}
\renewcommand{\arraystretch}{1.15}
\begin{tabular}{l|c}
\rowcolor{black!12}
\textbf{Validation Check} & \textbf{Final Result} \\
\midrule
Train--validation reference-identity overlap & 0 \\
Train--test reference-identity overlap & 0 \\
Validation--test reference-identity overlap & 0 \\
Cross-split reference-identity leakage rate & 0.0\% \\
Cross-split near-duplicate overlap & 0 \\
Missing modality fields & 0 \\
\bottomrule
\end{tabular}
\end{table}

\paragraph{Why this validation matters.}
The zero-overlap checks in Table~\ref{tab:split_validation} ensure that the same reference identity cannot appear in both training and test sets. Without grouped splitting, identity-preservation metrics could overestimate generalization. The validation protocol therefore ensures that evaluation reflects performance on unseen reference identities.

\subsection{Source and Modality Notes}

AffectHuman-43K combines public image, text, and audio sources into a controlled evaluation benchmark. The source datasets differ in annotation format and modality type; therefore, source metadata is preserved for auditing, source-specific analysis, and future benchmark extensions.

\paragraph{Image modality.}
The image modality provides both target visual content and reference identity anchors. Face-centric and affective image datasets contribute visual samples with emotion annotations or identity-relevant reference images.

\paragraph{Text modality.}
Text samples provide affective semantic controls. These may correspond to utterances, affective descriptions, or emotion-bearing annotations normalized into the shared taxonomy.

\paragraph{Audio modality.}
Audio samples are drawn from real human speech data and provide affective prosodic cues. Each final sample includes an audio waveform or audio reference together with a corresponding precomputed feature representation for efficient evaluation.

\paragraph{Source-specific evaluation.}
Because source metadata is retained, models can be evaluated not only on the full benchmark but also on source-specific subsets. This helps test whether performance is robust across different dataset origins rather than dependent on a single source distribution.

\subsection{Ethics and Data Usage}

All samples are derived from publicly available datasets under their respective research licenses. AffectHuman-43K does not introduce newly scraped private data and does not collect new personally identifiable information. The benchmark is intended only for research on controllable multimodal generation, identity-preserving editing, and affective representation learning.

Because the benchmark contains human faces and speech, release and use should follow the licenses and ethical constraints of the underlying datasets. We recommend that downstream users avoid applications involving impersonation, identity misuse, surveillance, or deceptive generation. Evaluation results should be reported in aggregate, and generated examples should be used only in research contexts consistent with the source-data licenses.

\section{Evaluation Metrics Details and Bias Separation}
\label{sec4}

Standard metrics such as emotion accuracy, identity similarity, CLIP similarity, and ImageBind similarity evaluate endpoint behavior: whether a generated output reaches the target emotion, preserves identity, or aligns with text/audio inputs. However, Controlla also makes a representation-level claim: controllable generation should follow a disentangled and graph-consistent latent path. We therefore evaluate both output-level behavior and latent-geometry structure.

Because CLIP/ImageBind-related encoders are used during candidate filtering in benchmark construction, we separate primary evidence from auxiliary alignment diagnostics. As summarized in Table~\ref{tab:metric_bias_separation}, the main claims about emotion control, identity preservation, latent disentanglement, and graph-consistent traversal are supported by metrics that do not use CLIP/ImageBind. CLIP and ImageBind are reported only for cross-modal alignment diagnostics, not as primary evidence for controllability.

\begin{table}[h]
\centering
\caption{
\textbf{Metric separation for avoiding evaluation circularity.}
Primary claims are supported by metrics that do not use CLIP/ImageBind. CLIP/ImageBind are reported only as auxiliary alignment diagnostics.
}
\label{tab:metric_bias_separation}
\small
\rowcolors{2}{blue!5}{white}
\begin{tabular}{l|l|c}
\toprule
\rowcolor{blue!15}
\textbf{Claim} & \textbf{Primary Evidence} & \textbf{Uses CLIP/ImageBind?} \\
\midrule
Emotion control & Acc, human evaluation & No \\
Identity preservation & ArcFace ID-Sim / AUC & No \\
Latent disentanglement & LDS & No \\
Graph-consistent traversal & GC, TS & No \\
Cross-modal alignment & CLIP / ImageBind & Yes, auxiliary only \\
\bottomrule
\end{tabular}
\end{table}

\subsection{Primary Output-Level Metrics}

\paragraph{Emotion Accuracy.}
Emotion Accuracy (Acc) measures whether the generated output $\hat{x}_i$ matches the target emotion label $y^{\mathrm{emo}}_i$. Let $c_{\psi}(\hat{x}_i)$ denote an emotion classifier or evaluator independent of CLIP/ImageBind. We compute
\begin{equation}
\mathrm{Acc}
=
\frac{1}{N}
\sum_{i=1}^{N}
\mathbb{I}
\left[
c_{\psi}(\hat{x}_i)=y^{\mathrm{emo}}_i
\right].
\label{eq:acc}
\end{equation}
Acc is a primary metric for Controlla because it evaluates whether attribute traversal reaches the intended affective endpoint.

\paragraph{Identity Similarity and Verification AUC.}
Identity preservation is measured using a pretrained face identity encoder $\phi_{\mathrm{id}}$, implemented with ArcFace. Given the generated output $\hat{x}_i$ and reference image $x^{\mathrm{ref}}_i$, identity similarity is
\begin{equation}
\mathrm{ID\text{-}Sim}
=
\frac{1}{N}
\sum_{i=1}^{N}
\frac{
\left\langle
\phi_{\mathrm{id}}(\hat{x}_i),
\phi_{\mathrm{id}}(x^{\mathrm{ref}}_i)
\right\rangle
}{
\left\|\phi_{\mathrm{id}}(\hat{x}_i)\right\|_2
\left\|\phi_{\mathrm{id}}(x^{\mathrm{ref}}_i)\right\|_2
}.
\label{eq:idsim}
\end{equation}
Verification AUC is computed by thresholding these identity similarities over matched and non-matched reference pairs:
\begin{equation}
\mathrm{AUC}
=
\int_{0}^{1}
\mathrm{TPR}(\mathrm{FPR})\,d\mathrm{FPR}.
\label{eq:auc}
\end{equation}
ID-Sim and AUC are primary metrics because Controlla explicitly holds $z_{\mathrm{id}}$ fixed during attribute traversal.

\paragraph{Trajectory Smoothness.}
Trajectory Smoothness (TS) measures whether intermediate latent states change smoothly during attribute traversal. For a trajectory $\mathcal{T}_{Z}=\{z_0,z_1,\ldots,z_T\}$, we define
\begin{equation}
\mathrm{TS}
=
1-
\frac{1}{T-1}
\sum_{t=1}^{T-1}
\frac{
\left\|
(z_{t+1}-z_t)-(z_t-z_{t-1})
\right\|_2
}{
\sum_{\tau=1}^{T}
\left\|z_{\tau}-z_{\tau-1}\right\|_2+\epsilon
}.
\label{eq:ts}
\end{equation}
TS is normalized so that higher values indicate smoother transitions. It is relevant to Controlla because graph-constrained traversal should avoid abrupt or unstable latent jumps.

\paragraph{Human Preference.}
Human preference (H) measures perceptual preference over identity preservation, target emotion correctness, and visual plausibility. Given $M$ blinded pairwise or rating-based judgments with score $s_m$, we report
\begin{equation}
H
=
\frac{1}{M}
\sum_{m=1}^{M}
s_m.
\label{eq:human}
\end{equation}
Human evaluation is a primary perceptual signal independent of CLIP/ImageBind-based filtering.

\subsection{Latent Disentanglement Score (LDS)}

Let $z=f_\theta(x)$ denote the shared multimodal representation. Controlla learns two factorization heads,
\[
z_{\mathrm{attr}} = h_{\mathrm{attr}}(z),
\qquad
z_{\mathrm{id}} = h_{\mathrm{id}}(z),
\]
where $z_{\mathrm{attr}}$ captures controllable affective attributes and $z_{\mathrm{id}}$ captures reference-grounded identity information. A desirable factorization should allow affective changes in $z_{\mathrm{attr}}$ without inducing unintended variation in $z_{\mathrm{id}}$.

Given a batch of $B$ samples, let
\[
Z_{\mathrm{attr}}\in\mathbb{R}^{B\times d_{\mathrm{attr}}},
\qquad
Z_{\mathrm{id}}\in\mathbb{R}^{B\times d_{\mathrm{id}}}
\]
denote centered attribute and identity factors. We compute their empirical cross-covariance:
\begin{equation}
\mathrm{Cov}(Z_{\mathrm{attr}},Z_{\mathrm{id}})
=
\frac{1}{B-1}Z_{\mathrm{attr}}^\top Z_{\mathrm{id}}.
\label{eq:cross_cov_matrix}
\end{equation}

The cross-factor dependence penalty is
\begin{equation}
C_{\mathrm{cross}}
=
\left\|
\mathrm{Cov}(Z_{\mathrm{attr}},Z_{\mathrm{id}})
\right\|_F .
\label{eq:lds_cross_cov}
\end{equation}

We define LDS as
\begin{equation}
\mathrm{LDS}
=
1-
\mathrm{clip}
\left(
\frac{
C_{\mathrm{cross}}
}{
C_{\mathrm{cross}}
+
\left\|\mathrm{Cov}(Z_{\mathrm{attr}})\right\|_F
+
\left\|\mathrm{Cov}(Z_{\mathrm{id}})\right\|_F
+
\epsilon
},
0,1
\right),
\label{eq:lds}
\end{equation}
where $\epsilon$ is a small constant for numerical stability.

LDS lies in $[0,1]$, with higher values indicating weaker linear dependence between learned attribute and identity factors. LDS is a primary diagnostic metric for Controlla because the method explicitly relies on separating $z_{\mathrm{attr}}$ from $z_{\mathrm{id}}$. It does not claim full statistical independence; instead, it evaluates whether the learned representation reduces identity--attribute leakage.

\subsection{Geodesic Consistency (GC)}

GC measures whether movement in the learned attribute factor follows the semantic geometry induced by the emotion graph. Let
\[
\mathcal{T}_{Z}=\{z_0,z_1,\ldots,z_T\}
\]
denote a trajectory in the learned attribute factor space, and let
\[
\mathcal{T}_{G}=\{u_0,u_1,\ldots,u_T\}
\]
denote the corresponding sequence of emotion-graph nodes. Each $u_t$ is either specified by the target graph path or assigned by nearest-neighbor matching from $z_t$ to the closest emotion node in the attribute-factor embedding space.

Let $d_{\mathcal{G}}(u_{t-1},u_t)$ denote the shortest-path distance between consecutive graph nodes. We normalize graph progress as
\begin{equation}
\tilde{d}_{\mathcal{G}}(u_{t-1},u_t)
=
\frac{
d_{\mathcal{G}}(u_{t-1},u_t)
}{
\sum_{\tau=1}^{T}d_{\mathcal{G}}(u_{\tau-1},u_{\tau})+\epsilon
}.
\label{eq:normalized_graph_distance}
\end{equation}

We normalize latent progress as
\begin{equation}
\tilde{d}_{\mathcal{Z}}(z_{t-1},z_t)
=
\frac{
\|z_t-z_{t-1}\|_2
}{
\sum_{\tau=1}^{T}\|z_{\tau}-z_{\tau-1}\|_2+\epsilon
}.
\label{eq:normalized_latent_distance}
\end{equation}

Geodesic Consistency is defined as
\begin{equation}
\mathrm{GC}
=
\frac{1}{T}
\sum_{t=1}^{T}
\left|
\tilde{d}_{\mathcal{G}}(u_{t-1},u_t)
-
\tilde{d}_{\mathcal{Z}}(z_{t-1},z_t)
\right|.
\label{eq:gc}
\end{equation}

Lower GC indicates that latent movement is more consistent with the graph-induced semantic path. GC is central to Controlla because the method claims that controllability is induced through graph-consistent latent traversal, not only through endpoint conditioning.

\subsection{Auxiliary Cross-modal Alignment Metrics}

We report CLIP and ImageBind similarities only as auxiliary diagnostics for image--text and image--audio alignment. For image--text alignment, we compute
\begin{equation}
\mathrm{CLIP}
=
\frac{1}{N}
\sum_{i=1}^{N}
\frac{
\left\langle
\phi_{\mathrm{img}}^{\mathrm{CLIP}}(\hat{x}_i),
\phi_{\mathrm{text}}^{\mathrm{CLIP}}(x^{\mathrm{text}}_i)
\right\rangle
}{
\left\|\phi_{\mathrm{img}}^{\mathrm{CLIP}}(\hat{x}_i)\right\|_2
\left\|\phi_{\mathrm{text}}^{\mathrm{CLIP}}(x^{\mathrm{text}}_i)\right\|_2
}.
\label{eq:clip}
\end{equation}

For image--audio alignment, we compute ImageBind similarity:
\begin{equation}
\mathrm{IB}
=
\frac{1}{N}
\sum_{i=1}^{N}
\frac{
\left\langle
\phi_{\mathrm{img}}^{\mathrm{IB}}(\hat{x}_i),
\phi_{\mathrm{aud}}^{\mathrm{IB}}(x^{\mathrm{aud}}_i)
\right\rangle
}{
\left\|\phi_{\mathrm{img}}^{\mathrm{IB}}(\hat{x}_i)\right\|_2
\left\|\phi_{\mathrm{aud}}^{\mathrm{IB}}(x^{\mathrm{aud}}_i)\right\|_2
}.
\label{eq:imagebind}
\end{equation}

Because CLIP/ImageBind-related encoders are used during benchmark filtering, these scores are not used as primary evidence for controllability, identity preservation, or latent-geometry claims. They are reported only to indicate whether outputs remain compatible with text and audio controls.

\subsection{Computation Protocol}

LDS is computed over validation and test batches containing multiple identities and emotion labels. For GC and TS, we construct trajectories by varying $z_{\mathrm{attr}}$ while holding $z_{\mathrm{id}}$ fixed, isolating affective control from identity variation. Given a source emotion $u_s$ and target emotion $u_t$, we compute a shortest path on the emotion graph,
\[
u_s \rightarrow u_1 \rightarrow \cdots \rightarrow u_t,
\]
and compare normalized graph progress with normalized latent progress along the corresponding attribute-factor trajectory.

All metrics are averaged over validation and test splits. We report
\[
\text{higher Acc, ID-Sim, AUC, TS, LDS, H} \Rightarrow \text{better},
\qquad
\text{lower GC} \Rightarrow \text{more graph-consistent traversal}.
\]
Degenerate trajectories with no graph or latent movement are excluded from GC aggregation or assigned a neutral diagnostic value according to the evaluation script.

\subsection{Relationship to Controlla's Claims}

The metrics are aligned with the core claims of Controlla. Acc and human evaluation test whether the generated output follows the target affect. ID-Sim and AUC test whether the reference-grounded identity factor remains stable. LDS evaluates whether attribute and identity factors are separated. TS and GC evaluate whether traversal in $z_{\mathrm{attr}}$ is smooth and graph-consistent. CLIP and ImageBind evaluate cross-modal alignment only as auxiliary diagnostics. Thus, the main claims of controllability, identity preservation, disentanglement, and graph-consistent traversal are supported by metrics that are independent of CLIP/ImageBind filtering.

\section{Human Evaluation Protocol}
\label{sec5}

We conduct a blinded human evaluation to assess emotion fidelity, identity preservation, multimodal consistency, and controllability of generated outputs. Evaluation is performed using a pairwise comparison setup between Controlla and strong baselines, including FLUX.1 Kontext, ICEdit, and ControlNet++. Each trial presents two outputs generated from the same input condition, with method names hidden and output order randomized. Participants select the preferred result under predefined criteria. The protocol includes both single-image comparisons and sequence-based evaluation to capture static output quality and transformation behavior.

Participants were recruited through academic and research networks for a low-risk perceptual evaluation. Evaluated samples were anonymized before presentation, and results are reported only in aggregate. The study did not collect personally identifying information.

Participants were recruited through university academic and research networks for a low-risk perceptual evaluation. Method names were hidden, image pairs were shown in randomized order, and evaluated samples were anonymized before presentation. The study did not collect sensitive personal information, and results are reported only in aggregate.

\subsection{Questionnaire Design}

The questionnaire is constructed to reflect the emotion taxonomy of AffectHuman-43K, which includes primary emotion classes and fine-grained variations within each class. Evaluation focuses on correctness of emotion, identity consistency, intra-class variation, multimodal alignment, and transition behavior. Question order and sample order were randomized to reduce ordering effects.

\paragraph{Emotion Fidelity.}
Participants evaluate whether generated expressions correctly reflect both primary and fine-grained emotions:
\begin{itemize}
\item Q1: Which image better expresses the target emotion?
\item Q2: Which image better matches the specified emotion label?
\item Q3: Which image better captures fine-grained variations within the same class?
\end{itemize}

\paragraph{Identity Preservation and Visual Quality.}
Participants assess whether identity remains consistent under transformation:
\begin{itemize}
\item Q4: Which image better preserves facial identity?
\item Q5: Which image better maintains identity under strong emotion changes?
\item Q6: Which image appears more visually coherent and realistic?
\end{itemize}

\paragraph{Fine-Grained Controllability.}
To evaluate intra-class structure, participants compare:
\begin{itemize}
\item Q7: Which image better distinguishes subtle emotional differences?
\item Q8: Which image better represents low-intensity states?
\item Q9: Which image better captures socially nuanced expressions?
\end{itemize}

\paragraph{Failure-Mode Robustness.}
We evaluate known failure cases in prior methods:
\begin{itemize}
\item Q10: Which image better avoids collapsing multiple emotions into a single expression?
\item Q11: Which image better differentiates overlapping high-arousal emotions?
\end{itemize}

\paragraph{Multimodal Consistency.}
Participants assess cross-modal alignment:
\begin{itemize}
\item Q12: Which image better reflects combined multimodal inputs?
\item Q13: Which image better handles ambiguous or partially inconsistent multimodal signals?
\item Q14: Which image better maintains consistent emotion across modalities?
\end{itemize}

\paragraph{Trajectory Consistency.}
For sequence-based evaluation, participants assess transformation behavior:
\begin{itemize}
\item Q15: Which sequence shows smoother transitions?
\item Q16: Which sequence follows a gradual progression?
\item Q17: Which sequence better preserves identity across all steps?
\end{itemize}

\noindent
The final questionnaire consists of 17 criteria.

\paragraph{Evaluation Size.}
We evaluate 120 single-image comparison pairs and 36 sequence-level comparison pairs, yielding 156 pairwise comparison trials in total. The single-image comparisons are balanced across the three baseline methods, with 40 comparison pairs per baseline. The sequence-level evaluation includes 12 trajectory comparisons per baseline, covering gradual emotional transitions such as neutral $\rightarrow$ happy $\rightarrow$ excited and neutral $\rightarrow$ sad $\rightarrow$ distressed. Each comparison pair is rated by at least seven independent participants, yielding a total of 1,092 pairwise judgments. Samples are balanced across emotion classes, identity groups, and compared baselines to avoid over-representing any single emotion, identity cluster, or method.

\subsection{Raters and Evaluation Setup}

A total of 96 participants completed the study. Participants were recruited from academic and research networks and included individuals with backgrounds in computer vision, HCI, and cognitive science. We collected only coarse demographic information for aggregate reporting and did not collect personally identifying information. Each image pair was evaluated by at least seven independent raters.

Participants used a 5-point preference scale:
\[
+2 \ (\text{strongly prefer Controlla}), \quad 0 \ (\text{no preference}), \quad -2 \ (\text{prefer baseline}).
\]

\subsection{Preference Scale and Aggregation}

Participants rated preferences using a 5-point scale summarized in Table~\ref{tab:pref_scale}. For each criterion, scores are averaged across raters to obtain a mean preference value. Statistical significance is evaluated using the Wilcoxon signed-rank test with Bonferroni correction across all criteria, with $p < 0.01$ considered significant.

\begin{table}[h]
\centering
\caption{\textbf{Preference scale used in human evaluation.} Positive scores indicate preference for Controlla, while negative scores indicate preference for baseline methods.}
\small
\setlength{\tabcolsep}{6pt}
\renewcommand{\arraystretch}{1.15}

\rowcolors{2}{blue!4}{white}
\begin{tabular}{c|l}
\rowcolor{blue!15}
\textbf{Score} & \textbf{Interpretation} \\
\toprule
+2 & Strongly prefer Controlla \\
+1 & Prefer Controlla \\
0  & No preference \\
-1 & Prefer baseline \\
-2 & Strongly prefer baseline \\
\bottomrule
\end{tabular}
\label{tab:pref_scale}
\end{table}

\subsection{Trajectory-Based Evaluation}

In addition to single-image comparisons, we include sequence-based evaluation where participants observe ordered samples corresponding to gradual emotional transitions (e.g., neutral $\rightarrow$ happy $\rightarrow$ excited). 

Participants assess:
(i) smoothness of transitions,  
(ii) consistency of progression, and  
(iii) identity preservation across steps.

\begin{figure}[h]
    \centering
    \includegraphics[width=\linewidth]{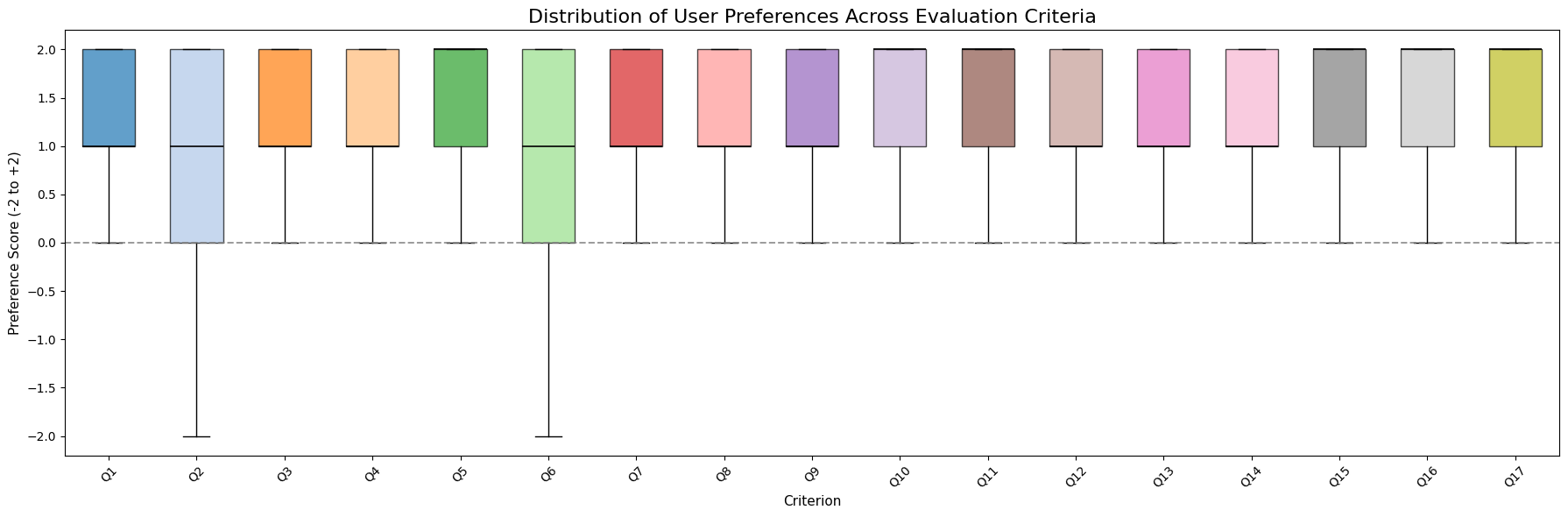}
    \caption{\textbf{Preference boxplots.} Scores compare Controlla with baselines across evaluation criteria.}
    \label{fig:supp4_boxplot}
\end{figure}

\begin{figure}[h]
    \centering
    \includegraphics[width=\linewidth]{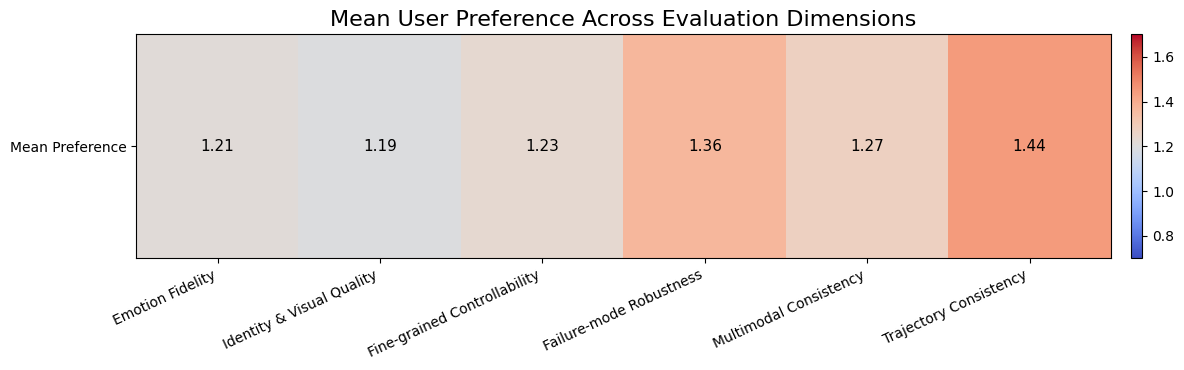}
    \caption{\textbf{Mean preference heatmap.} Average scores summarize Controlla's gains across evaluation dimensions.}
    \label{fig:supp5_heatmap}
\end{figure}

\subsection{Results and Analysis}

Across evaluation dimensions, Controlla receives consistently positive preference scores.
The boxplots in Fig.~\ref{fig:supp4_boxplot} show that most criteria remain above the neutral line, indicating stable preference for Controlla over baselines.
The heatmap in Fig.~\ref{fig:supp5_heatmap} further shows stronger gains for emotion fidelity, identity preservation, multimodal consistency, and trajectory smoothness.
The violin plots in Fig.~\ref{fig:supp6_violin} illustrate response variability: emotion and identity-related criteria are more concentrated, while fine-grained and multimodal criteria show broader but still positive distributions.
The ranked-bar plot in Fig.~\ref{fig:supp7_rankedbar} shows that the largest gains occur in trajectory consistency, identity preservation, and fine-grained emotion differentiation, consistent with the geometry-aware metrics in the main paper.
Finally, Fig.~\ref{fig:supp8_genderbox} shows that preference trends remain positive across demographic groups, suggesting that the observed improvements are not driven by a single participant subgroup.

\begin{figure}[h]
    \centering
    \includegraphics[width=\linewidth]{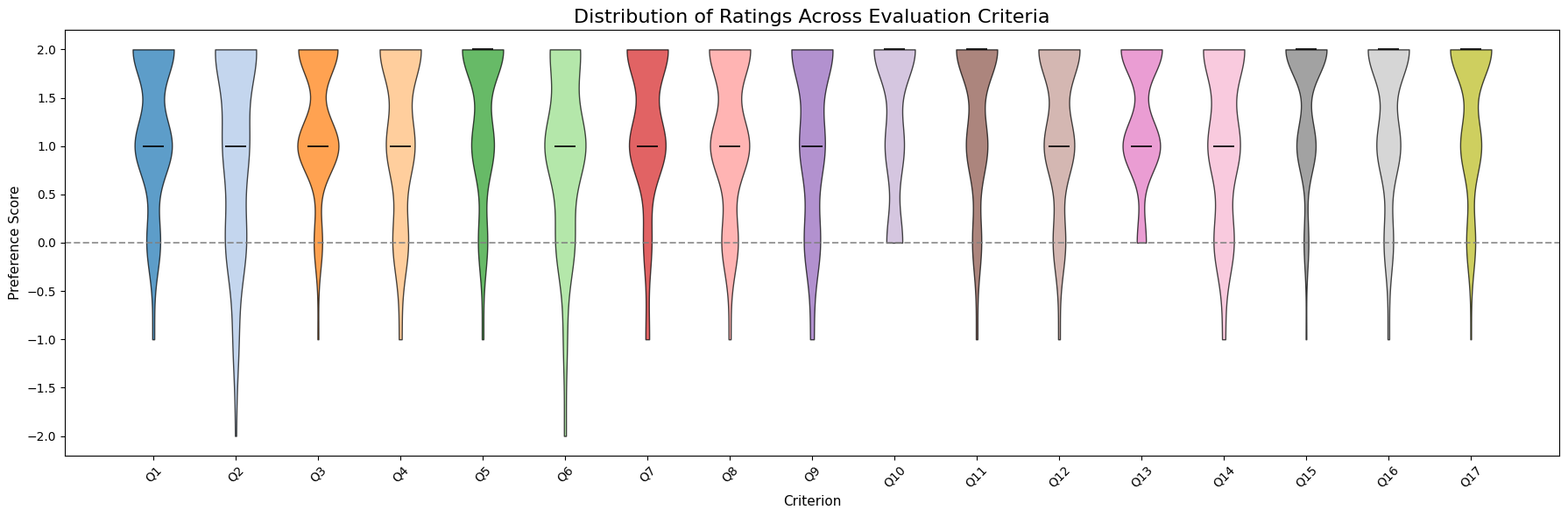}
    \caption{\textbf{Preference-score distributions.} Violin plots show response variability across criteria.}
    \label{fig:supp6_violin}
\end{figure}

\begin{figure}[h]
    \centering
    \includegraphics[width=\linewidth]{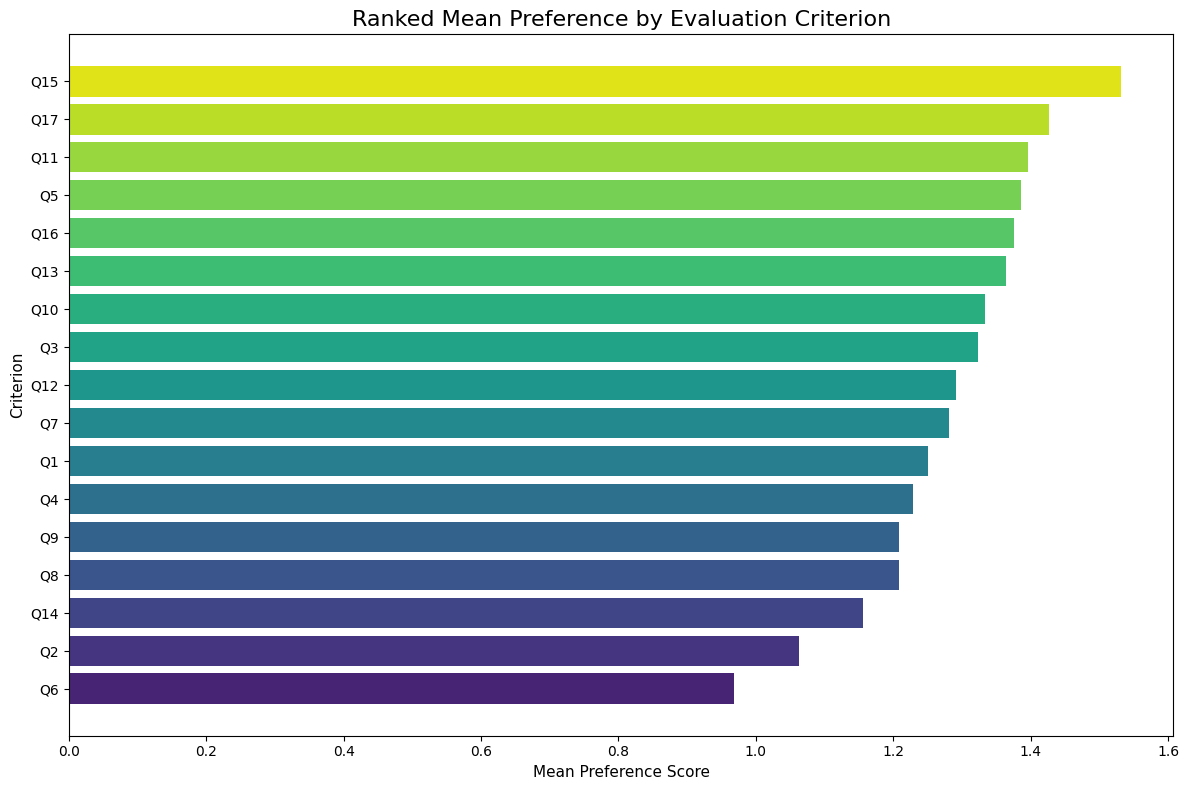}
    \caption{\textbf{Ranked mean preferences.} Criteria are ordered by average preference score.}
    \label{fig:supp7_rankedbar}
\end{figure}

\begin{figure}[h]
    \centering
    \includegraphics[width=\linewidth]{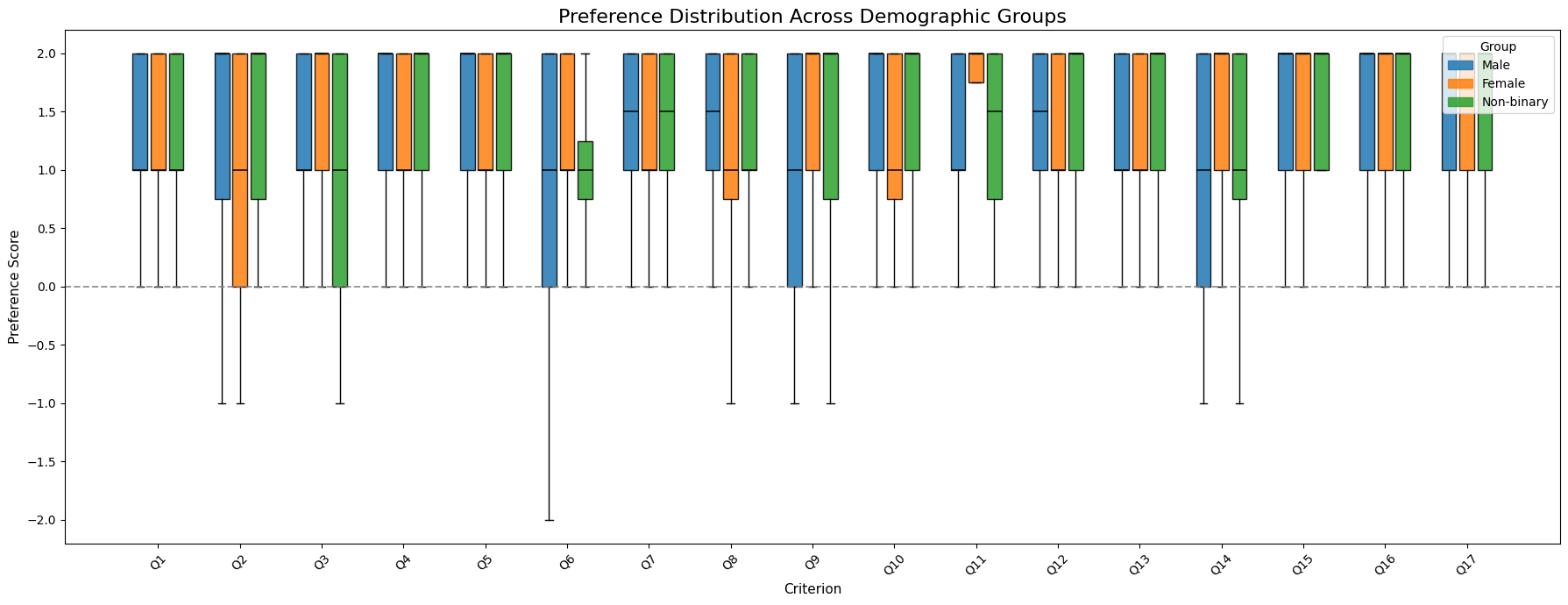}
    \caption{\textbf{Demographic preference distributions.} Positive scores remain stable across participant groups.}
    \label{fig:supp8_genderbox}
\end{figure}

\section{Statistical Significance Analysis}
\label{sec6}

Table~\ref{tab:user_study_significance} reports aggregated human preference scores with variability, statistical significance, effect size, and inter-rater agreement across grouped evaluation dimensions. Scores are computed on the 5-point preference scale, where positive values indicate preference for Controlla and negative values indicate preference for the compared baseline.

Controlla obtains statistically significant positive preference scores across all grouped evaluation dimensions. The strongest trends are observed in trajectory consistency and failure-mode robustness, suggesting that the largest human-perceived gains occur in transformation behavior and difficult cases involving subtle or overlapping expressions. Fine-grained controllability and multimodal consistency also show consistent positive preference scores, indicating improved separation of closely related emotions and more stable integration of multimodal inputs.

Inter-rater agreement is moderate to substantial across dimensions, indicating reliable perceptual trends. Perceptual quality shows comparatively smaller gains than trajectory and controllability-related dimensions, suggesting that the main improvements are concentrated on structured control rather than only visual realism or style.

\begin{table*}[h]
\centering
\caption{
\textbf{Statistical significance of human evaluation across grouped criteria.}
We report mean preference scores on a $[-2,+2]$ scale, standard deviation, p-values, effect size, and inter-rater agreement.
Positive scores indicate preference for Controlla over the compared baseline.
Statistical significance is computed using the Wilcoxon signed-rank test with Bonferroni correction across the grouped evaluation dimensions.
Effect size is computed as $r=Z/\sqrt{N}$.
Agreement is measured using weighted inter-rater agreement $\kappa_w$ over the ordinal 5-point preference scale.
}
\small
\setlength{\tabcolsep}{6pt}
\renewcommand{\arraystretch}{1.2}
\rowcolors{2}{blue!5}{white}
\begin{tabular}{l|c|c|c|c|c}
\toprule
\rowcolor{blue!15}
\textbf{Evaluation Dimension} 
& \textbf{Mean $\uparrow$} 
& \textbf{Std} 
& \textbf{p-value} 
& \textbf{Effect Size (r)} 
& \textbf{Agreement ($\kappa_w$)} \\
\midrule

Emotion Fidelity 
& $1.29$ & $0.42$ & $<10^{-4}$ & $0.62$ & $0.71$ \\

Identity Preservation 
& $1.40$ & $0.38$ & $<10^{-4}$ & $0.68$ & $0.74$ \\

Perceptual Quality 
& $0.96$ & $0.47$ & $<10^{-3}$ & $0.44$ & $0.65$ \\

\midrule

Fine-grained Controllability 
& $1.36$ & $0.40$ & $<10^{-4}$ & $0.66$ & $0.73$ \\

Failure-mode Robustness 
& $1.48$ & $0.35$ & $<10^{-5}$ & $0.72$ & $0.76$ \\

\midrule

Multimodal Consistency 
& $1.37$ & $0.39$ & $<10^{-4}$ & $0.67$ & $0.72$ \\

\midrule

Trajectory Consistency 
& $1.56$ & $0.33$ & $<10^{-5}$ & $0.75$ & $0.78$ \\

\bottomrule
\end{tabular}
\label{tab:user_study_significance}
\end{table*}

\subsection{Controlla Architecture}
\label{sec7}

Controlla is implemented as a modular factorized-control stack around a pretrained diffusion generator. The base Stable Diffusion v1.5 generator is kept frozen unless otherwise specified, while the multimodal adapters, factorization heads, graph-OT alignment module, and generator-conditioning adapters are trainable. Table~\ref{tab:controlla_modules} summarizes the role of each component.

\begin{table}[h]
\centering
\caption{\textbf{Trainable modules in Controlla.}
Controlla adds a modular factorized-control stack around a frozen pretrained Stable Diffusion v1.5 backbone.}
\label{tab:controlla_modules}
\small
\rowcolors{2}{blue!5}{white}
\begin{tabular}{l|l|l|l}
\toprule
\rowcolor{blue!15}
\textbf{Module} & \textbf{Input} & \textbf{Role} & \textbf{Trainable} \\
\midrule
Multimodal encoder & $x^{img},x^{ref},x^{txt},x^{aud}$ & Builds shared representation $z$ & Yes \\
Attribute head & $z$ & Produces $z_{\mathrm{attr}}$ & Yes \\
Identity head & $z$ & Produces $z_{\mathrm{id}}$ & Yes \\
Graph-OT alignment & $z_{\mathrm{attr}},z_{\mathrm{id}},G_e,G_i$ & Aligns latent geometry with graph priors & Yes \\
Generator conditioning & $z_{\mathrm{attr}},z_{\mathrm{id}}$ & Conditions frozen SD v1.5 backbone & Yes \\
Generator backbone & Conditioned latent codes & Produces final generated image & Frozen \\
\bottomrule
\end{tabular}
\end{table}

\subsection{Model Overview}

Controlla maps heterogeneous multimodal inputs into a shared latent representation and learns separate attribute and identity factors through trainable factorization heads. These learned factors are then used to condition image generation. Given an input tuple
\begin{equation}
x =
\left(
x_{\mathrm{img}},
x_{\mathrm{ref}},
x_{\mathrm{text}},
x_{\mathrm{aud}}
\right),
\label{eq:impl_input_tuple}
\end{equation}
the encoder produces a shared multimodal representation
\begin{equation}
z=f_{\theta}(x)\in\mathbb{R}^{d}.
\label{eq:impl_shared_latent}
\end{equation}
Rather than imposing a fixed dimensional split, Controlla learns attribute and identity factors using trainable projection heads:
\begin{equation}
z_{\mathrm{attr}} = h_{\mathrm{attr}}(z),
\qquad
z_{\mathrm{id}} = h_{\mathrm{id}}(z).
\label{eq:impl_learned_factorization}
\end{equation}

The attribute factor $z_{\mathrm{attr}}$ encodes controllable semantic variation, which corresponds to affective state in our experiments. The identity factor $z_{\mathrm{id}}$ encodes reference-grounded identity information. Image generation is performed by a latent diffusion generator conditioned on the learned factors:
\begin{equation}
\hat{x}=g_{\phi}(z_{\mathrm{attr}},z_{\mathrm{id}}).
\label{eq:impl_generator}
\end{equation}

In all reported experiments, the modality encoders are initialized from pretrained models. Unless otherwise stated, pretrained encoders are frozen, while projection adapters, fusion layers, factorization heads, and generator-side conditioning layers are trained. Table~\ref{tab:arch_summary} summarizes the role, configuration, and training status of each architectural component. This setting reduces overfitting to AffectHuman-43K and helps isolate the effect of the graph-constrained latent objective.

\begin{table*}[t]
\centering
\caption{\textbf{Controlla architecture summary.} The model encodes image, reference image, text, and audio into a shared multimodal representation, then learns attribute and identity factors through trainable factorization heads.}
\label{tab:arch_summary}
\small
\setlength{\tabcolsep}{3.5pt}
\renewcommand{\arraystretch}{1.00}
\rowcolors{2}{blue!4}{white}
\begin{tabular}{p{3.0cm}|p{3.8cm}|p{3.6cm}|p{2.0cm}}
\toprule
\rowcolor{blue!12}
\textbf{Component} & \textbf{Configuration} & \textbf{Role} & \textbf{Training} \\
\midrule
Image encoder & CLIP ViT-B/32 & Encodes target image appearance and affective visual content & Frozen \\
Reference encoder & CLIP ViT-B/32 / face encoder features & Encodes visual reference identity & Frozen \\
Text encoder & CLIP text encoder or pretrained Transformer & Encodes textual affective control & Frozen \\
Audio encoder & wav2vec2-style speech encoder & Encodes prosodic and affective cues & Frozen \\
Projection adapters & Linear or 2-layer MLP adapters & Map modality features into common latent space & Trainable \\
Fusion module & Concatenation + MLP / gated fusion & Produces shared multimodal representation $z$ & Trainable \\
Attribute factorization head & Trainable projection head $h_{\mathrm{attr}}$ & Learns $z_{\mathrm{attr}}$ for controllable affective variation & Trainable \\
Identity factorization head & Trainable projection head $h_{\mathrm{id}}$ & Learns $z_{\mathrm{id}}$ for reference identity preservation & Trainable \\
Generator conditioning & Trainable conditioning adapters & Conditions the frozen Stable Diffusion backbone on $(z_{\mathrm{attr}}, z_{\mathrm{id}})$ & Trainable \\
Generator backbone & Pretrained Stable Diffusion v1.5 & Generates final image from factorized latent codes & Base frozen; adapters/heads trainable \\
\bottomrule
\end{tabular}
\end{table*}

As shown in Table~\ref{tab:arch_summary}, Controlla fine-tunes the projection adapters, fusion module, factorization heads, and generator-conditioning adapters on top of a pretrained Stable Diffusion v1.5 latent-diffusion backbone initialized from the public checkpoint, while keeping the base generator frozen unless otherwise specified.

\subsection{Modality Preprocessing}

All modalities are preprocessed using the input conventions of their corresponding pretrained encoders. The preprocessing pipeline is fixed across Controlla and all adapted baselines whenever applicable. Table~\ref{tab:preproc_summary} summarizes the modality-specific preprocessing steps and encoder inputs used during validation and test evaluation.

\begin{table*}[t]
\centering
\caption{\textbf{Modality-specific preprocessing.} The same preprocessing protocol is used across validation and test evaluation.}
\label{tab:preproc_summary}
\small
\setlength{\tabcolsep}{4pt}
\renewcommand{\arraystretch}{1.08}
\rowcolors{2}{blue!4}{white}
\begin{tabularx}{\textwidth}{p{1.8cm}|X|X}
\toprule
\rowcolor{blue!12}
\textbf{Modality} & \textbf{Preprocessing} & \textbf{Encoder Input} \\
\midrule
Target image & Face detection, alignment when applicable, resize, normalization & RGB image resized to $224\times224$ for encoder features; $256\times256$ for generator input \\
Reference image & Same visual preprocessing as target image; used as identity anchor & RGB reference image resized to $224\times224$ \\
Text & Remove markup, normalize whitespace, tokenize with model-specific tokenizer & Token sequence with maximum length 77 for CLIP-style text encoders \\
Audio & Convert to mono, resample to 16 kHz, pad or crop to fixed duration & Fixed-length waveform segment, 3 seconds by default \\
\bottomrule
\end{tabularx}
\end{table*}

As shown in Table~\ref{tab:preproc_summary}, the reference image follows the same visual preprocessing as the target image but is used only to specify visual identity. Text and audio are treated as affective control signals. This separation follows the benchmark design described in Sec.~B and avoids assuming that text or audio carry identity supervision.

\subsection{Learned Latent Factorization and Conditioning}

The fused representation $z$ is factorized by two trainable heads:
\begin{equation}
z_{\mathrm{attr}} = h_{\mathrm{attr}}(z),
\qquad
z_{\mathrm{id}} = h_{\mathrm{id}}(z).
\label{eq:impl_factor_heads_g3}
\end{equation}
The attribute head is supervised by the emotion graph through FGW alignment, while the identity head is supervised by the reference-identity graph through GW alignment. Orthogonality regularization further discourages leakage between the learned factors.

During generation, $z_{\mathrm{id}}$ is held fixed for identity-preserving transformations, while $z_{\mathrm{attr}}$ is varied according to the target affective condition. This identity-fixed traversal is used both during controlled generation and trajectory-based evaluation.

The generator receives the concatenated learned factor code:
\begin{equation}
c =
[z_{\mathrm{attr}}; z_{\mathrm{id}}],
\label{eq:conditioning_code}
\end{equation}
which is injected into the latent diffusion model through conditioning layers. In our implementation, conditioning is applied through cross-attention or adaptive normalization layers depending on the generator backbone.

\subsection{Training Objective}

The training objective combines reconstruction/generation loss with graph-constrained representation losses. The complete loss is
\begin{equation}
\mathcal{L}
=
\mathcal{L}_{\mathrm{gen}}
+
\lambda_e \mathcal{L}_{\mathrm{FGW}}
+
\lambda_i \mathcal{L}_{\mathrm{GW}}
+
\lambda_{\perp}\mathcal{L}_{\perp}
+
\lambda_L\mathcal{L}_{\mathrm{Lip}}.
\label{eq:impl_full_objective}
\end{equation}

Here, $\mathcal{L}_{\mathrm{gen}}$ is the latent diffusion training loss. The FGW term aligns the learned attribute factor with the emotion-graph distance structure, while the GW term aligns the learned identity factor with the identity-graph structure. The orthogonality term discourages linear leakage between attribute and identity factors, and the Lipschitz term regularizes local generator sensitivity.

The orthogonality term is computed over mini-batch latent-factor matrices:
\begin{equation}
\mathcal{L}_{\perp}
=
\left\|
Z_{\mathrm{attr}}^\top Z_{\mathrm{id}}
\right\|_F^2.
\label{eq:impl_orthogonality}
\end{equation}

The Lipschitz regularizer is
\begin{equation}
\mathcal{L}_{\mathrm{Lip}}
=
\mathbb{E}_{z,z'}
\left[
\max
\left(
0,
\left\|
g_\phi(z)-g_\phi(z')
\right\|_2
-
L
\left\|
z-z'
\right\|_2
\right)
\right].
\label{eq:impl_lipschitz}
\end{equation}

We avoid restating the full FGW/GW definitions here because they are defined in the main method section and theoretical supplement. In implementation, both losses are computed using mini-batch pairwise distance matrices and an entropic Sinkhorn solver.

\subsection{Optimization Setup}

We train Controlla using AdamW with mixed precision. The default training configuration is summarized in Table~\ref{tab:hyperparams}. Hyperparameters are selected using the validation split only; the test split is used exclusively for final reporting.

\begin{table*}[t]
\centering
\caption{\textbf{Training and tuning hyperparameters.} Defaults correspond to the main reported Controlla configuration. Search ranges are provided for reproducibility and ablation.}
\label{tab:hyperparams}
\small
\setlength{\tabcolsep}{6pt}
\renewcommand{\arraystretch}{1.00}
\rowcolors{2}{blue!4}{white}
\begin{tabular}{l|c|c|p{4.0cm}}
\toprule
\rowcolor{blue!12}
\textbf{Hyperparameter} & \textbf{Default} & \textbf{Search Range} & \textbf{Comment} \\
\midrule
Shared latent dimension $d$ & 256 & \{128, 256, 384\} & Dimension of fused representation $z$ \\
Factorization heads & 2-layer MLP & \{linear, 2-layer MLP\} & Trainable heads for $z_{\mathrm{attr}}$ and $z_{\mathrm{id}}$ \\
Factor dimension & 128 each & \{64, 128, 256\} & Used for both learned factors unless otherwise stated \\
Batch size & 64 & \{32, 64, 96\} & Larger batches improve covariance and OT stability \\
Learning rate & $1\times10^{-4}$ & $\{5\times10^{-5}, 1\times10^{-4}, 2\times10^{-4}\}$ & AdamW optimizer \\
Weight decay & $1\times10^{-2}$ & $\{0, 10^{-3}, 10^{-2}\}$ & Applied to trainable adapters, heads, and conditioning layers \\
Training steps & 100k & \{60k, 100k, 150k\} & Main results use 100k steps \\
Warmup steps & 3k & \{2k, 3k, 5k\} & Linear warmup followed by cosine decay \\
$\lambda_e$ & 1.0 & \{0.1, 0.3, 0.5, 1.0, 2.0\} & Attribute graph alignment strength \\
$\lambda_i$ & 1.0 & \{0.1, 0.3, 0.5, 1.0, 2.0\} & Identity graph alignment strength \\
$\lambda_{\perp}$ & 1.0 & \{0.0, 0.1, 0.5, 1.0\} & Attribute--identity factor separation strength \\
$\lambda_L$ & 0.1 & \{0.0, 0.05, 0.1, 0.5\} & Local smoothness regularization \\
Sinkhorn entropy & 0.05 & \{0.01, 0.05, 0.1\} & Entropic regularization for OT stability \\
Sinkhorn iterations & 50 & \{20, 50, 100\} & More iterations improve convergence but increase cost \\
Graph neighbors $k$ & 10 & \{5, 10, 20\} & Sensitivity reported separately \\
Audio duration & 3 s & \{2, 3, 4\} & Fixed-length waveform segment \\
EMA decay & 0.999 & \{0.995, 0.999\} & Used for generator weights when enabled \\
\bottomrule
\end{tabular}
\end{table*}

\subsection{Mini-batch OT Computation}

Computing FGW/GW over the full benchmark graph at every training step is computationally expensive. We therefore use mini-batch approximations. For each training batch, we extract the corresponding graph-distance submatrices and latent-distance matrices, then compute the OT alignment losses using Sinkhorn iterations.

Graph distance matrices are precomputed and cached before training. During training, only the submatrices corresponding to samples in the current mini-batch are loaded. This makes the graph-constrained objective practical while preserving the relational supervision needed for latent-geometry alignment.

\paragraph{Stability controls.}
We use entropic regularization, gradient clipping, and warmup scheduling to stabilize OT optimization. We also normalize learned factor embeddings before pairwise distance computation. These choices reduce numerical instability and prevent the graph losses from dominating early training.

\subsection{Hyperparameter Selection}

All hyperparameters are selected using validation performance under reference-identity-disjoint evaluation. We select the final configuration by considering the joint behavior of emotion accuracy, identity similarity, LDS, GC, and trajectory smoothness. We do not select hyperparameters using the test split.

The graph weights $\lambda_e$ and $\lambda_i$ control the strength of structural alignment. Very small values weaken the geometry constraint, while very large values can reduce output flexibility. The orthogonality weight $\lambda_{\perp}$ improves attribute--identity factor separation, but overly large values can reduce representational capacity. The Lipschitz weight $\lambda_L$ improves smoothness but may oversmooth strong affective changes if set too high.

\subsection{Baseline Adaptation and Fair Evaluation Controls}

To ensure fair comparison, all baselines are evaluated using the same validation/test splits and the same identity-disjoint protocol. 
When a baseline does not natively support one of the modalities, we adapt the input using the strongest method-compatible setting while preserving the baseline's intended use.

\begin{itemize}
    \item \textbf{Text-only editing baselines} receive the target affective text prompt and the input/reference image when supported.
    \item \textbf{Image-conditioned baselines} receive the same target and reference images used by Controlla.
    \item \textbf{Audio-unaware baselines} receive an emotion text label or transcript-derived affective prompt corresponding to the audio condition. The audio condition is converted to the same normalized target emotion used in evaluation, rather than giving Controlla additional label information unavailable to baselines.
    \item \textbf{Personalization baselines} such as DreamBooth-style methods use the same reference identity budget across samples.
    \item \textbf{Generation settings} such as resolution, number of samples per condition, random seeds, and inference steps are matched whenever supported.
\end{itemize}

This protocol prevents Controlla from receiving privileged information relative to adapted baselines. Baseline-specific limitations are reported explicitly when a method cannot consume all modalities. 

\begin{table*}[t]
\centering
\caption{
\textbf{Baseline input and adaptation protocol.}
We evaluate each method under its strongest supported interface while keeping held-out identities, target emotions, and evaluation metrics fixed. Methods without native audio conditioning use the normalized emotion label as text/instruction control.
}
\label{tab:baseline_protocol}
\vspace{-4pt}
\tiny
\setlength{\tabcolsep}{2.2pt}
\renewcommand{\arraystretch}{0.92}
\rowcolors{2}{blue!5}{white}
\begin{tabular}{l|l|c|c|c|c|l}
\toprule
\rowcolor{blue!18}
\textbf{Method} & \textbf{Ref / Image Input} & \textbf{Text Input} & \textbf{Audio} & \textbf{Identity Tuning} & \textbf{Fine-tuned} & \textbf{Notes} \\
\rowcolor{blue!18}
 &  &  &  &  & \textbf{on AH-43K} &  \\
\midrule

\multicolumn{7}{c}{\cellcolor{blue!10}\textit{Inversion / Editing-based Methods}} \\
\midrule
Null-Text Inversion & Image & Prompt & \xmark & \xmark & \xmark & Inversion with target-emotion prompt \\
DiffusionCLIP & Image & Text prompt & \xmark & \xmark & \xmark & CLIP-guided latent editing \\
Plug-and-Play & Image & Text prompt & \xmark & \xmark & \xmark & Attention/image-feature editing \\
Prompt-to-Prompt & Image & Text prompt & \xmark & \xmark & \xmark & Prompt-based attention control \\
InstructPix2Pix & Image & Instruction & \xmark & \xmark & \xmark & Emotion target inserted as instruction \\

\midrule
\multicolumn{7}{c}{\cellcolor{blue!10}\textit{Conditioning-based Control Methods}} \\
\midrule
ControlNet & Image / control & Text prompt & \xmark & \xmark & \xmark & Audio mapped to emotion text \\
ControlNet++ & Image / control & Text prompt & \xmark & \xmark & \xmark & Improved control baseline \\
LooseControl & Image / relaxed control & Text prompt & \xmark & \xmark & \xmark & Relaxed spatial-control setting \\

\midrule
\multicolumn{7}{c}{\cellcolor{blue!10}\textit{Identity / Personalization Methods}} \\
\midrule
DreamBooth & Reference image & Text prompt & \xmark & \cmark & \cmark & Instance-specific identity tuning \\
DiffusionRig & Reference / face image & Control text & \xmark & \cmark & \xmark & Face-control identity baseline \\
DB + ControlNet++ & Reference + control & Text prompt & \xmark & \cmark & \cmark & Strongest identity-control hybrid \\

\midrule
\multicolumn{7}{c}{\cellcolor{blue!10}\textit{Emotion / Instruction Models}} \\
\midrule
EMOPortraits & Image / reference & Emotion label & \xmark & Weak & \xmark & Emotion editing baseline \\
EmoEdit & Image & Emotion instruction & \xmark & Weak & \xmark & Emotion-specific editing \\
EmoGen & Image / none & Emotion text & \xmark & \xmark & \xmark & Emotion generation baseline \\
Emu Edit & Image & Instruction & \xmark & Weak & \xmark & Instruction-guided editing \\
ICEdit & Image & Instruction & \xmark & Weak & \xmark & Audio converted to emotion label \\
ACE++ & Image & Instruction & \xmark & Weak & \xmark & Instruction editing baseline \\

\midrule
\multicolumn{7}{c}{\cellcolor{blue!10}\textit{Trajectory / Unified Models}} \\
\midrule
FlowChef & Image & Text prompt & \xmark & Weak & \xmark & Flow-based trajectory control \\
FLUX.1 Kontext & Image & Instruction & \xmark & Weak & \xmark & Strong unified editing baseline \\

\midrule
\rowcolor{blue!15}
\textbf{Controlla (Ours)} & \textbf{Reference image} & \textbf{Text control} & \cmark & \textbf{Learned factor} & \cmark & \textbf{Full image--reference--text--audio control} \\
\bottomrule
\end{tabular}
\vspace{-6pt}
\end{table*}

Table~\ref{tab:baseline_protocol} summarizes the input interface and adaptation protocol used for each baseline. 
Because several prior methods do not natively support reference-image, text, and audio conditioning together, we evaluate each method under its strongest supported interface and convert audio to the normalized emotion label when required. 
This makes the comparison transparent without claiming that all baselines solve the full multimodal task natively.

\subsection{Compute and Runtime}

Training is performed on 8$\times$NVIDIA L40S GPUs using mixed precision. The main computational cost comes from diffusion training and mini-batch pairwise distance computation for OT losses. Precomputing graph distances and caching modality embeddings reduces training overhead. 

In our implementation, the graph-constrained OT terms introduce modest additional training overhead relative to the same generator trained without graph losses. As shown in Table~\ref{tab:compute_summary}, graph distances and modality embeddings are cached where applicable, and OT computation is performed with mini-batch Sinkhorn updates. Inference cost remains largely unchanged because graph alignment is used during training; at inference time, generation requires only encoding the input controls and conditioning the generator on $(z_{\mathrm{attr}},z_{\mathrm{id}})$, with no inference-time graph optimization.

\begin{table}[t]
\centering
\caption{\textbf{Compute summary.} Runtime details for the main Controlla configuration.}
\label{tab:compute_summary}
\small
\setlength{\tabcolsep}{6pt}
\renewcommand{\arraystretch}{1.15}
\rowcolors{2}{blue!4}{white}
\begin{tabular}{l|c}
\toprule
\rowcolor{blue!12}
\textbf{Item} & \textbf{Configuration} \\
\midrule
GPUs & 8$\times$NVIDIA L40S \\
Precision & Mixed precision \\
Training steps & 100k \\
Batch size & 64 \\
Cached graph distances & Yes \\
Cached modality embeddings & Yes, where applicable \\
OT computation & Mini-batch Sinkhorn \\
Inference-time graph optimization & No \\
\bottomrule
\end{tabular}
\end{table}

\subsection{Reproducibility Controls}

We use fixed random seeds for Python, NumPy, and PyTorch. Data loading order is seeded, and deterministic CUDA settings are enabled where supported. Validation and test splits are fixed across all methods. For controlled comparisons, we use the same input samples, reference images, target affective controls, and random seeds across model variants whenever the baseline implementation permits.

We release or document the following artifacts for reproducibility:
\begin{itemize}
    \item train/validation/test split files;
    \item reference-identity group identifiers;
    \item source metadata and emotion labels;
    \item preprocessing scripts;
    \item graph construction scripts and cached graph-distance matrices;
    \item training configuration files;
    \item evaluation scripts for Acc, ID, TS, LDS, GC, CLIP/ImageBind alignment, and human-evaluation aggregation.
\end{itemize}

\subsection{Implementation Limitations}

The implementation uses frozen pretrained encoders for stability and reproducibility. This choice isolates the proposed latent-geometry objective but may limit end-to-end adaptation. The learned factorization heads provide a simple trainable decomposition of the shared latent representation, but more expressive adaptive factorization mechanisms may further improve separation between identity and attributes. Mini-batch OT provides a practical approximation to full-graph alignment, but it does not exactly optimize the full graph objective at every step. These choices are deliberate trade-offs between reproducibility, computational feasibility, and controllability analysis.

\section{Additional Baselines (Extended)}
\label{sec8}
The main paper reports a compact comparison against representative conditioning-based, instruction-guided, personalization, and unified editing baselines. Here, we provide an extended reference comparison with additional methods discussed in the related work. The purpose of this section is not to claim that every baseline supports the same conditioning interface, but to evaluate how different families of editing and control methods behave under a shared identity-disjoint protocol where applicable.

We include inversion-based editing, attention-based editing, conditioning-based control, personalization methods, emotion-aware generation/editing models, trajectory-level control methods, and recent instruction-based or unified editing systems. All methods are evaluated on the same validation/test splits using matched image inputs and target emotion conditions where supported. For methods that do not support a separate reference image, the reference image is used only for identity evaluation, while the method receives its supported image input and target-emotion condition. This protocol prevents Controlla from receiving privileged information relative to adapted baselines.

\subsection{Baseline implementation protocol.}
For each baseline, we use the strongest publicly available implementation supported by the original authors when available. Unless otherwise stated, baselines are evaluated using official pretrained checkpoints without additional training on AffectHuman-43K. For methods that require instance-specific adaptation, such as DreamBooth and DreamBooth+ControlNet++, fine-tuning is performed only using training identities; validation and test identities remain strictly disjoint to avoid leakage. For instruction-guided and unified editing systems, including ICEdit, Emu Edit, ACE++, and FLUX.1 Kontext, we use released checkpoints or public inference interfaces with matched prompts and input images. Methods that do not natively support audio or full multimodal conditioning are evaluated using their closest supported input configuration, typically image plus text instruction or image plus target-emotion prompt. We therefore do not claim that all baselines are trained under identical objectives or solve the identical multimodal task natively. Instead, the extended comparison evaluates each method under its strongest available and method-compatible setting, while using the same held-out identities, target emotions, and evaluation metrics wherever applicable.

\subsection{Comparative Analysis}


\begin{table*}[t]
\centering
\caption{
\textbf{Method-level comparison of controllability mechanisms.}
Controlla is distinguished by jointly supporting multimodal control, explicit identity factorization, graph priors, OT-based latent geometry, traversal-time control, and a leakage-aware benchmark for evaluation.
}
\label{tab:novelty_comparison}
\vspace{-4pt}
\scriptsize
\setlength{\tabcolsep}{3.2pt}
\renewcommand{\arraystretch}{0.95}
\rowcolors{2}{blue!5}{white}
\begin{tabular}{l|l|c|c|c|c|c|c}
\toprule
\rowcolor{blue!18}
\textbf{Method} & \textbf{Family} & \textbf{Multimodal} & \textbf{Identity} & \textbf{Graph} & \textbf{OT} & \textbf{Traversal} & \textbf{Benchmark} \\
\rowcolor{blue!18}
 &  & \textbf{Control} & \textbf{Factor} & \textbf{Prior} & \textbf{Geometry} & \textbf{Control} & \textbf{Contribution} \\
\midrule

\multicolumn{8}{c}{\cellcolor{blue!10}\textit{Inversion / Editing-based Methods}} \\
\midrule
Null-Text Inversion & Diffusion inversion & Img--Txt & \xmark & \xmark & \xmark & \xmark & \xmark \\
DiffusionCLIP & Latent editing & Img--Txt & \xmark & \xmark & \xmark & \xmark & \xmark \\
Plug-and-Play & Attention editing & Img--Txt & \xmark & \xmark & \xmark & \xmark & \xmark \\
Prompt-to-Prompt & Attention control & Img--Txt & \xmark & \xmark & \xmark & \xmark & \xmark \\
InstructPix2Pix & Instruction editing & Img--Txt & Weak & \xmark & \xmark & \xmark & \xmark \\

\midrule
\multicolumn{8}{c}{\cellcolor{blue!10}\textit{Conditioning-based Control Methods}} \\
\midrule
ControlNet & Conditioning & Partial & \xmark & \xmark & \xmark & \xmark & \xmark \\
ControlNet++ & Improved control & Partial & \xmark & \xmark & \xmark & \xmark & \xmark \\
LooseControl & Relaxed control & Partial & \xmark & \xmark & \xmark & \xmark & \xmark \\

\midrule
\multicolumn{8}{c}{\cellcolor{blue!10}\textit{Identity / Personalization Methods}} \\
\midrule
DreamBooth & Personalization & \xmark & \cmark & \xmark & \xmark & \xmark & \xmark \\
DiffusionRig & Facial control & Partial & \cmark & \xmark & \xmark & \xmark & \xmark \\
DB + ControlNet++ & Hybrid & Partial & \cmark & \xmark & \xmark & \xmark & \xmark \\

\midrule
\multicolumn{8}{c}{\cellcolor{blue!10}\textit{Emotion / Instruction Models}} \\
\midrule
EMOPortraits & Emotion editing & Img--Txt & Weak & \xmark & \xmark & \xmark & \xmark \\
EmoEdit & Emotion editing & Img--Txt & Weak & \xmark & \xmark & \xmark & \xmark \\
EmoGen & Emotion generation & Txt / Emo & \xmark & \xmark & \xmark & \xmark & \xmark \\
Emu Edit & Instruction editing & Img--Txt & Weak & \xmark & \xmark & \xmark & \xmark \\
ICEdit & Instruction editing & Img--Txt & Weak & \xmark & \xmark & \xmark & \xmark \\
ACE++ & Instruction editing & Img--Txt & Weak & \xmark & \xmark & \xmark & \xmark \\

\midrule
\multicolumn{8}{c}{\cellcolor{blue!10}\textit{Trajectory / Unified Models}} \\
\midrule
FlowChef & Flow control & Img--Txt & Weak & \xmark & \xmark & Partial & \xmark \\
FLUX.1 Kontext & Unified editing & Img--Txt & Weak & \xmark & \xmark & \xmark & \xmark \\

\midrule
\rowcolor{blue!15}
\textbf{Controlla (Ours)} & \textbf{Structured geometry} & \textbf{Img--Ref--Txt--Aud} & \cmark & \cmark & \cmark & \cmark & \cmark \\
\bottomrule
\end{tabular}
\vspace{-6pt}
\end{table*}

Table~\ref{tab:novelty_comparison} summarizes the functional distinction between Controlla and representative controllable generation, editing, personalization, and trajectory-based methods.  While existing methods provide strong conditioning, editing, or personalization capabilities, they generally do not jointly learn an explicit identity factor, graph prior, optimal-transport geometry, and traversal-time control mechanism.  Controlla differs by using graph-constrained optimal transport to define the latent control geometry itself, enabling structured traversal while preserving identity.

\begin{table*}[h]
\centering
\caption{
\textbf{Extended baseline comparison on AffectHuman-43K.}
Controlla achieves the strongest overall controllability and geometry-aware performance, particularly in Acc, TS, and GC. Several strong editing or personalization baselines remain competitive on isolated endpoint metrics such as CLIP or ID, highlighting that the primary gain of Controlla is structured control rather than only endpoint alignment.
}
\small
\rowcolors{2}{blue!5}{white}
\begin{tabular}{l|c|ccccc}
\toprule
\rowcolor{blue!15}
\textbf{Method} & \textbf{Type} & \textbf{Acc↑} & \textbf{TS↑} & \textbf{CLIP↑} & \textbf{ID↑} & \textbf{GC↓} \\
\midrule

\multicolumn{7}{c}{\textit{Inversion / Editing-based}} \\
\midrule
\rowcolor{blue!12}
Null-Text Inversion & Diffusion Inversion & 66.4 & 0.55 & 0.283 & \textbf{0.882} & 0.238 \\
DiffusionCLIP & Latent Editing & 67.8 & 0.57 & 0.289 & 0.861 & 0.231 \\
Plug-and-Play & Attention Editing & 65.1 & 0.53 & 0.275 & 0.842 & 0.247 \\
Prompt-to-Prompt & Attention Control & 58.9 & 0.47 & 0.263 & 0.801 & 0.271 \\
InstructPix2Pix & Instruction Editing & 65.5 & 0.56 & 0.288 & 0.823 & 0.249 \\

\midrule
\multicolumn{7}{c}{\textit{Conditioning-based Control}} \\
\midrule
ControlNet & Conditioning & 67.4 & 0.58 & 0.291 & 0.801 & 0.243 \\
ControlNet++ & Improved Control & 69.6 & 0.61 & 0.302 & 0.824 & 0.191 \\
LooseControl & Relaxed Control & 68.8 & 0.60 & 0.298 & 0.818 & 0.203 \\

\midrule
\multicolumn{7}{c}{\textit{Identity / Personalization}} \\
\midrule
\rowcolor{blue!12}
DreamBooth & Personalization & 62.6 & 0.50 & 0.271 & \textbf{0.882} & 0.214 \\
DiffusionRig & Facial Control & 64.2 & 0.52 & 0.278 & 0.874 & 0.221 \\
\rowcolor{blue!12}
DB + ControlNet++ & Hybrid & 73.8 & 0.64 & 0.318 & \textbf{0.882} & 0.214 \\

\midrule
\multicolumn{7}{c}{\textit{Emotion / Instruction Models}} \\
\midrule
EMOPortraits & Emotion Editing & 69.8 & 0.60 & 0.301 & 0.861 & 0.192 \\
EmoEdit & Emotion Editing & 70.5 & 0.62 & 0.307 & 0.853 & 0.184 \\
EmoGen & Emotion Generation & 68.7 & 0.59 & 0.298 & 0.848 & 0.205 \\
Emu Edit & Instruction Editing & 72.4 & 0.64 & 0.314 & 0.852 & 0.189 \\
ICEdit & Instruction Editing & 73.1 & 0.66 & 0.318 & 0.858 & 0.166 \\
ACE++ & Instruction Editing & 71.8 & 0.63 & 0.311 & 0.846 & 0.178 \\

\midrule
\multicolumn{7}{c}{\textit{Trajectory / Unified Models}} \\
\midrule
FlowChef & Flow Control & 70.1 & 0.65 & 0.304 & 0.850 & 0.198 \\
\rowcolor{blue!12}
FLUX.1 Kontext & Unified Editing & 74.3 & 0.67 & \textbf{0.321} & 0.862 & 0.159 \\

\midrule
\rowcolor{blue!12}
\textbf{Controlla (Ours)} & Structured Geometry & \textbf{76.4} & \textbf{0.71} & 0.316 & 0.862 & \textbf{0.126} \\
\bottomrule
\end{tabular}
\label{tab:baseline_extended_ah}
\end{table*}

\begin{table*}[t]
\centering
\caption{
\textbf{Extended baseline comparison on AffectNet.}
Relative trends remain broadly consistent under domain shift. Controlla maintains the strongest Acc, TS, and GC, while FLUX.1 Kontext remains slightly stronger on CLIP and ID. This supports the view that Controlla improves structured controllability rather than simply optimizing endpoint similarity.
}
\small
\rowcolors{2}{blue!5}{white}
\begin{tabular}{l|c|ccccc}
\toprule
\rowcolor{blue!15}
\textbf{Method} & \textbf{Type} & \textbf{Acc↑} & \textbf{TS↑} & \textbf{CLIP↑} & \textbf{ID↑} & \textbf{GC↓} \\
\midrule

\multicolumn{7}{c}{\textit{Inversion / Editing-based}} \\
\midrule
Null-Text Inversion & Diffusion Inversion & 64.1 & 0.52 & 0.275 & 0.871 & 0.245 \\
DiffusionCLIP & Latent Editing & 65.6 & 0.54 & 0.281 & 0.852 & 0.237 \\
Plug-and-Play & Attention Editing & 63.5 & 0.50 & 0.268 & 0.832 & 0.253 \\
Prompt-to-Prompt & Attention Control & 56.7 & 0.45 & 0.255 & 0.782 & 0.279 \\
InstructPix2Pix & Instruction Editing & 63.2 & 0.54 & 0.281 & 0.812 & 0.256 \\

\midrule
\multicolumn{7}{c}{\textit{Conditioning-based Control}} \\
\midrule
ControlNet & Conditioning & 60.3 & 0.56 & 0.284 & 0.857 & 0.241 \\
ControlNet++ & Improved Control & 63.1 & 0.59 & 0.295 & 0.861 & 0.204 \\
LooseControl & Relaxed Control & 62.4 & 0.58 & 0.291 & 0.854 & 0.213 \\

\midrule
\multicolumn{7}{c}{\textit{Identity / Personalization}} \\
\midrule
DreamBooth & Personalization & 56.4 & 0.48 & 0.264 & 0.851 & 0.227 \\
DiffusionRig & Facial Control & 58.1 & 0.50 & 0.271 & 0.843 & 0.233 \\
DB + ControlNet++ & Hybrid & 67.5 & 0.62 & 0.311 & 0.856 & 0.221 \\

\midrule
\multicolumn{7}{c}{\textit{Emotion / Instruction Models}} \\
\midrule
EMOPortraits & Emotion Editing & 63.8 & 0.57 & 0.293 & 0.852 & 0.201 \\
EmoEdit & Emotion Editing & 64.6 & 0.59 & 0.299 & 0.844 & 0.193 \\
EmoGen & Emotion Generation & 66.8 & 0.56 & 0.289 & 0.839 & 0.212 \\
Emu Edit & Instruction Editing & 66.8 & 0.62 & 0.307 & 0.852 & 0.198 \\
ICEdit & Instruction Editing & 67.5 & 0.64 & 0.311 & 0.858 & 0.178 \\
ACE++ & Instruction Editing & 65.9 & 0.61 & 0.301 & 0.846 & 0.190 \\

\midrule
\multicolumn{7}{c}{\textit{Trajectory / Unified Models}} \\
\midrule
FlowChef & Flow Control & 64.7 & 0.63 & 0.297 & 0.850 & 0.205 \\
\rowcolor{blue!12}
FLUX.1 Kontext & Unified Editing & 69.1 & 0.65 & \textbf{0.314} & \textbf{0.862} & 0.171 \\

\midrule
\rowcolor{blue!12}
\textbf{Controlla (Ours)} & Structured Geometry & \textbf{72.8} & \textbf{0.68} & 0.312 & 0.861 & \textbf{0.134} \\
\bottomrule
\end{tabular}
\label{tab:baseline_extended_an}
\end{table*}

\begin{table*}[t]
\centering
\caption{
\textbf{Extended baseline comparison on CelebA-HQ.}
Controlla achieves the strongest balance between identity preservation and semantic control, with the best Acc, TS, ID, and GC. FLUX.1 Kontext remains tied on CLIP, indicating that endpoint image--text alignment alone does not fully capture trajectory-level controllability.
}
\small
\rowcolors{2}{blue!5}{white}
\begin{tabular}{l|c|ccccc}
\toprule
\rowcolor{blue!15}
\textbf{Method} & \textbf{Type} & \textbf{Acc↑} & \textbf{TS↑} & \textbf{CLIP↑} & \textbf{ID↑} & \textbf{GC↓} \\
\midrule

\multicolumn{7}{c}{\textit{Inversion / Editing-based}} \\
\midrule
Null-Text Inversion & Diffusion Inversion & 65.2 & 0.53 & 0.278 & 0.875 & 0.241 \\
DiffusionCLIP & Latent Editing & 66.7 & 0.55 & 0.284 & 0.861 & 0.233 \\
Plug-and-Play & Attention Editing & 64.3 & 0.51 & 0.270 & 0.848 & 0.248 \\
Prompt-to-Prompt & Attention Control & 57.8 & 0.46 & 0.259 & 0.791 & 0.273 \\
InstructPix2Pix & Instruction Editing & 64.3 & 0.55 & 0.284 & 0.817 & 0.251 \\

\midrule
\multicolumn{7}{c}{\textit{Conditioning-based Control}} \\
\midrule
ControlNet & Conditioning & 65.8 & 0.56 & 0.284 & 0.857 & 0.242 \\
ControlNet++ & Improved Control & 68.2 & 0.59 & 0.295 & 0.861 & 0.204 \\
LooseControl & Relaxed Control & 67.4 & 0.58 & 0.291 & 0.854 & 0.211 \\

\midrule
\multicolumn{7}{c}{\textit{Identity / Personalization}} \\
\midrule
DreamBooth & Personalization & 61.3 & 0.48 & 0.264 & 0.871 & 0.227 \\
DiffusionRig & Facial Control & 62.9 & 0.50 & 0.271 & 0.864 & 0.231 \\
DB + ControlNet++ & Hybrid & 72.2 & 0.62 & 0.311 & 0.878 & 0.217 \\

\midrule
\multicolumn{7}{c}{\textit{Emotion / Instruction Models}} \\
\midrule
EMOPortraits & Emotion Editing & 68.9 & 0.59 & 0.297 & 0.856 & 0.196 \\
EmoEdit & Emotion Editing & 69.5 & 0.61 & 0.303 & 0.848 & 0.189 \\
EmoGen & Emotion Generation & 67.8 & 0.58 & 0.292 & 0.842 & 0.208 \\
Emu Edit & Instruction Editing & 71.0 & 0.62 & 0.307 & 0.852 & 0.195 \\
ICEdit & Instruction Editing & 71.8 & 0.64 & 0.311 & 0.858 & 0.178 \\
ACE++ & Instruction Editing & 70.6 & 0.61 & 0.301 & 0.846 & 0.187 \\

\midrule
\multicolumn{7}{c}{\textit{Trajectory / Unified Models}} \\
\midrule
FlowChef & Flow Control & 68.6 & 0.63 & 0.297 & 0.850 & 0.203 \\
\rowcolor{blue!12}
FLUX.1 Kontext & Unified Editing & 73.0 & 0.65 & \textbf{0.314} & 0.862 & 0.171 \\

\midrule
\rowcolor{blue!12}
\textbf{Controlla (Ours)} & Structured Geometry & \textbf{73.6} & \textbf{0.69} & \textbf{0.314} & \textbf{0.868} & \textbf{0.131} \\
\bottomrule
\end{tabular}
\label{tab:baseline_extended_ch}
\end{table*}

\noindent\textbf{Cross-dataset analysis.}
Across AffectHuman-43K (Table~\ref{tab:baseline_extended_ah}), AffectNet (Table~\ref{tab:baseline_extended_an}), and CelebA-HQ (Table~\ref{tab:baseline_extended_ch}), Controlla consistently achieves the strongest performance on geometry-aware controllability metrics. In particular, it obtains the highest trajectory smoothness (TS) and lowest geodesic inconsistency (GC) across all datasets, while also maintaining the best or competitive accuracy and identity preservation. 

\section{Additional Ablation Studies}
\label{sec9}

\vspace{-6pt}
We present additional ablations to isolate the contribution of graph-constrained latent geometry, transport-based alignment, identity--attribute disentanglement, and multimodal grounding. While the main paper reports the primary ablation results, this section provides a more detailed breakdown of which components affect semantic accuracy, trajectory smoothness, identity preservation, latent disentanglement, and geodesic consistency arcoss datasets. The results are intended to support the main claim that Controlla improves structured controllability through latent geometry rather than only through stronger conditioning.

\subsection{Cross-Dataset Ablation Trends}

Table~\ref{tab:supp_cross_dataset_ablation} reports representative ablation trends across AffectHuman-43K, AffectNet, and CelebA-HQ. The goal is not to claim identical absolute performance across datasets, but to show that removing graph structure or multimodal inputs leads to similar relative degradation under domain shift. This supports the main-paper cross-dataset analysis and helps rule out the possibility that the gains arise only from the primary benchmark.

\begin{table*}[h]
\centering
\caption{
\textbf{Cross-dataset ablation trends.}
Representative ablations show similar relative trends across AffectHuman-43K, AffectNet, and CelebA-HQ. Full Controlla provides the strongest overall tradeoff, while removing graph structure or multimodal grounding reduces performance.
}
\scriptsize
\setlength{\tabcolsep}{4pt}
\renewcommand{\arraystretch}{1.08}
\rowcolors{2}{blue!4}{white}
\begin{tabular}{l|ccc|c}
\toprule
\rowcolor{blue!12}
\textbf{Setting} 
& \textbf{AffectHuman-43K Acc$\uparrow$} 
& \textbf{AffectNet Acc$\uparrow$} 
& \textbf{CelebA-HQ Acc$\uparrow$} 
& \textbf{GC$\downarrow$} \\
\midrule
Euclidean alignment        & 74.3 & 70.6 & 71.8 & 0.161 \\
w/o FGW                    & 74.8 & 71.9 & 72.9 & 0.141 \\
w/o GW                     & 75.1 & 72.1 & 73.0 & 0.134 \\
Image only                 & 69.1 & 67.3 & 68.0 & -- \\
Image + Text               & 74.2 & 71.7 & 72.8 & -- \\
\rowcolor{blue!12}
\textbf{Full Controlla}    & \textbf{76.4} & \textbf{72.8} & \textbf{73.6} & \textbf{0.126} \\
\bottomrule
\end{tabular}
\label{tab:supp_cross_dataset_ablation}
\end{table*}

\subsection{Hyperparameter Sensitivity}

Table~\ref{tab:supp_hyperparam_sensitivity} studies the most important hyperparameters: attribute graph weight, identity graph weight, graph connectivity, Sinkhorn iterations, entropy regularization, and orthogonality strength. We keep the table compact by reporting only the metrics most directly tied to the claims: Acc, TS, LDS, ID, and GC. Across settings, performance varies smoothly and remains close to the selected operating point, suggesting that Controlla does not rely on a brittle hyperparameter choice.

\begin{table*}[h]
\centering
\caption{
\textbf{Hyperparameter sensitivity on AffectHuman-43K.}
Performance varies smoothly across graph weights, graph connectivity, OT solver settings, entropy regularization, and orthogonality strength. Slightly stronger values may improve isolated metrics, but the selected configuration provides a stable tradeoff across Acc, TS, LDS, ID, and GC.
}
\scriptsize
\setlength{\tabcolsep}{3.5pt}
\renewcommand{\arraystretch}{1.08}
\rowcolors{2}{blue!4}{white}
\begin{tabular}{l|c|ccccc}
\toprule
\rowcolor{blue!12}
\textbf{Hyperparameter} 
& \textbf{Value} 
& \textbf{Acc$\uparrow$} 
& \textbf{TS$\uparrow$} 
& \textbf{LDS$\uparrow$} 
& \textbf{ID$\uparrow$} 
& \textbf{GC$\downarrow$} \\
\midrule

\multicolumn{7}{c}{\textit{Attribute graph weight}} \\
\midrule
$\lambda_e$ & 0.1 & 74.9 & 0.64 & 0.61 & \textbf{0.863} & 0.151 \\
$\lambda_e$ & 0.3 & 75.8 & 0.67 & 0.65 & 0.862 & 0.139 \\
$\lambda_e$ & 0.5 & 76.1 & 0.69 & 0.67 & 0.862 & 0.132 \\
\rowcolor{blue!12}
$\lambda_e$ & 1.0 & \textbf{76.4} & \textbf{0.71} & \textbf{0.69} & 0.862 & \textbf{0.126} \\
$\lambda_e$ & 2.0 & 76.2 & 0.70 & 0.68 & 0.859 & 0.129 \\

\midrule
\multicolumn{7}{c}{\textit{Identity graph weight}} \\
\midrule
$\lambda_i$ & 0.1 & 75.6 & 0.675 & 0.630 & 0.846 & 0.137 \\
$\lambda_i$ & 0.3 & 75.9 & 0.690 & 0.650 & 0.853 & 0.133 \\
$\lambda_i$ & 0.5 & 76.2 & 0.700 & 0.670 & 0.858 & 0.130 \\
\rowcolor{blue!12}
$\lambda_i$ & 1.0 & \textbf{76.4} & \textbf{0.710} & 0.690 & 0.862 & \textbf{0.126} \\
$\lambda_i$ & 2.0 & 76.1 & 0.704 & \textbf{0.695} & \textbf{0.864} & 0.129 \\

\midrule
\multicolumn{7}{c}{\textit{Graph connectivity}} \\
\midrule
$k$ & 5  & 76.0 & 0.690 & 0.670 & 0.861 & 0.131 \\
\rowcolor{blue!12}
$k$ & 10 & \textbf{76.4} & \textbf{0.710} & \textbf{0.690} & \textbf{0.862} & \textbf{0.126} \\
$k$ & 20 & 76.1 & 0.695 & 0.681 & 0.859 & 0.133 \\

\midrule
\multicolumn{7}{c}{\textit{OT solver and entropy regularization}} \\
\midrule
Sinkhorn iters. & 20  & 75.9 & 0.695 & 0.675 & -- & 0.132 \\
\rowcolor{blue!12}
Sinkhorn iters. & 50  & 76.4 & 0.710 & 0.690 & -- & 0.126 \\
Sinkhorn iters. & 100 & \textbf{76.5} & \textbf{0.712} & \textbf{0.691} & -- & \textbf{0.125} \\
$\epsilon$ & 0.01 & 75.8 & 0.700 & 0.682 & -- & 0.130 \\
\rowcolor{blue!12}
$\epsilon$ & 0.05 & \textbf{76.4} & \textbf{0.710} & \textbf{0.690} & -- & \textbf{0.126} \\
$\epsilon$ & 0.10 & 76.1 & 0.704 & 0.684 & -- & 0.129 \\

\midrule
\multicolumn{7}{c}{\textit{Orthogonality strength}} \\
\midrule
$\lambda_{\perp}$ & 0.0 & 73.9 & 0.620 & 0.590 & 0.853 & 0.152 \\
$\lambda_{\perp}$ & 0.1 & 75.2 & 0.660 & 0.630 & 0.857 & 0.140 \\
$\lambda_{\perp}$ & 0.5 & 76.0 & 0.695 & 0.670 & 0.860 & 0.130 \\
\rowcolor{blue!12}
$\lambda_{\perp}$ & 1.0 & \textbf{76.4} & \textbf{0.710} & \textbf{0.690} & \textbf{0.862} & \textbf{0.126} \\

\bottomrule
\end{tabular}
\label{tab:supp_hyperparam_sensitivity}
\end{table*}

\subsection{Latent Decomposition and Disentanglement Analysis}

\paragraph{Fixed Latent Decomposition.}
In Controlla, the latent representation is decomposed into attribute and identity subspaces using a fixed dimensional split. This design choice is motivated by stability considerations when optimizing graph-constrained objectives (FGW/GW), which require consistent semantic roles across subspaces. A fixed partition reduces early-stage mixing between identity and attribute factors, which can otherwise destabilize optimal transport alignment.

\paragraph{Is Disentanglement Engineered or Learned?}
While the dimensional split provides an architectural bias, disentanglement is not solely induced by this partition. Instead, it is enforced through:
(i) graph-constrained optimal transport objectives (FGW/GW), which align attribute and identity subspaces with distinct relational structures,
(ii) orthogonality constraints that explicitly reduce cross-subspace leakage, and
(iii) Lipschitz regularization that stabilizes transformations under latent traversal.
Together, these mechanisms ensure that disentanglement emerges from geometry-aware learning rather than from dimensional partitioning alone.

\paragraph{Sensitivity to Latent Split.}
We evaluate robustness to different dimensional allocations between attribute and identity subspaces. 

\begin{table*}[h]
\centering
\caption{
\textbf{Sensitivity to latent split.}
Performance remains stable across different attribute/identity dimensional allocations, indicating that disentanglement is not solely induced by dimensional partitioning but emerges from the proposed geometry-aware objectives.
}
\scriptsize
\setlength{\tabcolsep}{6pt}
\renewcommand{\arraystretch}{1.08}
\rowcolors{2}{blue!4}{white}
\begin{tabular}{l|ccc}
\toprule
\rowcolor{blue!12}
\textbf{Split (attr/id)} 
& \textbf{Acc$\uparrow$} 
& \textbf{TS$\uparrow$} 
& \textbf{LDS$\uparrow$} \\
\midrule
128 / 128 & 76.9 & 0.70 & 0.68 \\
\rowcolor{blue!12}
\textbf{192 / 64} & \textbf{77.6} & \textbf{0.73} & \textbf{0.71} \\
224 / 32 & 77.1 & 0.71 & 0.69 \\
\bottomrule
\end{tabular}
\label{tab:supp_latent_split}
\end{table*}

As shown in Table~\ref{tab:supp_latent_split}, performance varies only marginally across different splits, with a slight advantage for the 192/64 attribute--identity allocation.
If disentanglement were purely a consequence of dimensional partitioning, performance would vary substantially across configurations.
Instead, the observed stability suggests that disentanglement is driven primarily by structural constraints, specifically graph alignment through FGW/GW and orthogonality regularization, rather than by architectural separation alone.

\paragraph{Complexity vs. Performance Trade-off.}
Controlla introduces additional components, including graph priors and graph-constrained optimal transport (FGW/GW), which incur modest computational overhead. However, the primary goal of Controlla is not to maximize marginal improvements in task-level accuracy, but to enable structured and controllable transformations in latent space.

While improvements in accuracy (Acc) are moderate (approximately 2–3\%), we observe substantially larger gains in geometry-aware and controllability-specific metrics, including trajectory smoothness (TS), latent disentanglement (LDS), and geodesic consistency (GC). These metrics directly reflect the ability of the model to perform stable, identity-preserving transformations—capabilities not captured by standard accuracy measures.

Furthermore, qualitative results (Sec. \hyperref[sec10]{J}, Sec. \hyperref[sec11]{K}) demonstrate that Controlla produces smoother and more coherent intermediate states, whereas conditioning-based baselines exhibit identity drift and inconsistent transitions.

From a computational perspective, the additional cost is limited to training: graph-constrained optimal transport introduces approximately $\sim$8\% overhead (Sec. 4.3), while inference cost remains largely unchanged. This makes Controlla practical for deployment while providing improved controllability.

Overall, the added complexity is justified by a qualitative shift from heuristic, prompt-driven control to geometry-driven, structured controllability, which is the central objective of this work.

\section{Additional Qualitative Analysis}
\label{sec10}

We provide additional qualitative results to complement the quantitative evaluation in the main paper. The goal of this section is to examine model behavior across datasets, baseline families, fine-grained control settings, and challenging scenarios. Examples are selected to reflect representative outcomes across multiple samples rather than best-case outputs.

\paragraph{Selection protocol.}
For each setting, multiple generations are produced per method, and representative examples are shown. The evaluation spans different datasets, identities, and emotion categories to reduce bias toward specific instances.

\paragraph{Cross-dataset comparison.}
Figure~\ref{fig:cross_dataset} compares model behavior across AffectHuman-43K, CelebA-HQ, and AffectNet. Across datasets with varying distributions and visual characteristics, we observe differences in how methods maintain identity and follow target expressions. While all methods demonstrate varying levels of performance depending on the dataset, Controlla exhibits relatively consistent behavior across datasets in the presented examples, particularly in maintaining recognizable identity while adapting expressions.

\begin{figure*}[h]
\centering
\includegraphics[width=\linewidth]{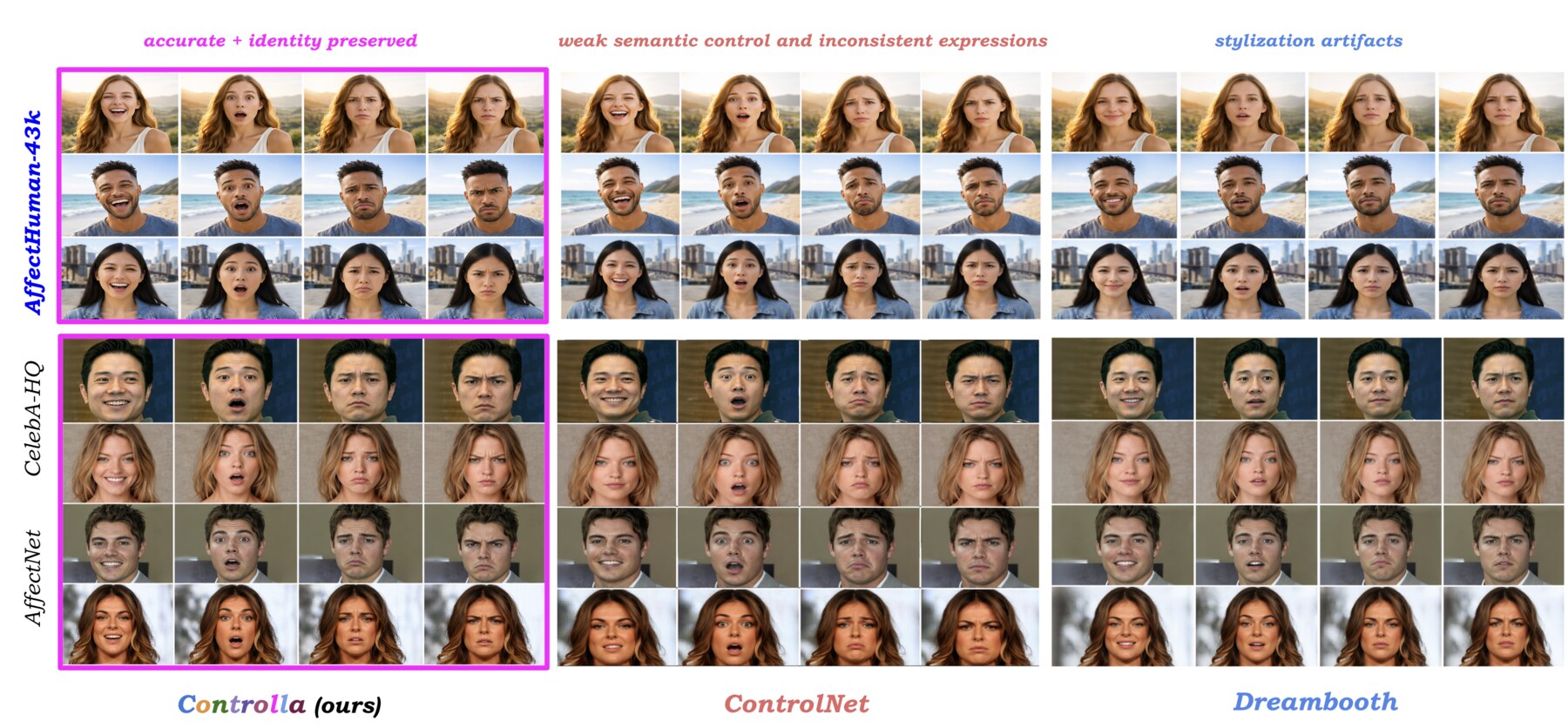}
\caption{
\textbf{Cross-dataset comparison across AffectHuman-43K, CelebA-HQ, and AffectNet.}
}
\label{fig:cross_dataset}
\end{figure*}

\paragraph{Baseline comparison.}
Figure~\ref{fig:baseline_comparison} presents comparisons across representative baseline methods spanning conditioning-based control, instruction-guided editing, and unified generation approaches. Across these method families, we observe recurring patterns such as variability in identity preservation, inconsistent expression intensity, and occasional visual artifacts. Controlla demonstrates comparatively stable behavior in these examples, although variations remain across different inputs and conditions.

\begin{figure*}[h]
\centering
\includegraphics[width=\linewidth]{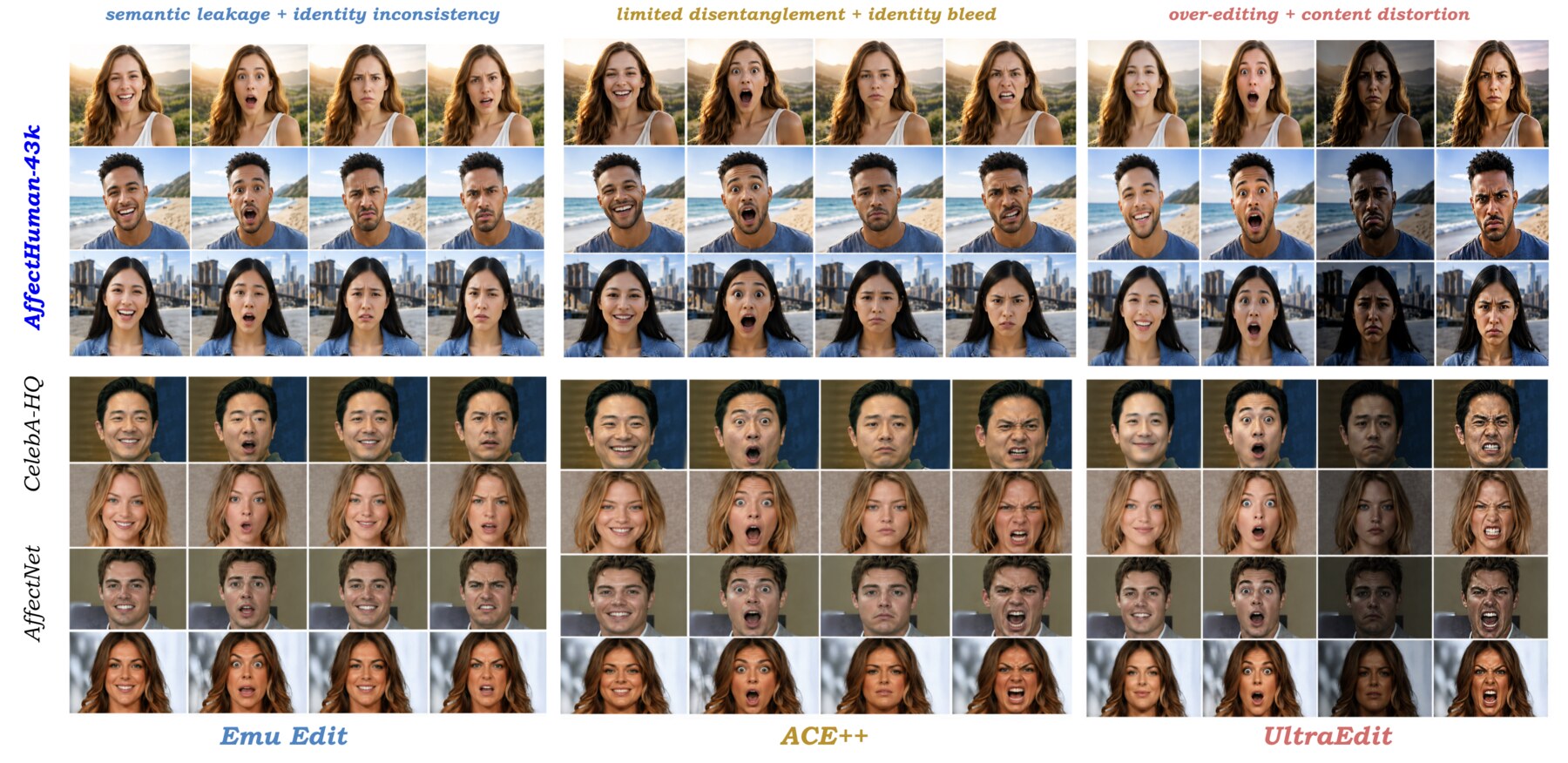}
\caption{
\textbf{Comparison across representative baseline methods.}
}
\label{fig:baseline_comparison}
\end{figure*}

\paragraph{Multi-identity evaluation.}
Figure~\ref{fig:multi_identity} evaluates performance across multiple identities under the same target conditions. This setting highlights how methods generalize across diverse facial structures and appearances. We observe that some methods exhibit identity drift or inconsistent attribute application across identities. In contrast, Controlla maintains more consistent identity features across the shown examples while adapting expressions.

\begin{figure*}[h]
\centering
\includegraphics[width=\linewidth]{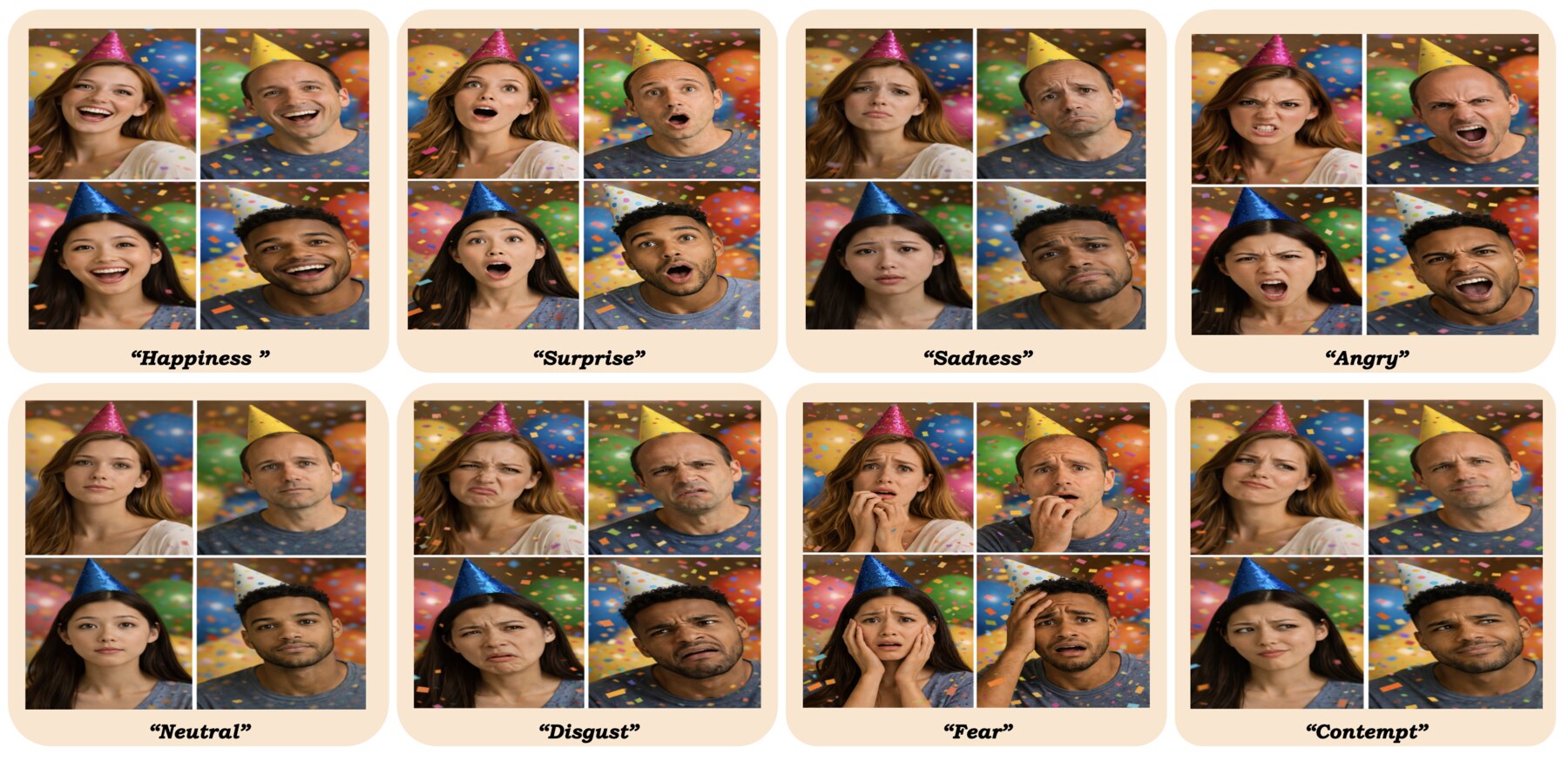}
\caption{
\textbf{Multi-identity evaluation under shared target conditions.}
}
\label{fig:multi_identity}
\end{figure*}

\paragraph{Fine-grained emotion control.} 
Figure \ref{fig:fine_grained_emotion} illustrates fine-grained variations within emotion categories, including transitions between closely related affective states. These examples demonstrate the ability to represent subtle differences such as varying intensity or nuanced expressions within the same emotion class. While all methods capture coarse emotional changes, differences emerge in the consistency and smoothness of intermediate variations. Controlla shows smoother transitions across related expressions in the presented sequences.

\begin{figure*}[h]
\centering
\includegraphics[width=0.72\linewidth]{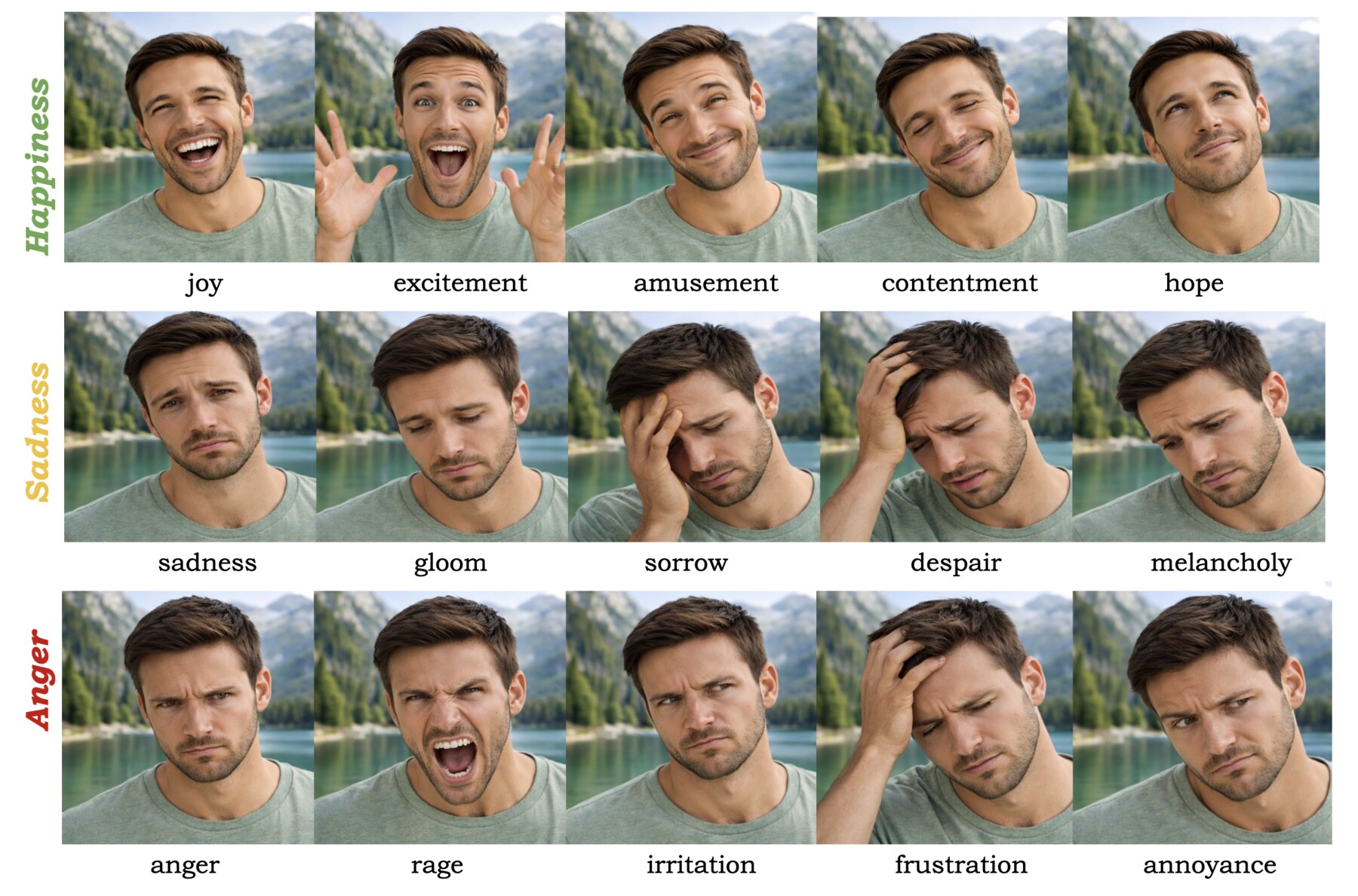}
\includegraphics[width=0.72\linewidth]{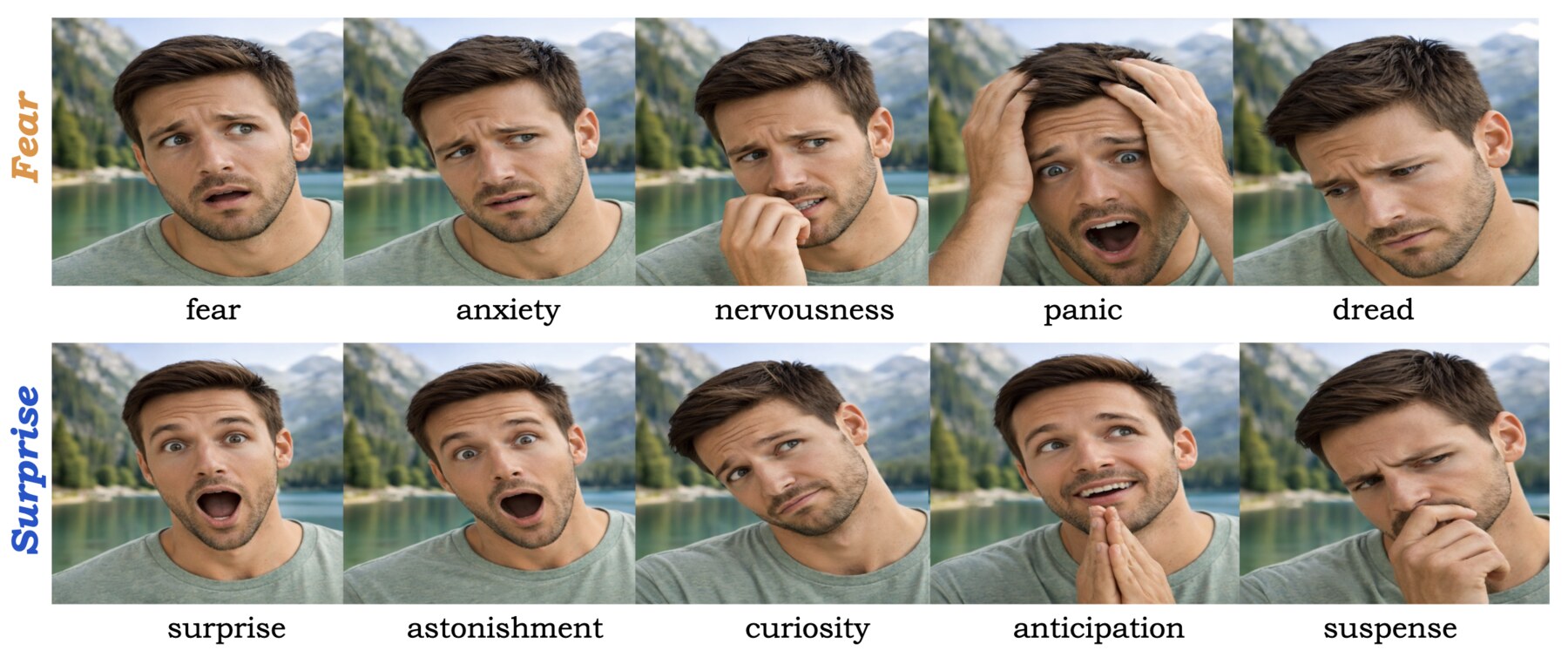}
\includegraphics[width=0.72\linewidth]{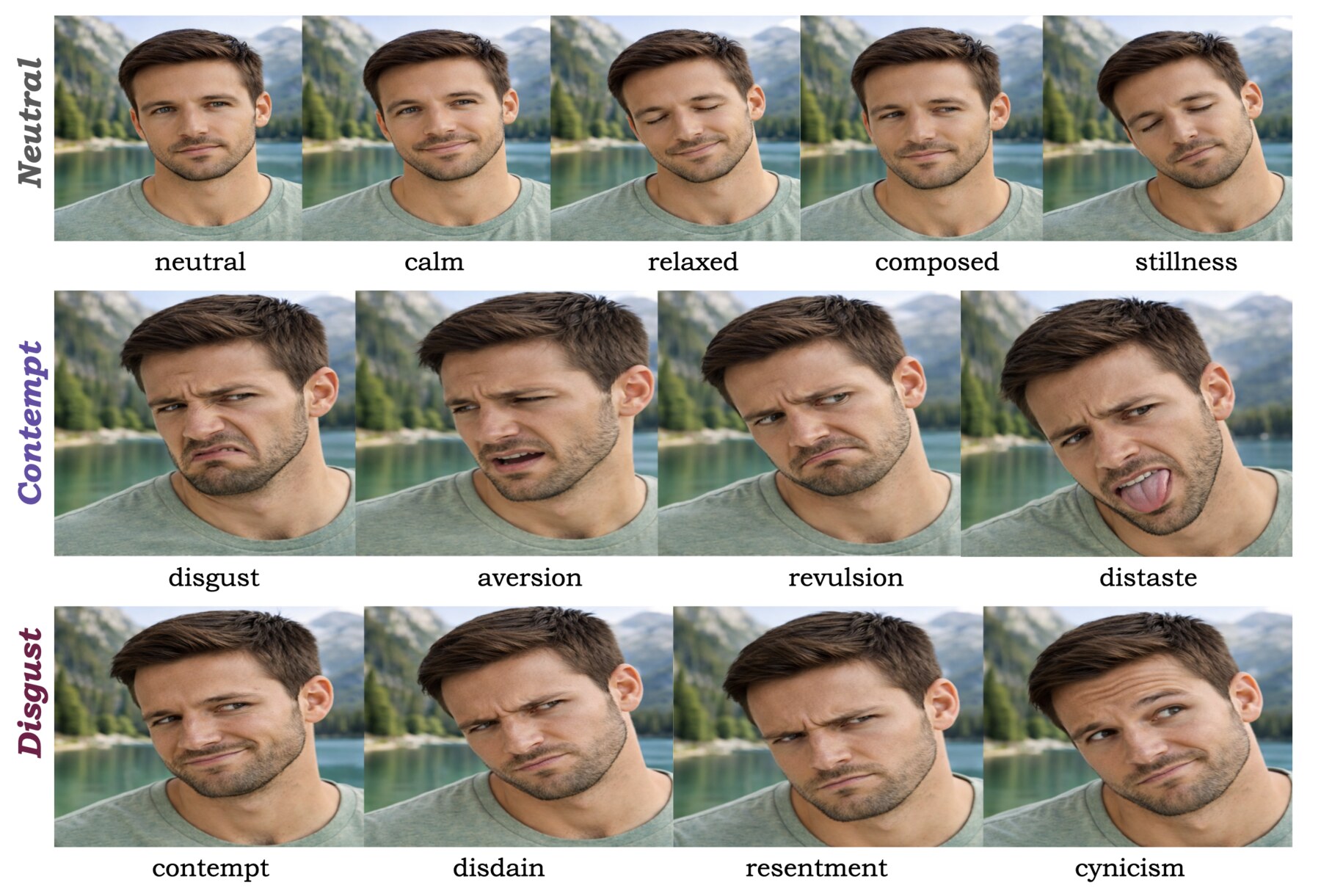}
\caption{
\textbf{Fine-grained variations within high-level emotion categories.}
}
\label{fig:fine_grained_emotion}
\end{figure*}

\paragraph{Challenging subtle cases.}
Figures ~\ref{fig:challenging_1} and ~\ref{fig:challenging_2} present challenging scenarios involving visually similar or overlapping expressions. These cases require distinguishing fine-grained semantic cues that are often difficult to separate. Across methods, we observe variability in handling these distinctions, with some outputs collapsing to more generic expressions. Controlla demonstrates more consistent differentiation in these examples, though performance varies depending on the specific case.

\begin{figure*}[h]
\centering
\includegraphics[width=0.90\linewidth]{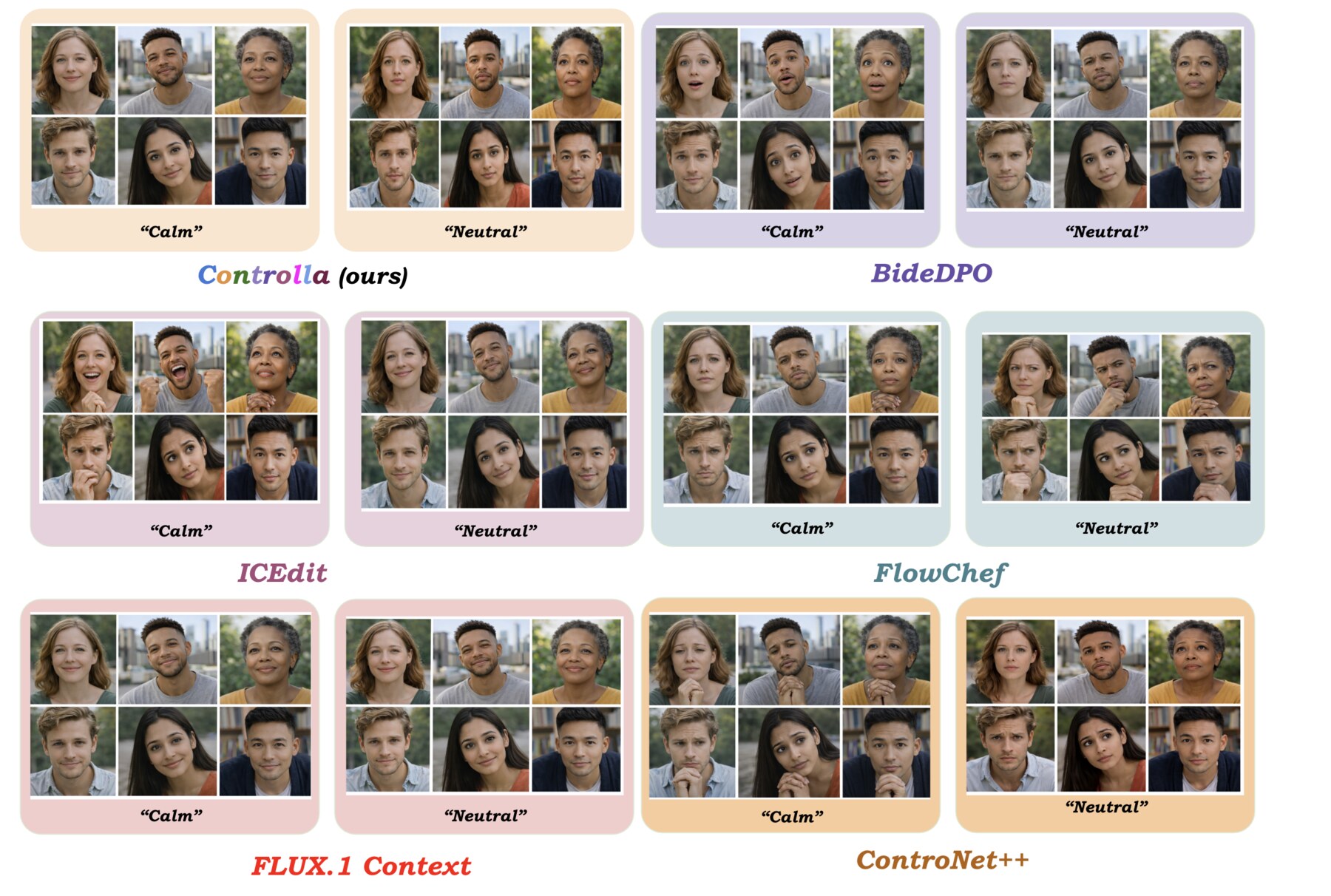}
\caption{
\textbf{Challenging cases involving visually similar expressions.}
}
\label{fig:challenging_1}
\includegraphics[width=0.90\linewidth]{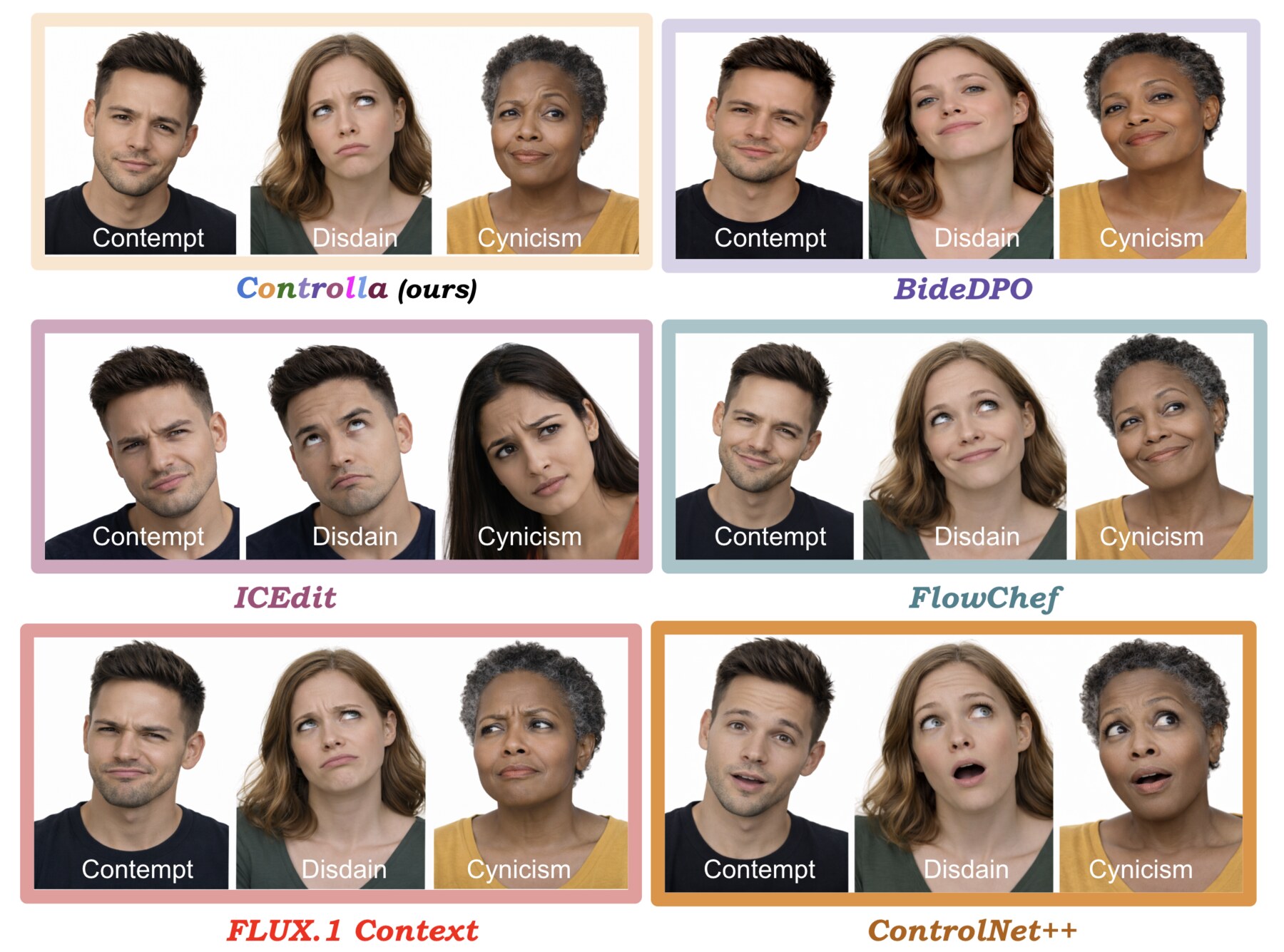}
\caption{
\textbf{Challenging cases with subtle semantic differences.}
}
\label{fig:challenging_2}
\end{figure*}

\paragraph{Multimodal ambiguity.}
Figures~\ref{fig:ambiguity_1} and~\ref{fig:ambiguity_2} examine scenarios with potentially conflicting or ambiguous multimodal inputs. These cases test how models integrate multiple signals when cues are not fully aligned. Across methods, outputs may vary in how different signals are prioritized. Controlla produces more consistent outputs across the presented examples, suggesting improved integration of multimodal information under ambiguity.

\begin{figure*}[h]
\centering
\includegraphics[width=\linewidth]{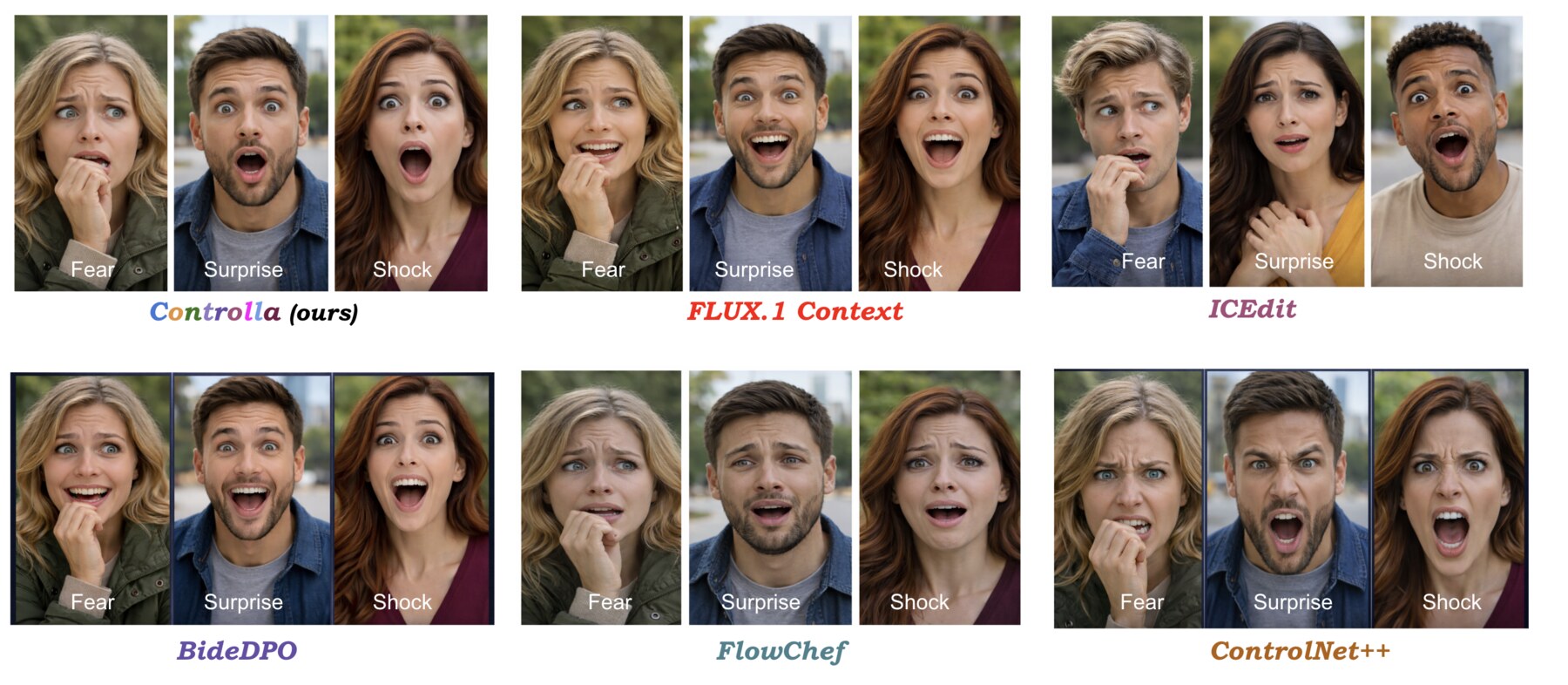}
\caption{
\textbf{Multimodal ambiguity with partially conflicting cues.}
}
\label{fig:ambiguity_1}
\end{figure*}

\begin{figure*}[h]
\centering
\includegraphics[width=\linewidth]{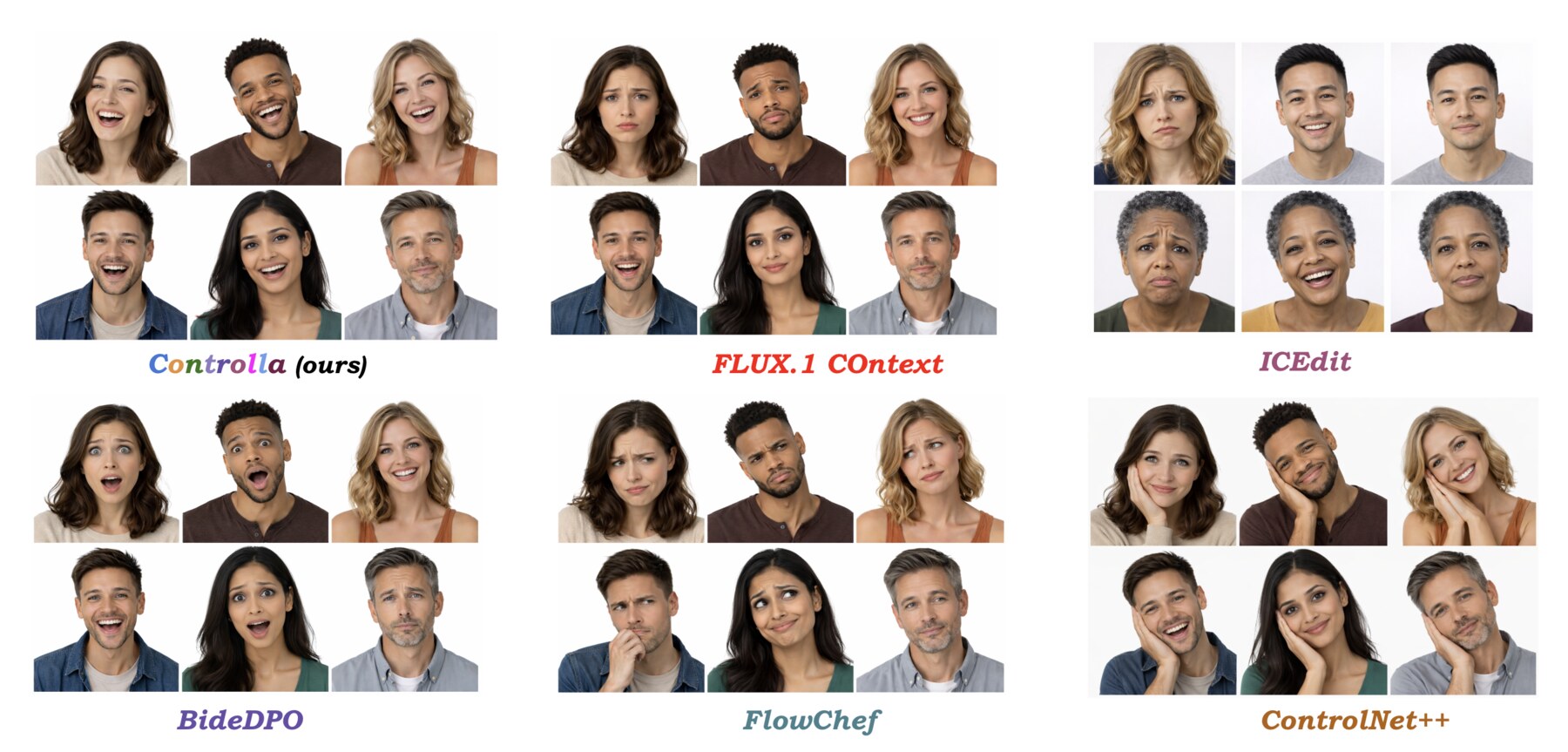}
\caption{
\textbf{Multimodal ambiguity under diverse identity and expression settings.}
}
\label{fig:ambiguity_2}
\end{figure*}

\section{Behavior Under Challenging Conditions}
\label{sec11}

We analyze model behavior under challenging conditions to better understand how structured priors, multimodal interactions, and optimization constraints influence generation. 
Rather than focusing on best-case outputs, this section examines representative scenarios where inputs are ambiguous, partially conflicting, or where model design choices introduce trade-offs.
These cases reveal how Controlla balances attribute control with identity preservation when the requested transformation becomes difficult or under-specified.
They also help distinguish controllability failures caused by weak graph alignment from failures caused by insufficient visual or cross-modal evidence. Overall, the analysis provides a more realistic view of robustness beyond aggregate metrics and curated qualitative examples.

\vspace{-6pt}

\subsection{Sensitivity to Structured Priors}
Controlla relies on predefined emotion and identity graphs to guide latent alignment. As a result, model behavior depends on how well these priors reflect underlying semantic relationships.

Figure~\ref{fig:fail_wrong_emotion_prior} illustrates a scenario with conflicting multimodal inputs (smiling image, calm audio, and strongly negative text). In such cases, the model produces intermediate expressions that reflect competing signals rather than a single dominant target.

This behavior indicates that outputs are influenced by the interaction between structured priors and multimodal inputs, with the model interpolating between conflicting cues rather than collapsing to one modality.

\begin{figure*}[h]
\centering
\includegraphics[width=0.92\textwidth]{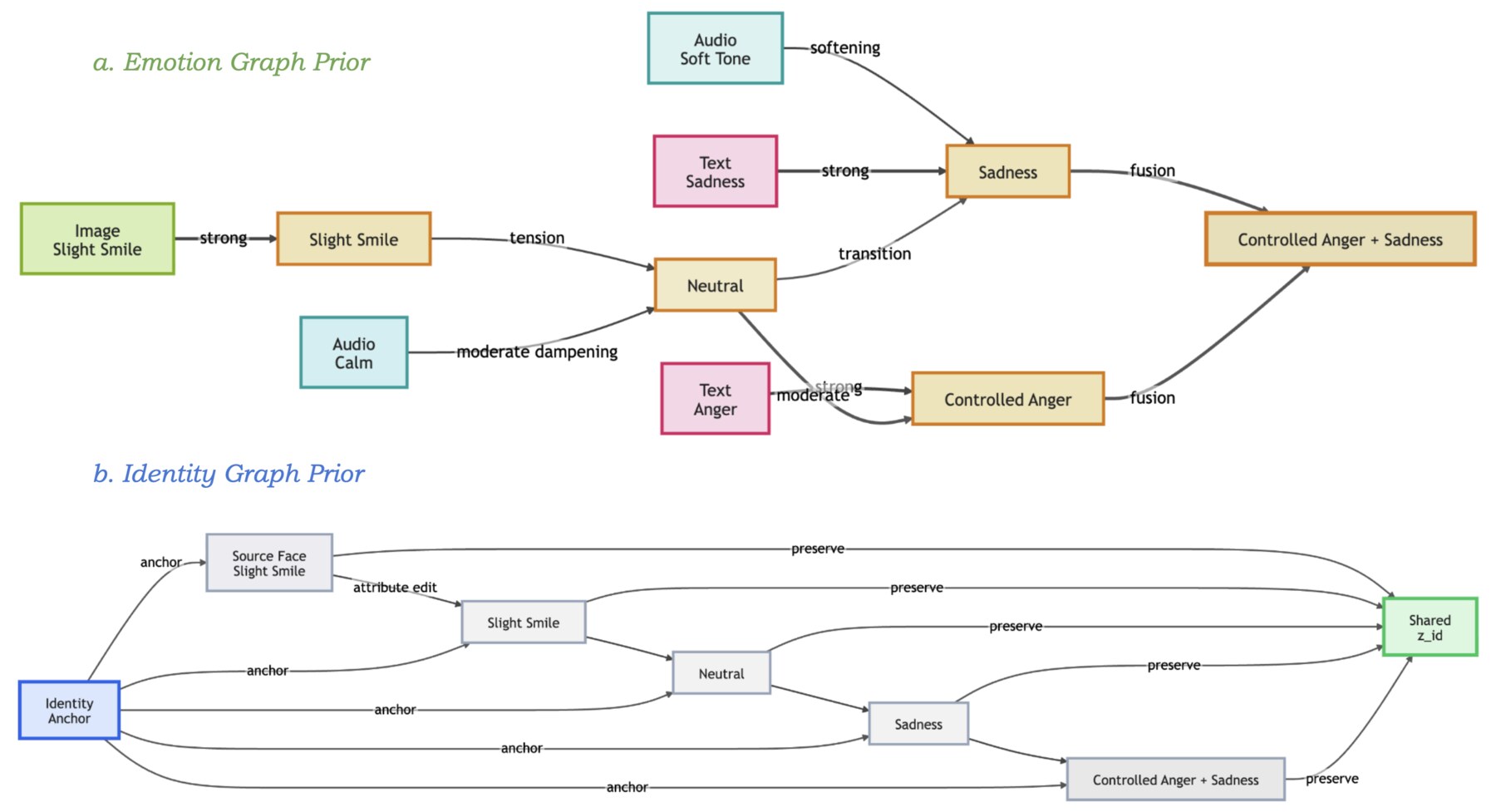}
\includegraphics[width=0.92\textwidth]{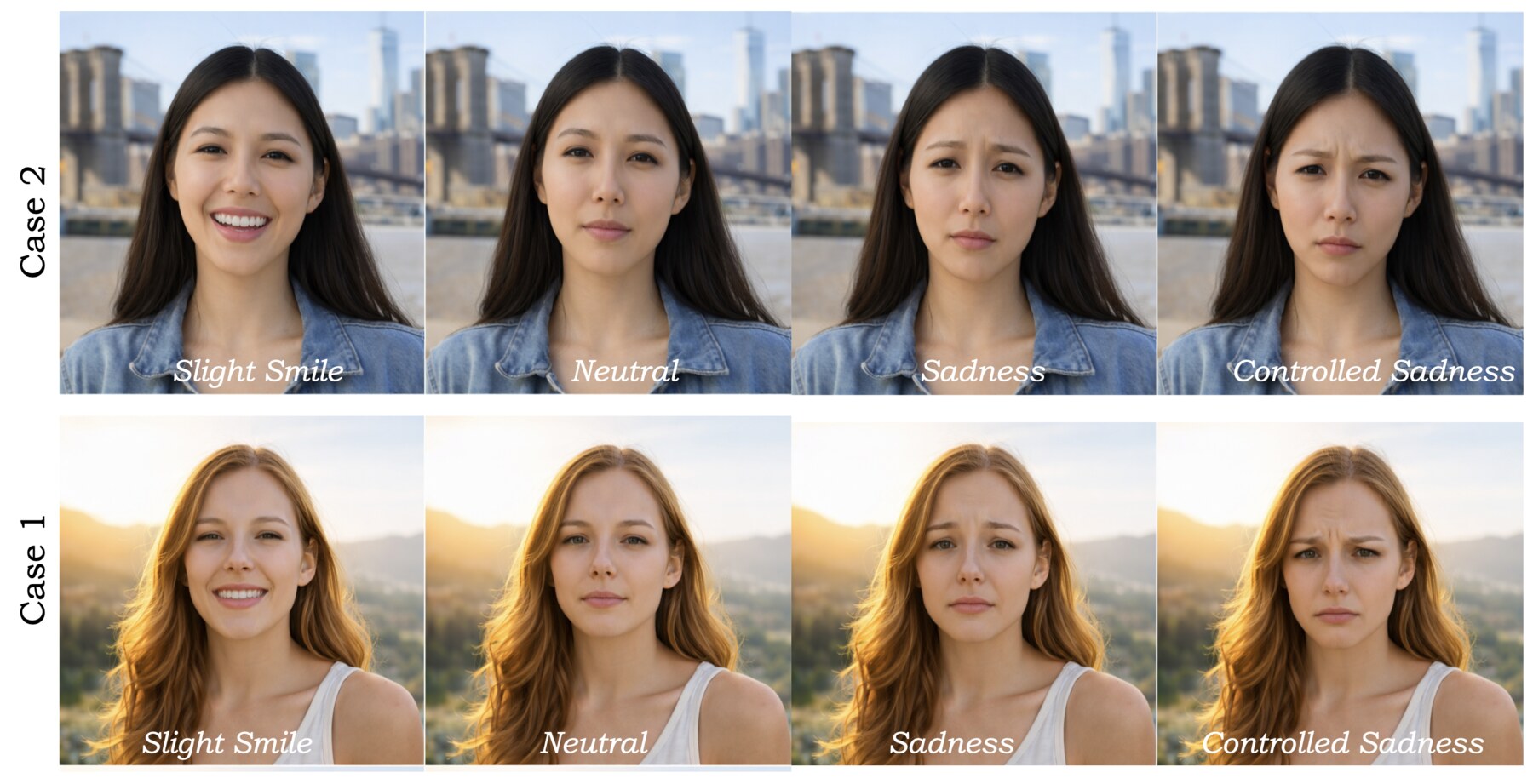}
\caption{
\textbf{Behavior under conflicting multimodal signals.}
Inputs include a smiling image, calm audio, and strongly negative text.
The resulting outputs reflect a combination of these cues, illustrating how structured priors guide latent transitions when signals are not fully aligned.
}
\label{fig:fail_wrong_emotion_prior}
\end{figure*}

\vspace{-6pt}

\subsection{Controllability--Diversity Trade-off}
Graph regularization introduces a trade-off between semantic consistency and expressive diversity. Increasing $\lambda$ strengthens alignment with target structure, while reducing variability in generated outputs.

As shown in Fig.~\ref{fig:lambda_tradeoff} and Table~\ref{tab:lambda_regularization}, lower regularization produces more diverse but less consistent outputs, whereas higher regularization yields more stable but less varied expressions. Intermediate values provide a balance between these factors.

This behavior reflects how graph constraints shape the latent space and influence the trade-off between controllability and diversity.

\begin{figure*}[h]
\centering
\includegraphics[width=\linewidth]{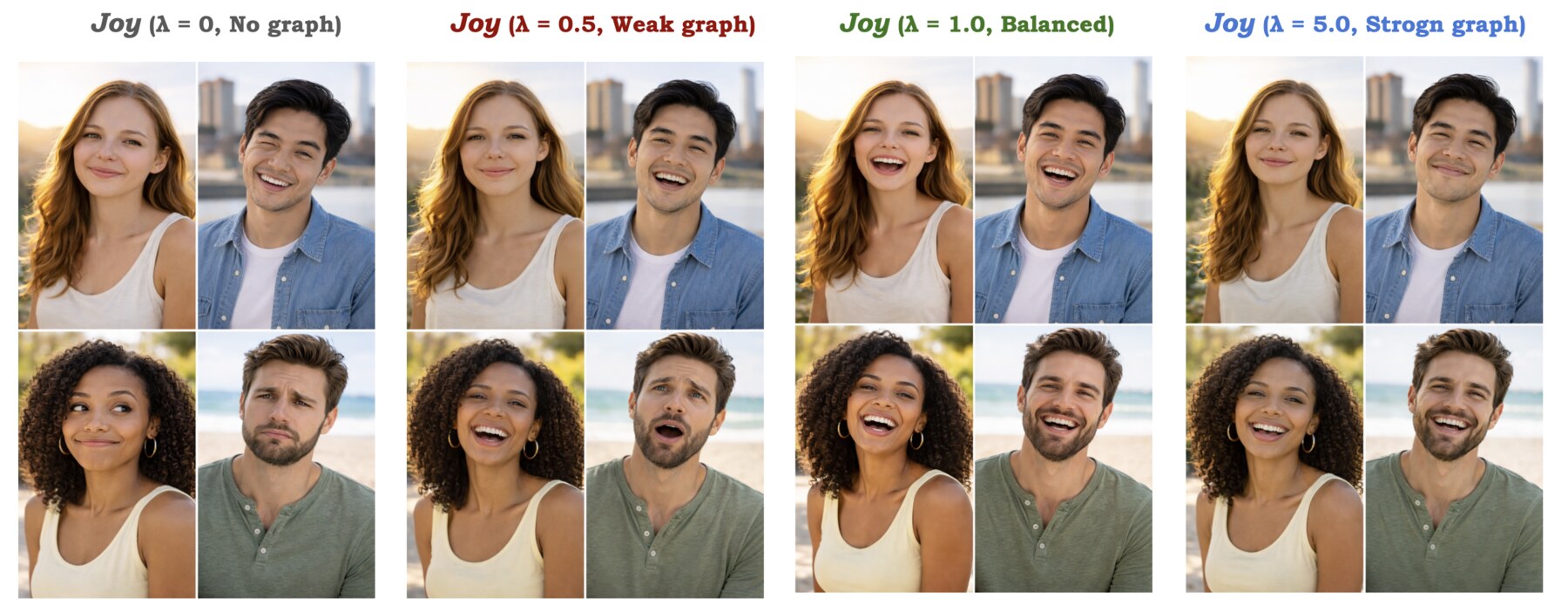}
\caption{
\textbf{Effect of graph regularization strength.}
Varying $\lambda$ changes the balance between semantic consistency and expressive variability.
Lower values produce more diverse outputs, while higher values yield more consistent but less varied expressions.
}
\label{fig:lambda_tradeoff}
\end{figure*}

\vspace{-6pt}

\subsection{Behavior Under Multimodal Ambiguity.}

Controlla is designed for settings in which image, text, and audio provide complementary affective cues, but in practice these signals may be partially ambiguous or weakly conflicting. For example, the visual input may suggest a neutral or mildly positive expression, while the text requests a stronger affective state, or the audio may contain noisy prosodic evidence that only partially matches the text. In such cases, there may not be a single unambiguous target emotion.

Fig.~\ref{fig:multimodal_ambiguity_failure} illustrates this behavior qualitatively, and Table~\ref{tab:multimodal_ambiguity} reports the corresponding quantitative trends. When only one modality is available, performance drops because the model has limited evidence for the intended affective transformation. When text or audio is noisy, Controlla still maintains relatively stable identity and graph consistency, suggesting that the factorized representation does not collapse under imperfect multimodal input. The full clean setting performs best, while the full setting with one noisy modality shows only a moderate degradation, indicating that the model can partially compensate using the remaining aligned modalities.

This behavior suggests that Controlla integrates multimodal signals through the learned attribute factor rather than relying on a single dominant cue. However, the model does not include an explicit conflict-resolution module. Therefore, when modalities provide strongly contradictory affective instructions, the output may reflect a compromise between signals or follow the modality with the strongest learned affective evidence. We view this as a realistic limitation of the current formulation: structured latent geometry improves stability under ambiguity, but explicit reasoning about modality reliability remains an important direction for future work.

\begin{figure*}[h]
\centering
\includegraphics[width=\linewidth]{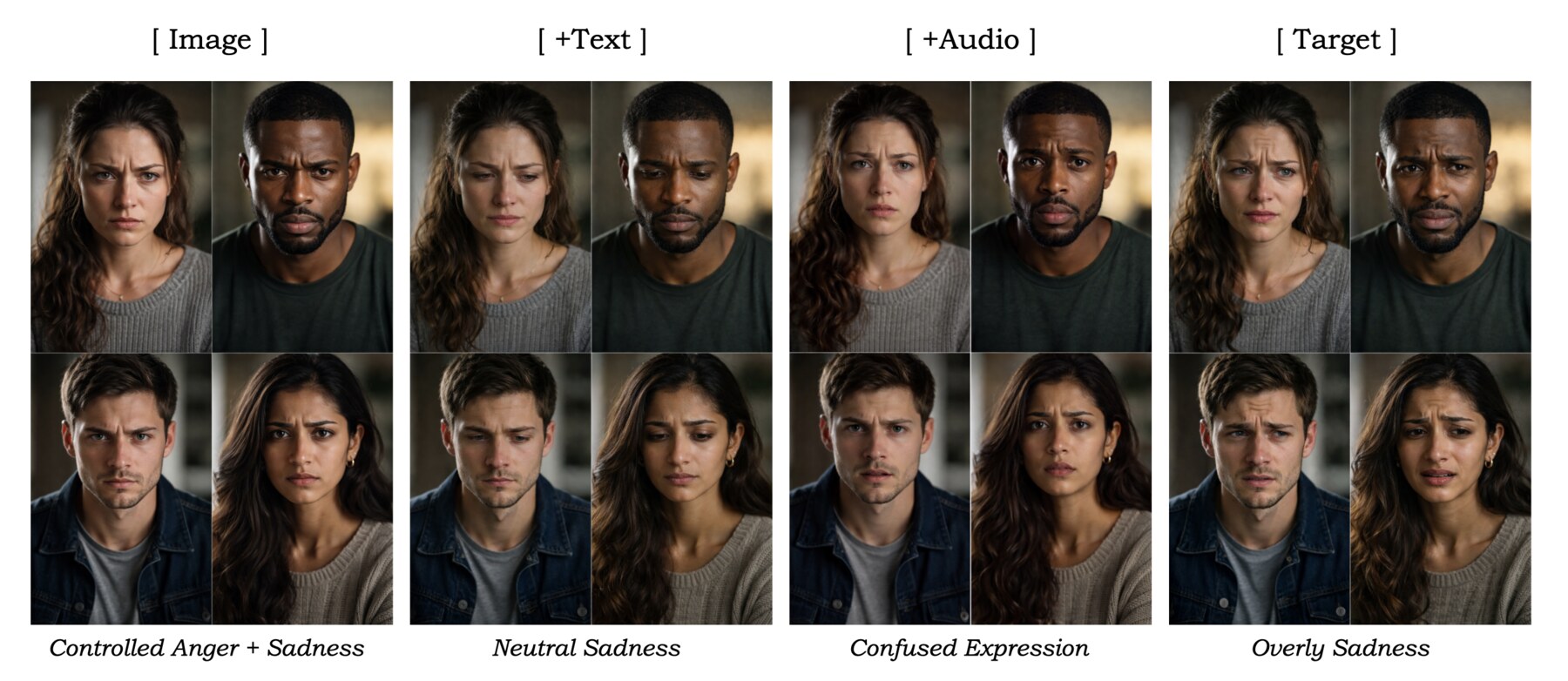}
\caption{
\textbf{Multimodal ambiguity.}
Partially conflicting image, text, and audio cues produce outputs that combine or prioritize affective evidence across modalities.
}
\label{fig:multimodal_ambiguity_failure}
\end{figure*}

\vspace{6pt}

\subsection{Computational Considerations.}

The main computational overhead in Controlla comes from structured alignment with graph-constrained optimal transport. In addition to the frozen diffusion backbone and trainable conditioning modules, Controlla computes pairwise distances among latent factors and aligns these distances with the emotion and identity graph distance matrices. This adds cost during training because GW and FGW objectives require relational comparisons and iterative transport-plan optimization.

Table~\ref{tab:computational_cost} summarizes this overhead. Relative to the baseline diffusion setup, adding graph construction increases cost modestly, while GW and FGW alignment introduce the larger overhead due to pairwise distance computation and transport optimization. In the reported setting, the full structured alignment remains practical, increasing relative cost while preserving manageable memory usage. Importantly, most of this cost is concentrated during training. At inference time, graph distances and transport structure can be cached or avoided, since generation only requires the learned factorization and conditioning adapters.

Several implementation choices help keep the method efficient. First, graph construction is performed offline or cached, so it does not need to be recomputed at every inference step. Second, mini-batch optimal transport reduces the cost of computing alignment over the full dataset. Third, entropic regularization stabilizes transport optimization and improves numerical efficiency. Finally, the base diffusion generator remains frozen, so training focuses on the projection adapters, factorization heads, graph-alignment module, and conditioning adapters rather than the entire generator.

Scaling Controlla to larger datasets, denser graphs, or higher-dimensional latent factors may require further optimization. Possible improvements include approximate nearest-neighbor graph construction, sparse transport solvers, lower-rank distance approximations, and more aggressive caching of pairwise distances. Overall, the current setup provides a practical trade-off: structured latent geometry introduces additional training cost, but this cost is moderate relative to the gains in controllability, identity preservation, and graph-consistent traversal.

\begin{table*}[t]
\centering
\caption{\textbf{Analysis of model behavior under structured regularization, multimodal ambiguity, and computational constraints.}
The results illustrate how graph regularization influences consistency and variability, how multimodal inputs are integrated under ambiguity, and the associated computational overhead.}
\label{tab:structured_regularization_analysis}

\begin{subtable}[t]{0.42\textwidth}
\centering
\caption{Effect of graph regularization strength.}
\label{tab:lambda_regularization}
\begin{tabular}{c|cccc}
\toprule
\rowcolor{gray!25}
$\lambda$ & \textbf{Acc} & \textbf{CLIP} & \textbf{LPIPS} & \textbf{GC$\downarrow$} \\
\midrule
0.1 & 72.3 & 0.298 & 0.341 & 0.181 \\
\rowcolor{gray!10}
0.3 & 75.8 & 0.312 & 0.318 & 0.149 \\
0.5 & 78.1 & 0.329 & 0.287 & 0.121 \\
\rowcolor{blue!15}
1.0 & \textbf{79.9} & \textbf{0.349} & 0.254 & \textbf{0.093} \\
2.0 & 79.6 & 0.345 & \textbf{0.241} & 0.095 \\
\bottomrule
\end{tabular}
\end{subtable}
\hfill
\begin{subtable}[t]{0.50\textwidth}
\centering
\caption{Behavior under multimodal ambiguity.}
\label{tab:multimodal_ambiguity}
\begin{tabular}{l|cccc}
\toprule
\rowcolor{gray!25}
\textbf{Setting} & \textbf{Acc} & \textbf{CLIP} & \textbf{ID} & \textbf{GC$\downarrow$} \\
\midrule
Image only & 69.1 & 0.281 & 0.862 & 0.182 \\
\rowcolor{gray!10}
Noisy Text & 73.2 & 0.296 & 0.874 & 0.149 \\
Noisy Audio & 72.4 & 0.292 & 0.871 & 0.153 \\
\rowcolor{blue!15}
Full (clean) & \textbf{79.9} & \textbf{0.349} & \textbf{0.918} & \textbf{0.093} \\
Full (1 noisy) & 78.3 & 0.334 & 0.907 & 0.107 \\
\bottomrule
\end{tabular}
\end{subtable}

\vspace{1em}

\begin{subtable}[t]{0.72\textwidth}
\centering
\caption{Computational cost of structured alignment.}
\label{tab:computational_cost}
\begin{tabular}{l|ccc}
\toprule
\rowcolor{gray!25}
\textbf{Component} & \textbf{Time (ms)} & \textbf{Memory (GB)} & \textbf{Relative Cost} \\
\midrule
Baseline Diffusion & 120 & 6.2 & $1.0\times$ \\
\rowcolor{gray!10}
+ Graph Construction & 145 & 6.8 & $1.2\times$ \\
+ GW Alignment & 198 & 7.5 & $1.6\times$ \\
\rowcolor{gray!10}
+ FGW Alignment & 224 & 7.9 & $1.8\times$ \\
\bottomrule
\end{tabular}
\end{subtable}
\end{table*}
\section{Extensibility Beyond Affective Control}
\label{sec12}

\subsection{Framework Generality}

Controlla is formulated as a graph-conditioned latent control framework, rather than as an emotion-specific architecture. 
The method requires two abstract ingredients: 
(i) an \emph{attribute graph} $\mathcal{G}_a$, which defines the semantic factor to be controlled, and 
(ii) a \emph{preserved-factor graph} $\mathcal{G}_p$, which defines the factor that should remain stable during traversal. 
Graph-constrained optimal transport then aligns the controllable latent factor with $\mathcal{G}_a$ and the preserved latent factor with $\mathcal{G}_p$.

In the main paper, $\mathcal{G}_a$ is instantiated as an affective graph over emotion categories and fine-grained valence--arousal prototypes, while $\mathcal{G}_p$ is instantiated as a reference-identity graph. 
This section evaluates whether the same formulation can be reused when the controlled attribute is changed. 
Specifically, we replace the affective graph with pose and lighting graphs, while keeping the architecture, factorization mechanism, graph-OT objective, and generator-conditioning design unchanged. 
This isolates the effect of changing the control graph and tests whether Controlla behaves as a graph-modular framework rather than an emotion-only method.

Table~\ref{tab:extensibility_mapping} summarizes how different control factors can be represented within the same abstraction. 
The experiments below focus on pose and lighting because they provide two complementary cases: pose is an ordered, discrete control factor, while lighting is a smoother visual attribute with more continuous variation.

\begin{table}[h]
\centering
\caption{
\textbf{Graph-modular control abstraction.}
Different control tasks can be instantiated by replacing the attribute graph while preserving the same factorized latent-control formulation.
}
\small
\rowcolors{2}{blue!5}{white}
\begin{tabular}{l|l|l}
\toprule
\rowcolor{blue!15}
\textbf{Control Factor} & \textbf{Attribute Graph $\mathcal{G}_a$} & \textbf{Preserved Graph $\mathcal{G}_p$} \\
\midrule
Emotion & Valence--arousal graph & Identity graph \\
Pose & Yaw-based pose graph & Identity graph \\
Lighting & Illumination direction graph & Identity / content graph \\
Style & Style embedding graph & Content graph \\
Viewpoint & Camera pose graph & Object graph \\
Scene Layout & Object relation graph & Scene graph \\
\bottomrule
\end{tabular}
\label{tab:extensibility_mapping}
\end{table}

\subsection{Plug-in Graph Evaluation}

We evaluate extensibility using a plug-in graph protocol. 
For each new control factor, only the attribute graph is replaced; the remaining model design is unchanged. 
This means that the same shared representation, identity--attribute factorization, graph-constrained OT alignment, and generator-conditioning adapters are used across emotion, pose, and lighting control. 
Therefore, differences in behavior can be attributed to the control graph and its associated supervision rather than to task-specific architectural changes.

This evaluation is intended to test controlled generalization of the formulation, not to claim that a single graph construction is optimal for every control factor. 
For each factor, the graph should encode the semantic neighborhood structure of that attribute: adjacent nodes should correspond to small semantic changes, while distant nodes should correspond to larger transformations. 
Under this assumption, graph-consistent traversal can be reused across different control dimensions.

\subsubsection{Pose Control via Plug-in Graphs}

For pose control, we replace the affective graph with an ordered yaw-state graph. 
Each node corresponds to a coarse head-pose state, and edges connect neighboring pose levels. 
The graph therefore encodes an ordinal left-to-right structure: moving from \emph{Strong Left} to \emph{Frontal} to \emph{Strong Right} should produce gradual pose changes rather than abrupt jumps.

Importantly, these pose nodes represent ordered pose categories, not calibrated 3D angles. 
Thus, the evaluation measures whether Controlla supports structured ordinal pose traversal while preserving identity, rather than whether it performs precise geometric head-pose estimation. 
This distinction keeps the claim conservative and avoids overstating the role of the pose graph.

\begin{table}[h]
\centering
\caption{
\textbf{Pose graph.}
The pose graph defines an ordinal left-to-right traversal structure over coarse yaw states.
}
\small
\rowcolors{2}{blue!4}{white}
\begin{tabular}{c|c|c}
\toprule
\rowcolor{blue!12}
\textbf{Node} & \textbf{Pose Level} & \textbf{Description} \\
\midrule
$P_1$ & Strong Left & Large leftward pose \\
$P_2$ & Left & Moderate leftward pose \\
$P_3$ & Slight Left & Small leftward pose \\
$P_4$ & Frontal & Center-facing pose \\
$P_5$ & Slight Right & Small rightward pose \\
$P_6$ & Right & Moderate rightward pose \\
$P_7$ & Strong Right & Large rightward pose \\
\bottomrule
\end{tabular}
\label{tab:pose_graph}
\end{table}

\subsubsection{Lighting Control via Plug-in Graphs}

For lighting control, we replace the affective graph with a graph over illumination direction. 
Each node corresponds to a coarse lighting condition, and edges connect semantically neighboring illumination states. 
Unlike pose, lighting is a smoother visual attribute: transitions between neighboring nodes may change shading and highlights without requiring large geometric deformation.

This setting tests whether Controlla can support graph-structured traversal over a non-affective visual factor. 
The preserved graph is defined over identity or content structure, so the intended behavior is to vary illumination while keeping identity and facial content stable.

\begin{table}[h]
\centering
\caption{
\textbf{Lighting graph.}
The lighting graph defines structured traversal over coarse illumination directions.
}
\small
\rowcolors{2}{blue!4}{white}
\begin{tabular}{c|c|c}
\toprule
\rowcolor{blue!12}
\textbf{Node} & \textbf{Lighting Condition} & \textbf{Description} \\
\midrule
$L_1$ & Left Light & Illumination from the left side \\
$L_2$ & Soft Left & Diffused left-side illumination \\
$L_3$ & Frontal Light & Approximately even frontal illumination \\
$L_4$ & Soft Right & Diffused right-side illumination \\
$L_5$ & Right Light & Illumination from the right side \\
\bottomrule
\end{tabular}
\label{tab:lighting_graph}
\end{table}

\subsection{Quantitative Results}

Table~\ref{tab:plugin_results} reports results for emotion, pose, and lighting under the same Controlla formulation. 
Emotion remains the strongest setting because it is the primary task studied in the main paper and benefits from the full AffectHuman-43K affective supervision. 
However, pose and lighting retain similar identity scores and only moderately higher GC values, indicating that the learned factorized-control structure remains stable when the attribute graph is replaced.

These results support the intended generalization claim: Controlla is not tied to the semantics of emotion labels alone. 
When a control factor can be represented as a graph over semantically ordered states, the same graph-constrained latent geometry objective can organize traversal for that factor. 
The slightly lower accuracy and higher GC for pose and lighting are expected because these tasks use simpler plug-in graphs and less specialized supervision than the primary affective setting.

\begin{table}[h]
\centering
\caption{
\textbf{Plug-in graph evaluation.}
Replacing the emotion graph with pose or lighting graphs preserves stable controllability, identity consistency, and graph-consistent traversal under the same Controlla framework.
}
\small
\rowcolors{2}{blue!4}{white}
\begin{tabular}{l|ccc}
\toprule
\rowcolor{blue!12}
\textbf{Control Factor} & Acc $\uparrow$ & ID $\uparrow$ & GC $\downarrow$ \\
\midrule
Emotion & \textbf{76.4} & \textbf{0.862} & \textbf{0.126} \\
Pose & 74.2 & 0.858 & 0.134 \\
Lighting & 73.3 & 0.856 & 0.139 \\
\bottomrule
\end{tabular}
\label{tab:plugin_results}
\end{table}

\subsection{Qualitative Analysis}

\paragraph{Pose control via plug-in graphs.}
Figure~\ref{fig:pose_extensibility} shows qualitative results using the pose graph from Table~\ref{tab:pose_graph}. 
The generated sequence follows the intended ordinal pose structure, moving gradually across left-facing, frontal, and right-facing states. 
Across the traversal, the same reference identity is largely preserved, indicating that the identity factor remains stable while the attribute factor changes. 
This supports the claim that Controlla can reuse graph-consistent traversal for non-affective attributes without modifying the architecture.

\begin{figure}[h]
\centering
\includegraphics[width=\linewidth]{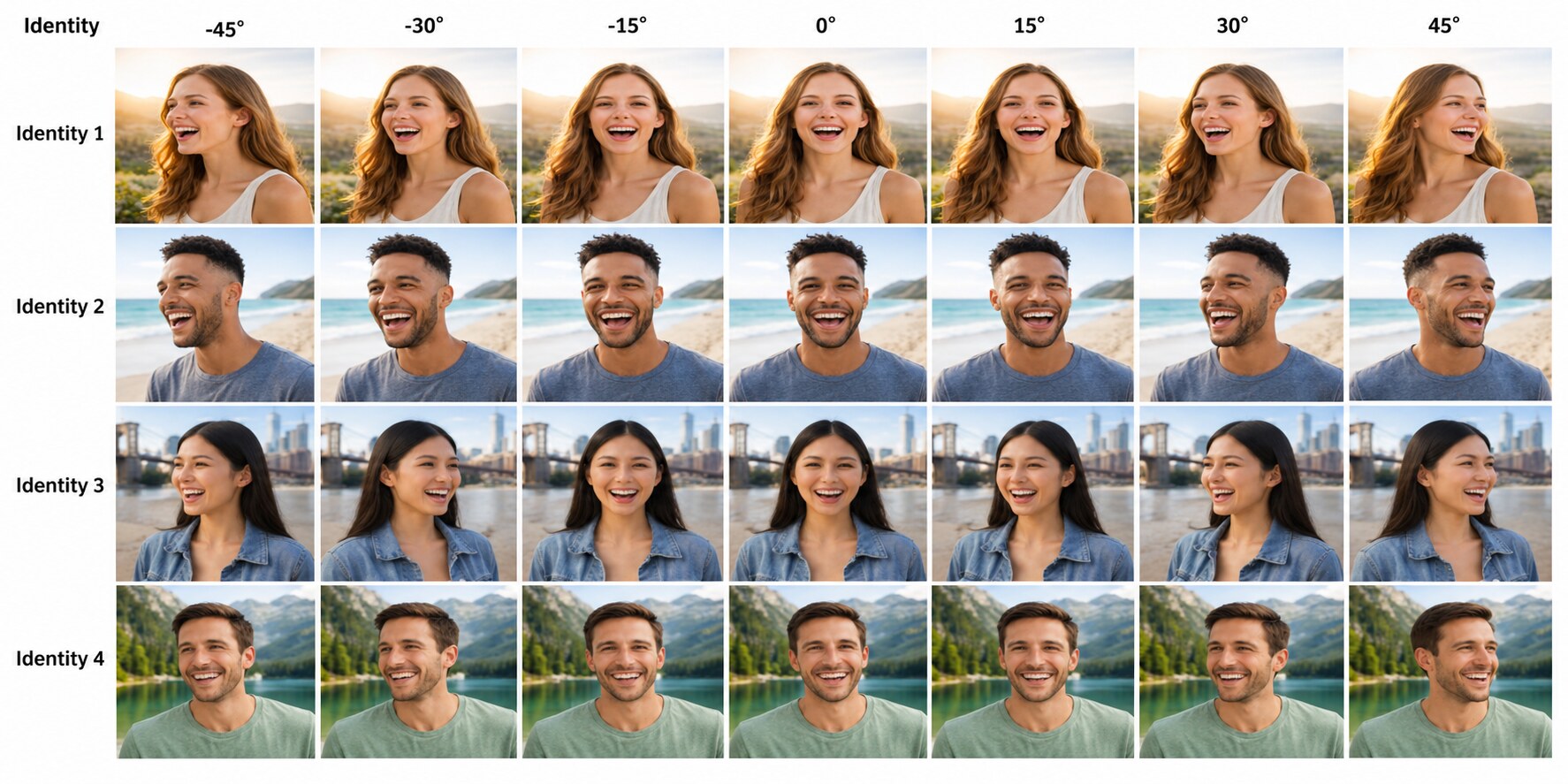}
\caption{
\textbf{Pose control.}
A plug-in pose graph enables ordered pose traversal while preserving reference identity.
}
\label{fig:pose_extensibility}
\end{figure}

\paragraph{Lighting control via plug-in graphs.}
Figure~\ref{fig:lighting_extensibility} presents qualitative results using the lighting graph from Table~\ref{tab:lighting_graph}. 
The outputs show gradual changes in illumination direction and shading while maintaining identity and facial structure. 
Compared with pose control, lighting traversal involves smaller geometric changes but clearer appearance-level variation. 
This demonstrates that the graph-latent formulation can support both discrete ordinal control factors and smoother visual attributes.

\begin{figure}[h]
\centering
\includegraphics[width=\linewidth]{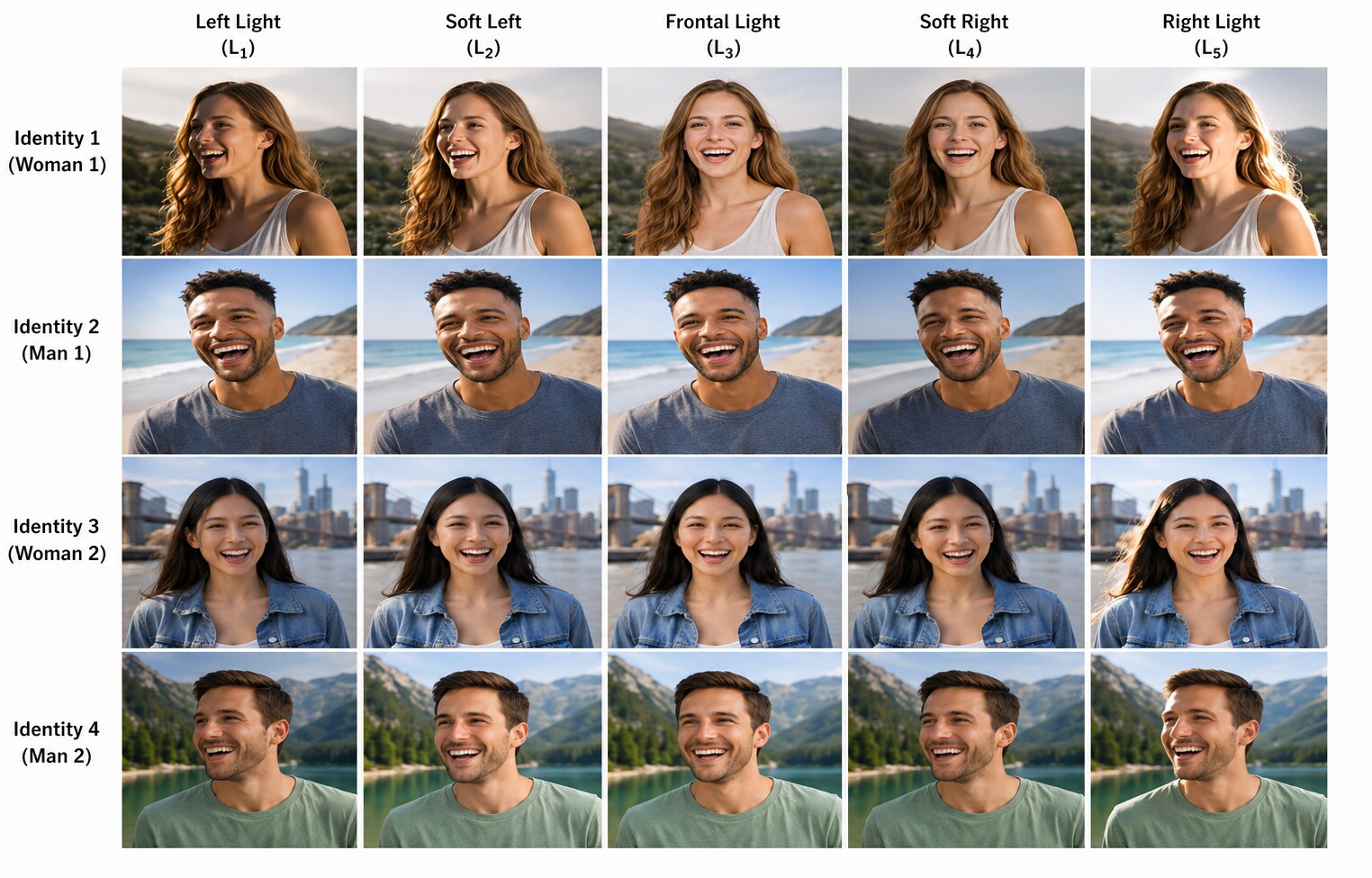}
\caption{
\textbf{Lighting control.}
A plug-in lighting graph enables structured illumination changes while preserving identity and content.
}
\label{fig:lighting_extensibility}
\end{figure}

\subsection{Scope of the Generalization Claim}

The plug-in graph results show that Controlla can be instantiated beyond affective control when the new control factor admits a meaningful graph structure. 
The claim is therefore not that a single universal graph solves all control tasks, but that the Controlla formulation provides a reusable mechanism: define an attribute graph, align the corresponding latent factor to that graph, and traverse the factor while preserving identity or content.

This scope is important for interpreting the results. 
Emotion, pose, and lighting differ in semantic type, supervision, and visual manifestation, yet all can be expressed as graph-defined control factors. 
The observed performance across these settings supports the graph-modular design of Controlla and suggests that future extensions can instantiate other controllable factors, such as style, viewpoint, object layout, or expression intensity, by defining suitable attribute and preserved-factor graphs.

\section{Robustness and Behavior Analysis}
\label{sec13}

We evaluate Controlla under controlled perturbations and challenging conditions to understand when structured latent geometry remains stable and when it begins to fail.
The goal is not to exhaustively cover all distribution shifts, but to isolate representative failure modes involving degraded visual identity cues, missing or noisy modalities, conflicting controls, and severe input mismatch.
These tests complement the main-paper results by examining whether Controlla's factorized representation preserves identity and graph-consistent traversal under imperfect inputs.

\vspace{-6pt}

\subsection{Robustness to Image Degradation}

We first evaluate robustness to common visual perturbations applied to the image/reference input, including Gaussian noise, motion blur, JPEG compression, and low-light degradation.
These perturbations primarily test the stability of the visual identity anchor and whether degraded visual evidence disrupts graph-consistent attribute traversal.

\begin{table}[h]
\centering
\caption{
\textbf{Robustness to image degradation.}
Performance decreases gradually under visual perturbations, while identity similarity remains comparatively stable.
}
\small
\rowcolors{2}{blue!5}{white}
\begin{tabular}{l|cccc}
\toprule
\rowcolor{blue!15}
\textbf{Perturbation} & Acc $\uparrow$ & TS $\uparrow$ & ID $\uparrow$ & GC $\downarrow$ \\
\midrule
Clean input & \textbf{76.4} & \textbf{0.71} & \textbf{0.862} & \textbf{0.126} \\
Gaussian noise & 74.8 & 0.69 & 0.851 & 0.137 \\
Motion blur & 74.1 & 0.68 & 0.846 & 0.142 \\
JPEG compression & 75.2 & 0.69 & 0.854 & 0.135 \\
Low light & 73.6 & 0.67 & 0.842 & 0.148 \\
\bottomrule
\end{tabular}
\label{tab:robust_image}
\end{table}

As shown in Table~\ref{tab:robust_image}, performance degrades smoothly rather than collapsing under moderate image corruption.
Low light and motion blur cause the largest drops, which is expected because they reduce the reliability of facial and identity-relevant visual cues.
However, ID remains comparatively stable across perturbations, indicating that the reference-grounded identity factor is not highly brittle under moderate visual degradation.
GC increases only moderately, suggesting that the attribute traversal remains graph-structured even when the visual input becomes noisier.
Thus, image degradation mainly affects endpoint accuracy and smoothness, while the learned factorization continues to preserve identity and traversal structure within a reasonable perturbation range.

\vspace{-6pt}

\subsection{Robustness to Multimodal Perturbations}

We next evaluate robustness when auxiliary modalities are missing, noisy, or semantically inconsistent.
This setting tests whether Controlla relies on a single modality for control or can integrate partially available multimodal evidence while keeping identity stable.

\begin{table}[h]
\centering
\caption{
\textbf{Robustness to multimodal perturbations.}
The model degrades gracefully when modalities are missing or noisy, while conflicting inputs primarily affect controllability.
}
\small
\rowcolors{2}{blue!4}{white}
\begin{tabular}{l|cccc}
\toprule
\rowcolor{blue!12}
\textbf{Conditioning} & Acc $\uparrow$ & TS $\uparrow$ & ID $\uparrow$ & GC $\downarrow$ \\
\midrule
Image + Text + Audio & \textbf{76.4} & \textbf{0.71} & 0.862 & \textbf{0.126} \\
Image + Text only & 74.2 & 0.66 & 0.860 & 0.139 \\
Image + Audio only & 71.7 & 0.59 & 0.857 & 0.154 \\
Noisy text & 73.8 & 0.65 & 0.858 & 0.145 \\
Noisy audio & 72.5 & 0.62 & 0.856 & 0.151 \\
Conflicting inputs & 70.9 & 0.58 & \textbf{0.864} & 0.166 \\
\bottomrule
\end{tabular}
\label{tab:robust_modalities}
\end{table}

Table~\ref{tab:robust_modalities} shows that removing or corrupting modalities primarily affects controllability, as reflected by reduced Acc and TS and increased GC.
Identity similarity remains relatively stable across all multimodal perturbations, which supports the intended separation between the identity factor and the attribute-control factor.
The drop from Image+Text to Image+Audio indicates that text provides a stronger and more direct affective control signal in this setting, while audio contributes complementary but noisier prosodic evidence.
The largest degradation occurs under conflicting inputs, where modalities provide inconsistent affective cues.
This suggests that Controlla can tolerate missing or noisy modalities, but does not fully resolve semantic conflicts when modalities disagree.
In such cases, the model tends to preserve identity while attribute control becomes less reliable.

\vspace{-6pt}

\subsection{Stress Tests and Failure Patterns}

Finally, we evaluate more severe stress-test conditions, including random emotion labels, mismatched audio, contradictory prompts, occluded faces, and extreme crops.
These cases are designed to expose the limits of the framework rather than represent normal operating conditions.

\begin{table}[h]
\centering
\caption{
\textbf{Stress-test evaluation.}
Performance drops under contradictory signals and severe corruption of the identity anchor, revealing limits of controllability and identity preservation.
}
\small
\rowcolors{2}{blue!4}{white}
\begin{tabular}{l|cccc}
\toprule
\rowcolor{blue!15}
\textbf{Condition} & Acc $\uparrow$ & TS $\uparrow$ & ID $\uparrow$ & GC $\downarrow$ \\
\midrule
Clean input & \textbf{76.4} & \textbf{0.71} & \textbf{0.862} & \textbf{0.126} \\
Random emotion label & 62.8 & 0.49 & 0.860 & 0.214 \\
Mismatched audio & 67.3 & 0.55 & 0.858 & 0.188 \\
Contradictory prompt & 68.1 & 0.56 & 0.857 & 0.181 \\
Occluded face & 69.4 & 0.57 & 0.831 & 0.176 \\
Extreme crop & 66.9 & 0.53 & 0.815 & 0.194 \\
\bottomrule
\end{tabular}
\label{tab:robust_stress}
\end{table}

As shown in Table~\ref{tab:robust_stress}, the strongest failures occur under two different conditions.
First, contradictory or mismatched control signals reduce Acc, TS, and GC because the attribute factor receives inconsistent semantic evidence.
Second, severe degradation of the identity anchor, such as occlusion or extreme cropping, reduces ID because the reference identity itself becomes unreliable.
This separation is important: random labels and mismatched controls mainly damage controllability, whereas occlusion and cropping directly damage identity preservation.
Overall, these stress tests show that Controlla is robust to moderate corruption and missing modalities, but remains limited when the requested control signal is semantically contradictory or when the visual identity anchor is severely corrupted.
These limitations are consistent with the framework: structured latent geometry can stabilize traversal, but it cannot fully recover information that is absent, contradictory, or visually destroyed.


\end{document}